\documentclass{article}

\usepackage[round]{natbib}
\usepackage{microtype}
\usepackage{graphicx}
\usepackage{subcaption}
\usepackage{booktabs} 

\usepackage{hyperref}

\usepackage[accepted]{icml2026}

\usepackage{amsmath}
\usepackage{amssymb}
\usepackage{mathtools}
\usepackage{amsthm}
\usepackage{amsfonts}
\usepackage{manyfoot}
\usepackage{listings}
\usepackage{subcaption}
\usepackage{afterpage}
\usepackage{multicol}
\usepackage[textsize=tiny]{todonotes}
\usepackage{algorithm}
\usepackage{algorithmic}
\usepackage[titlenumbered,linesnumbered,ruled,noend,algo2e]{algorithm2e}
\usepackage[most]{tcolorbox}
\usepackage{enumitem}
\usepackage{amsthm}
\usepackage{lipsum}
\usepackage{wrapfig}
\usepackage{bbold}
\usepackage{xcolor}

\usepackage[capitalize,noabbrev]{cleveref}

\theoremstyle{plain}
\newtheorem{theorem}{Theorem}[section]
\newtheorem{proposition}[theorem]{Proposition}
\newtheorem{lemma}[theorem]{Lemma}
\newtheorem{corollary}[theorem]{Corollary}
\theoremstyle{definition}
\newtheorem{definition}[theorem]{Definition}

\theoremstyle{remark}
\newtheorem{remark}[theorem]{Remark}

\newcommand{\algcomment}[1]{\hfill{\footnotesize\textcolor{gray}{// #1}}}

\usepackage[textsize=tiny]{todonotes}

\def\Ncal{\mathcal{N}}
\def\Dcal{\mathcal{D}}
\def\one{\mathbbm{1}}

\def\one{{\mathbf{1}}}
\def\diag{\text{diag}}
\def\tr{{\operatorname{tr}}}

\def\logdet{\operatorname{logdet}}

\def\diag{\operatorname{diag}}
\def\relu{\operatorname{ReLU}}
\def\rank{\operatorname{rank}}

\def\R{{\mathbb{R}}}

\def\lr{{\texttt{lr }}}
\def\pathcond{{\texttt{PathCond} }}
\def\lr{{\texttt{lr}}}
\def\enorm{{\texttt{ENorm} }}
\def\path{{\mathrm{p}}}
\def\pathset{{\mathrm{P}}}
\newcommand{\inh}{{\mathrm{in}_h}}
\newcommand{\outh}{{\mathrm{out}_h}}
\newcommand{\otherh}{{\mathrm{other}_h}}

\newcommand{\Ah}{\mathcal{A}}
\newcommand{\Bh}{\mathcal{B}}
\newcommand{\Ch}{\mathcal{C}}
\newcommand{\Dh}{\mathcal{D}}

\def\Dcal{{\mathcal{D}}}
\newcommand{\argmin}[1]{{\underset{#1}{\operatorname{argmin} \ }}}

\newcommand{\integ}[1]{{[\![#1]\!]}}

\newcommand\sign{\mathtt{sign}}

\newcommand{\blfootnote}[1]{\begingroup
  \renewcommand\thefootnote{}\footnote{#1}\addtocounter{footnote}{-1}\endgroup
}

\icmltitlerunning{Path-conditioned training: a principled way to rescale ReLU neural networks}

\begin{document}

\twocolumn[
  \icmltitle{Path-conditioned training:
    a principled way to rescale ReLU neural networks}

\icmlsetsymbol{equal}{*}

  \begin{icmlauthorlist}
    \icmlauthor{Arthur Lebeurrier}{ens}
    \icmlauthor{Titouan Vayer}{inria_rennes}
    \icmlauthor{Remi Gribonval}{inria_ens}
  \end{icmlauthorlist}

  \icmlaffiliation{ens}{ENS de Lyon, CNRS, Inria, Universit\'e Claude Bernard Lyon 1,
  LIP, UMR 5668, 69342, Lyon cedex 07, France}
  \icmlaffiliation{inria_ens}{Inria, CNRS, ENS de Lyon, Universit\'e Claude
  Bernard Lyon 1, LIP, UMR 5668, 69342, Lyon cedex 07, France}
  \icmlaffiliation{inria_rennes}{Inria, Rennes, France}

  \icmlcorrespondingauthor{Arthur Lebeurrier}{arthur.lebeurrier@ens-lyon.fr}

\icmlkeywords{Machine Learning, ICML, ReLU Neural networks, Rescaling symmetries, Training dynamics, Pre-conditioning, Path-lifting}

  \vskip 0.3in
]

\printAffiliationsAndNotice{}  

\begin{abstract}
Despite recent algorithmic advances, we still lack principled ways to leverage the well-documented rescaling symmetries in ReLU neural network parameters.
While two properly rescaled weights implement the same function, the training dynamics can be dramatically different.
To offer a fresh perspective on exploiting this phenomenon, we build
on the recent path-lifting framework, which provides a compact factorization of ReLU networks.
We introduce a geometrically motivated criterion to rescale neural network parameters whose minimization
leads to a conditioning strategy that aligns a kernel in the path-lifting space with a chosen reference.
We derive an efficient algorithm to perform this alignment.
In the context of random network initialization, we analyze how the architecture and the initialization scale jointly impact the output of the proposed method. Numerical experiments illustrate its potential to speed up training.
\end{abstract}

\blfootnote{Code available at \url{https://github.com/Artim436/pathcond}}

\section{Introduction}

Deep learning has become the dominant paradigm in machine learning, with neural networks achieving breakthrough performance across vision, language, and scientific domains.
Training these networks represents a central computational challenge, particularly for large language models where the cost approximately doubles every $6$ months \citep{sastry2024computing}.
Addressing this challenge requires progress on two fronts: (1) a better understanding of the optimization dynamics, and (2) translating these insights into practical improvements for state-of-the-art models.

Among all neural network architectures, ReLU networks play a singular role, 
as they have demonstrated strong empirical performance over the years on a lot of tasks 
\citep{krizhevsky2012imagenet, szegedy2015going, howard2017mobilenets,silver2016mastering,dosovitskiy2021an}.
Importantly, ReLU networks exhibit a well-documented symmetry: specific weights rescaling  
leave the network function unchanged due to the positive homogeneity of the ReLU activation.
This symmetry has many implications.  

First, it has proven useful to understand existing optimization algorithms through conservation laws 
and implicit biases \citep{du2018algorithmic, kunin2021neural, marcotte2023abide, zhao2023symmetries}. 
In short, these analyses reveal conserved quantities, arising for example from the rescaling symmetry, 
that constrain and shape the geometry of learning trajectories in both parameter and function space. 
These effects are closely related to the so-called ``rich feature regime'' in which the network learns ``meaningful'' representations 
from the data \citep{kunin2024get, domine2025from}. 
Moreover, it has been shown that the implicit bias of gradient-based optimization can be characterized through a mirror-flow 
reparameterization of the dynamics \citep{gunasekar18a, azulay21a}, which can be understood, in part, 
by leveraging the rescaling symmetry of ReLU networks \citep{marcotte2025intrinsic}.

Second, rescaling symmetry has enabled the design of novel optimization methods that exploit this invariance
to either accelerate training or reach better local minima. Notable examples include $\mathcal{G}$-SGD \citep{meng2018gsgd}, 
Path-SGD \citep{neyshabur2015path}, and Equinormalization \citep{stock2019equinormalizationneuralnetworks}, 
among others \citep{badrinarayanan2015symmetry, saul2023weightbalancing, mustafa2024dynamic}. 
Broadly, these methods can be categorized into two groups: \emph{teleportation} approaches \citep{zhao2022symmetry, armenta2023neural}, 
which perform a standard optimization step followed by a ``symmetry-aware'' correction that preserves the implemented function, 
and algorithms that are intrinsically invariant to the underlying symmetries.

However, the problem of leveraging rescaling symmetries to better understand and improve training dynamics remains open, 
as the previously mentioned schemes are often guided more 
by practical considerations than by formal theoretical principles.
In this context, the objective of this article is to leverage the concept of 
path-lifting $\Phi$ \citep{neyshabur2015path, bona2022local, stock2023embedding, gonon2023path} 
in a geometrically motivated manner to design algorithms that improve training dynamics. 
The path-lifting $\Phi(\theta)$ provides an intermediate representation between the finite-dimensional parameter space $\theta$ (weights and biases) 
and the infinite-dimensional space of network realizations $f_{\theta}$ (the functions implemented by the network).

\textbf{Contributions and outline.} After developing a geometric perspective based on path-lifting that clarifies the role of rescaling symmetries in neural network training dynamics, we propose 
a new geometry-aware rescaling criterion to ``teleport'' any given parameter $\theta$ to a 
rescaling-equivalent\footnote{so that $f_{\theta'}=f_{\theta}$} one $\theta'$ with better training properties, 
as well as an efficient algorithm to compute $\theta'$, called \pathcond.
We provide a thorough analysis of the effects of our criterion, establishing regimes of interest in terms of initialization scale and architectural choices.
We demonstrate that on specific datasets and architectures, using our (parameter-free) rescaling criterion at initialization alone can significantly improve training dynamics, and never degrades it. 
On CIFAR-10, we show that we can achieve the same accuracy as the baseline in 1.5$\times$ fewer epochs.

\textbf{Notations.} The data dimension will be denoted by $d$, the output dimension by $k$,
parameters of the neural network will be in $\R^p$ and the lifted representation in $\R^q$. 
The networks will have $H$ neurons.

\begin{figure*}[t]  
  \centering
\includegraphics[width=\linewidth]{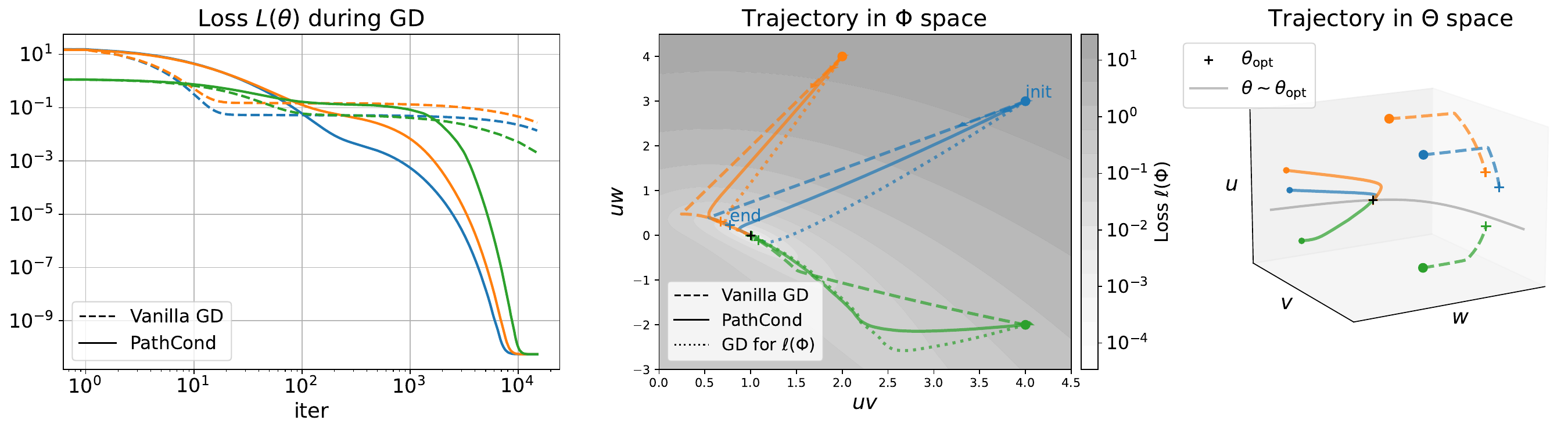}
\caption{\label{fig:toy_example} GD for a toy model $f_{\theta=(u,v,w)}(x) = u\relu(vx+w)$ on a loss $L(\theta)$ 
that can be factorized as $L(\theta) = \ell(\Phi(\theta))$ (see \Cref{sec:sketch_simple_example}).
\textbf{(Left)}  Loss $L(\theta)$ during GD iterations for three different initializations $\theta_0$ (three colors). Dashed lines correspond to GD starting at $\theta_0$, bold lines to  GD starting at rescaled $\theta_0^{(\lambda)} \sim \theta_0$ using \pathcond; 
\textbf{(Middle)} Trajectories in lifted space $\Phi(\theta) = (uv, uw)^\top$.
Dotted lines are trajectories corresponding to $\partial_t \Phi = - \nabla_\Phi \ell(\Phi)$ (GD on $\ell(\Phi)$).
\textbf{(Right)} Trajectories in parameter space.  
}
\end{figure*}

\section{Sketch of the idea on a simple example \label{sec:sketch_simple_example}}
To illustrate the overall approach considered in this paper, let us start with 
a standard problem where we have access to $n$ training data $(x_i, y_i)$ with $x_i \in \R^d, y_i \in \R^k$, and, 
given a loss we wish to find a parametrized function (typically a neural network) $f_\theta: \R^d \to \R^k$ with $\theta \in \R^p$
that minimizes $L(\theta) := \frac{1}{n} \sum_{i=1}^n \text{loss}(f_\theta(x_i), y_i)$. 
To give intuition about our method, we place ourselves in the idealized scenario 
where the optimization is carried via a gradient flow 
\begin{equation}\label{eq:gradient_flow}
\dot{\theta} = -\nabla L(\theta)\,.
\end{equation}

\textbf{From parameter space to lifted space.} Consider the simplest ReLU setting where $H=d=k=1$
and $f_\theta$ is a one-neuron ReLU network with bias $f_{\theta=(u,v,w)}(x) = u\relu(vx+w)$.
The rescaling invariance property manifests itself via the fact that, for each $\lambda>0$, the rescaled parameter $\theta^{(\lambda)} := (\frac{1}{\lambda} u, \lambda v, \lambda w)$ satisfies $f_{\theta} = f_{\theta^{(\lambda)}}$.
This function rewrites as $f_{\theta}(x) = u\one_{vx + w > 0}(vx+w)$, that is, $f_{\theta}(x) = \one_{vx + w > 0}\langle 
  \Phi(\theta)
 , \begin{pmatrix}
  x \\ 1
\end{pmatrix}\rangle$ where $\Phi(\theta) = (uv, uw)^\top$.
One important property of the vector $\Phi(\theta)$, called the path-lifting (formal definition in \Cref{sec:pathlifiting_section}), is its rescaling invariance property: $\Phi(\theta) = \Phi(\theta^{(\lambda)})$  for any $\lambda > 0$.
In other words, if we rescale the input parameters by $\lambda > 0$ and the output parameter by $1/\lambda$ 
we do not change $f_{\theta}$ and $\Phi(\theta)$. In particular for a parameter vector $\theta = (u,v,w)$ with $u > 0$,  rescaling 
with $\lambda := u$ yields that for every input $x$ we have $f_\theta(x) = f_{\theta^{(u)}}(x) = \relu(\langle 
  \Phi(\theta)
 , \begin{pmatrix}
  x \\ 1
\end{pmatrix}\rangle)$,  i.e., $f_\theta$ only depends on\footnote{In general, $f_\theta(x)= \sign(u)\relu(\sign(u)\langle 
  \Phi(\theta)
 , (x, 1)^\top\rangle)$, so $f_\theta$ only depends on $\sign(\theta), \Phi(\theta)$.} $\Phi(\theta)$.

 With $\theta_{\operatorname{opt}} = (1, 1, 0)^\top$, $f_{\theta_{\operatorname{opt}}} = \relu$, and if 
we sample $x_i \sim \Ncal(0,1), y_i = f_{\theta_{\operatorname{opt}} }(x_{i}) = \relu(x_i)$, 
and train with the square loss, the previous discussion leads to two conclusions. First, the risk $L(\theta)$ only depends on $\Phi(\theta)$ 
when $u> 0$, it can be factorized as $L(\theta)=\ell(\Phi(\theta))$.
Second, any model $f_\theta$ satisfying 
$\Phi(\theta) = \Phi(\theta_{\operatorname{opt}}) = (1, 0)^\top$ implements the same 
function $\relu$ and achieves zero training loss. 
This motivates an approach in which the learning problem is viewed not only as optimization over $\theta$ in parameter space, 
\emph{but also as optimization in the lifted space over $\Phi(\theta)$, 
where targeting $\Phi_{\operatorname{opt}} := \Phi(\theta_{\operatorname{opt}})$ 
provides a principled strategy from the perspective of minimizing the training loss}.

\textbf{Rescaling to mimic gradient descent in lifted space.} The overall approach of this paper is thus guided by a geometric view of the training dynamics {\em in a lifted representation of the network.}
This can be illustrated on our simple example by minimizing $L(\theta)$ with standard gradient descent (GD) and small learning rates 
(simulating the gradient flow \eqref{eq:gradient_flow}), and observing 
it both in parameter space and in lifted space, for different initializations (Figure \ref{fig:toy_example}). By the chain rule, the ODE \eqref{eq:gradient_flow} also
induces an ODE in the lifted space: denoting $P_\theta := \partial \Phi(\theta)  \partial \Phi(\theta)^\top$ the {\em path-kernel},
\begin{equation}
\partial_t
 \Phi(\theta) 
:= \frac{d}{dt} \Phi(\theta(t))
= - P_\theta \nabla_\Phi \ell(\Phi(\theta))\,. 
\end{equation}
The rescaling invariance of $\Phi(\theta)$ {\em does not extend to the path kernel} $P_{\theta}$: the latter varies when $\theta$ is replaced by $\theta^{(\lambda)}$.  As described later, given $\theta$, \pathcond\ aims to identify a parameter rescaling $\lambda$ such that $\theta' := \theta^{(\lambda)}$
somehow {\em conditions} the path kernel $P_{\theta'}$, i.e. best aligns it with the identity so that $P_{\theta'} \approx I$. 
This alignment aims 
to induce trajectories in lifted space close to those that would have been achieved by directly performing gradient descent / flow in the lifted space $\dot{\Phi} \approx -\nabla \ell(\Phi)$, 
without ever incurring the cost of pseudo-inverting large matrices appearing in  ``natural gradient'' approaches (see discussion in Appendix \ref{sec:GDPhiSpace}).

For our example, such rescaling factors are computed using \pathcond for three considered initializations, 
and the resulting trajectories of $(\theta(t), \Phi(\theta(t)))$ are illustrated in \Cref{fig:toy_example}. 
As shown in the left subplot, rescaled initializations (plain lines) lead to faster convergence, 
and indeed, as illustrated in the middle subplot, the trajectories 
in lifted space obtained with rescaled initializations (bold lines) more closely 
follow the trajectory of the idealized ODE $\partial_t \Phi = - \nabla_\Phi \ell(\Phi)$ (dotted lines) 
than the trajectories starting from the non-rescaled initializations (dashed lines). 

\textbf{Relation to (neural tangent) kernel conditioning.}
The general approach described in the next section builds on the vision 
that the insights drawn from the above simple example, as well as the underlying motivations,  
extend to larger-scale models. Before entering into the details, let us briefly explain 
how the resulting approach also bears connections with the conditioning of the celebrated neural tangent kernel (NTK) \citep{jacot2018neural}. 

As described in many contexts, the dynamic in \eqref{eq:gradient_flow} also induces an ODE on $f_{\theta(t)}(x)$ (at any point $x$)
driven by the NTK $K_\theta(x,x') := \partial_\theta f_\theta(x) \partial_\theta f_\theta(x)^\top \in \R^{k \times k}$ \citep{jacot2018neural}
where $\partial_\theta f_{\theta}(x) \in \R^{k \times p}$ denotes the Jacobian of $\theta \to f_{\theta}(x)$.
The NTK can be used to analyze the dynamics of neural networks in specific training regimes.
In particular, the spectrum of the NTK governs the convergence rate of the training loss \citep{bowman2023brief}; 
for instance, larger NTK eigenvalues in certain over-parameterized (lazy) regimes—corresponding
to linearized networks \citep{chizat2019lazy}—lead to faster convergence rates \citep{jacot2018neural, arora2019fine}.

As already observed by \citet{gebhart2021unified,patil2021phew}, when applying the general path-lifting framework to ReLU networks,
the NTK admits a factorization that separates the architectural contribution from a data-dependent 
term:
\begin{equation}
\label{eq:NTK_decouples}
K_{\theta}(x, x') = Z(x,\theta) \ P_\theta \ Z(x',\theta)^\top\,,
\end{equation}
where $P_\theta = \partial \Phi(\theta)  \partial \Phi(\theta)^\top$ is the previously introduced path-kernel 
and $Z(x,\theta)$ depends on the activations at every ReLU neuron.
\cref{eq:NTK_decouples} suggests that modifying the spectral properties of $P_\theta$, 
for instance through appropriate changes on $\theta$, can improve the training dynamics, 
and that such modifications can be performed solely based on the network architecture via $\Phi(\theta)$, 
i.e., \emph{independently of the data}.
For example, in the context of parameter pruning, \citet{patil2021phew} exploit this decomposition to select a subset of parameters 
that maximizes the trace of $P_\theta$ within the subnetwork.
In contrast, our approach \pathcond focuses on improving the conditioning of $P_\theta$ 
\emph{without altering the function $f_{\theta}$}, by applying an efficiently chosen rescaling based on a principled criterion.

\section{Rescaling symmetries and the path-lifting \label{sec:pathlifiting_section}} 

In this section, we introduce the tools underlying our method \pathcond, 
which builds upon the rescaling symmetries of ReLU networks and the path-lifting framework.

\textbf{Rescaling equivalent parameters.}
As advertised in \Cref{sec:sketch_simple_example}, for any neuron in a ReLU network,
we can scale its incoming weights by a factor $\lambda > 0$ and simultaneously scale its outgoing weights 
by $1/\lambda$ without changing the neuron's input-output mapping ~\citep{neyshabur2015path,dinh2017sharp}. This corresponds to multiplying the parameter vector $\theta$ by a certain \emph{admissible} diagonal matrix 
$D \in \Dcal$ with positive entries. 
Formally, denoting $\Dcal$ the multiplicative group generated 
by the set of all such diagonal matrices
(when both the chosen neuron and the factor $\lambda$ vary), 
two parameter vectors $\theta$ and $\theta'$ are said to be rescaling equivalent (denoted $\theta' \sim \theta$) if there is $D \in \Dcal$ 
such that $\theta' = D\theta$ (see details in Appendix \ref{sec:charac}). This defines an equivalence relation on the parameter space, partitioning it into equivalence classes. 
Importantly, when $\theta,\theta'$ belong to the same class we have  $f_{\theta'}=f_{\theta}$.

\textbf{Gradient flow is not rescaling invariant.} 
Crucially, gradient flow does \emph{not} respect this symmetry: rescaling-equivalent parameters $\theta \sim \theta'$ used 
at initialization of the gradient flow \eqref{eq:gradient_flow} generally lead to different trajectories. 
This lack of invariance has profound implications. For instance, unbalanced parameter initializations—obtained 
by applying extreme rescalings (with $\lambda \gg 1$ or $\lambda \ll 1$ for each neuron) to a balanced 
initialization—can lead to arbitrarily poor optimization and generalization performance~\citep{neyshabur2015path}. 
Conversely, this sensitivity can be exploited: by carefully controlling the rescaling structure, we can 
hope to guide the training dynamics toward more favorable trajectories (as illustrated in \Cref{fig:toy_example}). 
While these ideas have been exploited through several heuristics, which we will review below, 
our goal is to design a more \emph{principled} heuristic.

\textbf{Path-lifting.} The path lifting framework~\citep{neyshabur2015path,bona2022local,gonon2023path} 
factors out the parameter redundancy induced by rescaling symmetries. 
This is achieved through a lifting map $\Phi$ from $\mathbb{R}^p$ to a lifted space $\mathbb{R}^q$ 
that captures the essential geometric structure of the network while eliminating redundant degrees of freedom.
For any parameter $\theta$ and path\footnote{A sequence of edges from some neuron to an output neuron.} $\path$, the $\path$-th entry of $\Phi(\theta)$ writes $\prod_{i \in \path} \theta_i$, i.e., is the product of the weights along the path. 
Interestingly, for any output neuron $v_{\text{out}}$ of $f_\theta$, we can compactly write
\begin{equation}\label{eq:output_factorization}
 v_{\text{out}}(\theta; x) = \left\langle \Phi(\theta), A_{v_{\text{out}}}(\theta, x) \begin{pmatrix}
   x\\
   1
 \end{pmatrix} \right\rangle,
\end{equation}
where $A_{v_{\text{out}}}(\theta, x)$ is a binary-valued matrix capturing the data-dependent activation patterns (this generalizes the toy example of  \Cref{sec:sketch_simple_example}).
Two of the key properties of $\Phi$ are: a) rescaling-equivalent parameters have the same lifted representation:
\begin{equation}
  \label{eq:path_lifting_invariance}
  \Phi(\theta) = \Phi(D\theta), \quad \forall D \in \Dcal.
\end{equation}
and b) it induces a (local) factorization of the global loss\footnote{In the simple example given in \Cref{sec:sketch_simple_example}, one possible choice of $\Omega$ corresponds to the region $u > 0$.}.
\begin{equation}\label{eq:loss_factorization}
  \forall \theta \in \Omega \subset \R^p, \ L(\theta) = \ell(\Phi(\theta)),
\end{equation}
for some $\ell: \mathbb{R}^q \to \mathbb{R}$, enabling the analysis of the dynamics in the lifted space rather than the redundant parameter space~\citep{stock2023embedding,marcotte2025intrinsic}.

\section{Path-conditioned training}

Using the local factorization of the loss \eqref{eq:loss_factorization},
the starting point of our analysis is that,
by the chain rule, the variable $z(t) := \Phi(\theta(t))$ in lifted space $\R^{q}$ satisfies
\begin{align}
    \dot z(t) =  \partial \Phi(\theta(t)) \dot \theta(t)
    &= - \partial \Phi(\theta(t)) \partial \Phi(\theta(t))^\top \nabla \ell(z(t))\notag\\
    & = -P_{\theta(t)} \nabla \ell(z(t))  \label{eq:path_dyn}
\end{align}
as soon as the variable $\theta(t)$ in parameter space $\R^{p}$ satisfies \eqref{eq:gradient_flow}.
The {\em path-kernel}  $P_{\theta} = \partial \Phi(\theta)\, \partial \Phi(\theta)^\top \in \R^{q \times q}$
\citep{gebhart2021unified, marcotte2025intrinsic} is thus a (local) metric tensor that encodes how parameter updates propagate in the lifted space $\R^{q}$.
This shows that \eqref{eq:gradient_flow}, the gradient flow in parameter space $\R^{p}$, induces a preconditioned flow in the lifted space $\R^{q}$,
with preconditioner equal to $P_{\theta}$.
This can be put in perspective with natural gradient approaches \citep{amari1998natural, martens}, where the geometry is induced by the Fisher information matrix,
whereas in our setting it is shaped by the path kernel.

\subsection{Rescaling is pre-conditioning}
If we replace a parameter $\theta$ (either at initialization or during training iterations) by a rescaled version $\theta' = D\theta$, $D \in \Dcal$, then
since $z = \Phi(\theta)$ is unchanged, only $P_{\theta}$ on the right-hand-side of  \eqref{eq:path_dyn} can depend on $D$.
In other words, {\em selecting some admissible rescaling corresponds to  shaping the geometry of the optimization dynamics in the lifted space. }

However, what criterion should guide the choice of a rescaling? Several heuristic approaches exist: Equinormalization~\citep{stock2019equinormalizationneuralnetworks,saul2023weightbalancing}
explicitly chooses $D$ to minimize some $\ell_{p,q}$ norm of $\theta'=D\theta$ or
\citet{mustafa2024dynamic} select rescalings that promote more balanced weights across successive layers (for graph attention networks).

  We  show in Appendix \ref{sec:minequivalenceex} that on toy examples several natural criteria used in the literature are 
  in fact equivalent: the Euclidean norms  $\|\theta'\|_{2}$ and $\|\nabla L(\theta')\|_{2}$, 
  the condition number $\kappa(\nabla^{2} L(\theta'))$, the trace $\tr(\nabla^{2} L(\theta'))$, or the operator norm
  $\|\nabla^{2} L(\theta')\|_{2 \to 2}$. 
  However, this no longer holds for larger models, and an appropriate criterion must be selected.

\subsection{Proposed rescaling criterion}

Rather than the heuristic criteria to choose $D$ described above,  \pathcond\ relies on a criterion defined by the path kernel.
Our strategy is based on the assumption that \emph{gradient descent in the lifted space provides a suitable, 
rescaling-invariant idealized trajectory}.
Accordingly, the rescaling $D$ is chosen to mimic this behavior
by seeking to \emph{best align} $P_{D\theta} \nabla \ell(\Phi(\theta)) \in \R^{q}$ with $\nabla \ell(\Phi(\theta)) \in \R^{q}$.
A first naive approach would attempt to actually compute $\nabla \ell(\Phi(\theta))$ and optimize some correlation with  $P_{D\theta} \nabla \ell(\Phi(\theta))$.
This would however be highly impractical for practical networks as the dimension $q$
of these vectors is the number of {\em paths} in the graph associated to the network,
which grows combinatorially with the network size
(e.g., it is in the order of $q=10^{53}$ for ResNet18, details in Appendix \ref{sec:arch_stats}).

Instead, we propose to directly seek an alignment between $P_{\theta}$ and the identity matrix. As before,
a naive attempt to explicitly compute and optimize $P_{\theta} \in \R^{q \times q}$ is computationally infeasible.
However, as we show next, a carefully designed criterion enables us to entirely
bypass this bottleneck: we identify optimal rescalings—according to the proposed criterion—\emph{without ever forming or storing} $P_{\theta}$.
To this end, we must address two challenges: \emph{how to measure the ``distance'' between $P_{\theta}$ and the identity matrix $I$, and how to circumvent the computational bottleneck arising from the combinatorial size $q \times q$ of $P_{\theta}$.}

To address the first challenge, we use Bregman matrix divergences $d_\zeta(X||Y)$, induced by a strictly 
convex function $\zeta$ \citep{kulis2009low}, 
which provide a principled geometry on the space of symmetric matrices (see Appendix \ref{sec:breg_div} for a formal definition). 
In particular, we focus on \emph{spectral divergences}, 
a subclass defined via functions of the eigenvalues, 
encompassing common choices such as the Frobenius and $\logdet$ \citep{james1961estimation} divergences.

For the second question, we note that $P_{\theta}$ can be expressed as $P_{\theta} = AA^\top$, where $A = \partial \Phi(\theta)$. 
Using any divergence satisfying the property $d(AA^\top || I_q) = d(A^\top A || I_p)$ for every $q \times p$ matrix $A$,
we will be able to work with the matrix $G_\theta := A^\top A = \partial \Phi(\theta)^\top \partial \Phi(\theta)$, 
which has size $p \times p$, with $p$ the number of parameters. 
In typical neural networks, we have $p \ll q$, making $G_{\theta}$ 
significantly more manageable\footnote{With our final criterion, only the diagonal of $G_{\theta}$ is required.} than $P_{\theta}$.
As written in Appendix \ref{sec:breg_div}, 
the above property is satisfied with spectral divergences, as $AA^{\top}$ and $A^{\top}A$ share the 
same spectrum.

\textbf{Generic conditioning algorithm.} Given such a spectral-based divergence measure $d(\cdot \|\cdot)$, the above viewpoint 
combined with the teleportation-based algorithm of~\citet{zhao2022symmetry}, leads to Algorithm~\ref{alg:ideal}. Note that we add a coefficient 
$\alpha_{k}$ in the optimization step to manage the relative scale of the kernel with respect to the identity matrix.

\begin{algorithm}[t]
    \caption{Idealized Rescaling Algorithm}
    \label{alg:ideal}
    \begin{algorithmic}[1]
      \STATE \textbf{Input:} Learning rates $(\mu_k)_{k \geq 0}$, loss function $L(\cdot)$, divergence $d(\cdot || \cdot)$.
    \STATE Iterate following or only teleport initialization.
      \FOR{$k = 0, 1, 2, \ldots$}
          \STATE $(\alpha_{k},D_{k}) \! = \!\! \argmin{\alpha>0, D \in \Dcal}
d(\alpha G_{D\theta_{k}}|| I_{p})
          $ \algcomment{see \Cref{alg:rescaling_algorithm}}
          \STATE Set $\theta_{k+\frac{1}{2}} = D_k \theta_k$ \hfill \algcomment{Teleportation step}
          \STATE $\theta_{k+1} = \theta_{k+\frac{1}{2}} - \mu_k \nabla L(\theta_{k+\frac{1}{2}})$ \hfill \algcomment{Gradient step}
      \ENDFOR
    \end{algorithmic}
\end{algorithm}

\subsection{Explicit algorithm with the $\logdet$ divergence} \label{subsec:algo}

The final step is to select an appropriate divergence. We choose the Bregman divergence induced by $\zeta(X) = -\log\det(X)$, 
defined on positive semi-definite matrices. In light of this, we focus on the optimization problem
\begin{equation}
\label{eq:div_logdet}
\min_{D \in \mathcal{G},\;\alpha > 0}
    d_{-\logdet}\!\left(\alpha\, G_{D\theta}\,\middle\|\, I_p\right)\,.
\end{equation}
Interestingly, $d_{-\logdet}(X||I)$ provides an upper bound on the condition number 
of $X$ \citep{dhillon2008log, bock2025connectingkaporinsconditionnumber}, further supporting the interpretation of our approach as selecting a rescaling that improves the condition number of the path kernel.

To handle potential issues arising from zero eigenvalues, we consider in Appendix \ref{sec:pathcondopt} a slight variation that employs the generalized determinant in place of the standard $\det$. Furthermore, as shown in Appendix \ref{sec:pathcondopt},
if $G := G_{\theta}$ is positive definite, that is when $\partial \Phi$ is full rank, 
then $G_{D\theta}$ remains positive definite for every $D \in \Dcal$, 
and the resulting optimization problem coincides with the formulation using the standard $\log\det$.  
The latter can be rewritten as (Lemma \ref{lemma:rewriting})
\begin{equation}
\label{eq:optim_with_bz}
\min_{u \in \R^{H}} \ F(u) := p\log(\sum_{i=1}^{p} e^{(Bu)_i} G_{ii}) - \sum_{i=1}^{p} (Bu)_i\,,
\end{equation}
where $H$ is the number of hidden neurons, and $B \in \R^{p \times H}$ is a matrix with entries in $\{1,-1,0\}$, indicating whether a parameter enters ($-1$), 
leaves ($1$), or is unrelated to ($0$) a given neuron.
Each column of $B$ corresponding to a neuron $h \in H$ takes the form
$b_h = (1_{\texttt{out}_h},\, -1_{\texttt{in}_h},\, 0_{\texttt{other}_h})^\top \in \R^{p}$
(up to a permutation of coordinates), where $\texttt{out}_h$, $\texttt{in}_h$, and $\texttt{other}_h$
are defined formally in \cref{def:neur_resc}.
The rescaling of $h$ is given by $\lambda_{h}= e^{u_{h}}$ and 
$D = e^{Bu}$ (the exponential is taken entrywise) describes the overall rescaling on all parameters. 
As shown in Lemma \ref{lemma:existence_solution} in appendix, $F$ is convex
and \eqref{eq:optim_with_bz} admits a solution as soon as $g := \diag(G) > 0$. Although $G$ is generally only positive \emph{semi}-definite ($g \geq 0$),
we nonetheless adopt~\eqref{eq:optim_with_bz} as the basis for our concrete criterion.

Indeed, the reformulation~\eqref{eq:optim_with_bz} presents several practical advantages. 
First, the objective $F$ is defined in dimension $H$, corresponding to the number of neurons, 
which is typically much smaller than the number of parameters $p$. Moreover, 
$F$ depends only on $g=\diag(G)$ and therefore does not require the explicit computation of the full $p \times p$ matrix 
$G_{\theta}$. The vector $g$ can be itself computed efficiently using automatic differentiation 
using a single backward pass on the network (see Appendix \ref{sec:G_matrix}). 
Finally, \eqref{eq:optim_with_bz} 
can be solved efficiently via alternating minimization as described below.

\begin{lemma}
\label{lemma:calculate_degree_pol}
If $g > 0$, for any neuron $h$ the problem $\min_{u_h \in \R} \ F(u_1, \cdots, u_h, \cdots, u_H)$ has a solution given 
by $\log(r_h)$ where $r_h$ is 
the unique positive root of the polynomial 
$\Bh(\Ah + p) X^2 + \Ah \Dh X + \Ch(\Ah-p)$ where, with $\rho_{i,h} := e^{(Bu)_i- B_{ih} u_{h}
}$, we consider
\begin{equation*}
\begin{split}
\Ah &:= |\{i \in \inh\}| - |\{i \in \outh\}|, \ \Bh := \sum_{i \in \outh} \rho_{i,h} g_i \\
\Ch &:= \sum_{i \in \inh} \rho_{i,h} g_i, \Dh := \sum_{i \in \otherh} \rho_{i,h} g_i\,.
\end{split}
\end{equation*}
\end{lemma}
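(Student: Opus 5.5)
The plan is to reduce the minimization in $u_h$ to a one-dimensional problem in the variable $X := e^{u_h}>0$, compute its stationarity condition explicitly, and check that this condition is exactly the quadratic in the statement. After that, existence and uniqueness follow from convexity and coercivity.

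\textbf{Step 1 (isolating $u_h$).} Fix all coordinates except $u_h$. Since $(Bu)_i = B_{ih}u_h + \big((Bu)_i - B_{ih}u_h\big)$ and $B_{ih}\in\{1,-1,0\}$ according to whether $i\in\outh$, $i\in\inh$ or $i\in\otherh$, we get $e^{(Bu)_i} = \rho_{i,h}X^{B_{ih}}$. Hence
\[
\sum_{i=1}^p e^{(Bu)_i} g_i = \Bh X + \Ch X^{-1} + \Dh .
\]
For the linear part, $-\sum_i (Bu)_i = -\big(|\outh| - |\inh|\big)u_h + \mathrm{const} = \Ah u_h + \mathrm{const}$. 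So, up to an additive constant, the objective becomes
\[
f(t) = p\log\!\big(\Bh e^{t} + \Ch e^{-t} + \Dh\big) + \Ah t, \qquad t = u_h .
\]

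\textbf{Step 2 (existence via convexity and coercivity).} The sets $\inh$ and $\outh$ are nonempty and disjoint subsets of the $p$ coordinates. Hence $|\Ah| < p$, i.e.\ $\Ah + p > 0$ and $\Ah - p < 0$. Because $g>0$, we also have $\Bh,\Ch>0$ and $\Dh\ge 0$.

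The function $f$ is convex, being a log-sum-exp of affine functions of $t$ plus a linear term; this is also the restriction of the convex $F$ from Lemma~\ref{lemma:existence_solution}. For coercivity:
\begin{itemize}
\item as $t\to+\infty$, $f(t) = (p+\Ah)t + O(1) \to +\infty$;
\item as $t\to-\infty$, $f(t) = (\Ah - p)t + O(1) \to +\infty$.
\end{itemize}
So $f$ attains its minimum, and the minimizers are exactly the zeros of $f'$.

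\textbf{Step 3 (stationarity gives the quadratic).} Differentiating,
\[
f'(t) = p\,\frac{\Bh X - \Ch X^{-1}}{\Bh X + \Ch X^{-1} + \Dh} + \Ah .
\]
Multiplying $f'(t)=0$ by the positive quantity $X(\Bh X + \Ch X^{-1} + \Dh) = \Bh X^2 + \Dh X + \Ch$ gives
\[
p(\Bh X^2 - \Ch) + \Ah(\Bh X^2 + \Dh X + \Ch) = 0,
\]
that is, $\Bh(\Ah+p)X^2 + \Ah\Dh X + \Ch(\Ah - p) = 0$.

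\textbf{Step 4 (unique positive root).} The leading coefficient $\Bh(\Ah+p)$ is positive and the constant coefficient $\Ch(\Ah-p)$ is negative. Hence the product of the two roots is negative, so the polynomial has exactly one positive root $r_h$. Since $t\mapsto e^t$ is a bijection onto $(0,\infty)$, $\log r_h$ is the unique stationary point of $f$, and therefore it is the minimizer.

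\textbf{Main obstacle.} The only delicate point is the sign bookkeeping in Step 2. It rests on $\inh$ and $\outh$ being nonempty, which needs \cref{def:neur_resc}: a hidden neuron has at least one incoming and one outgoing parameter, so $\Bh,\Ch>0$ and $|\Ah|<p$. If this were not guaranteed, coercivity would fail on one side and the quadratic could degenerate. Everything else is routine computation.
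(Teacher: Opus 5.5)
Your proposal is correct and follows essentially the paper's proof. You isolate $u_h$ through $\rho_{i,h}$, set the $h$-th partial derivative to zero, substitute $X=e^{u_h}$ to obtain the quadratic, and use $\Bh(\Ah+p)>0>\Ch(\Ah-p)$ to get exactly one positive root. The only minor difference is that you prove existence of the one-dimensional minimizer directly from the asymptotic slopes $\Ah\pm p$, whereas the paper cites the global coercivity of $F$ (Lemmas~\ref{lemma:coercive_property} and~\ref{lemma:notinB}); your version is slightly more self-contained.
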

We provide a detailed proof of this lemma in Appendix \ref{sec:pathcondopt}. 
Based on this result, we use alternating minimization with closed-form coordinate updates, 
yielding Algorithm \ref{alg:rescaling_algorithm}.
\begin{algorithm}[t]
    \caption{\pathcond: Path-conditioned rescaling}
    \label{alg:rescaling_algorithm}
    \SetKwInOut{Input}{input}
    \SetKwInOut{Output}{output}
    \setcounter{AlgoLine}{0}
    \Input{DAG ReLU network, $\theta$ to rescale} $u^{(0)} = 0 \in \R^{H}$
   
    \For{$k = 0, \ldots, n_\text{iter}$}
    {
    	\For{\text{ each neuron} $h$}
    	{
    	Given $u^{(k)}$ compute $\Ah, \Bh, \Ch, \Dh$ defined in Lemma \ref{lemma:calculate_degree_pol}.

    	Solve $r_h = \texttt{find\_positive\_root}(\Bh(\Ah+p) X^2 + \Ah \Dh X + \Ch(\Ah-p)) > 0$.
    	
        $u_h^{(k+1)} = \log(r_h)$
    	}
    }
    \Output{Rescaled $D \theta$ where $D = \diag(e^{Bu^{(n_\text{iter})}})$}
\end{algorithm}

\paragraph{Computational complexity.} Algorithm~\ref{alg:rescaling_algorithm} 
has time complexity $O(n_\text{iter} p)$  
and space complexity $O(p + H)$. In practice, $n_\text{iter} \ll p$ with typical convergence in fewer than $10$ iterations (ablation study in \cref{sec:bcd_nb}). Indeed, 
the computation of $\diag(G)$ requires one backward pass at cost $O(p)$.
We never compute $Bu$ explicitly: at the first iteration $Bu = u = 0$, and thereafter we incrementally update $Bu$ by adding the coefficient difference at cost $|\mathrm{out}_h| + |\mathrm{in}_h|$ per iteration.
Each iteration cycles through $H$ neurons, where updating neuron $h$ costs $O(|\mathrm{out}_h| + |\mathrm{in}_h|)$. 
Since $\sum_{h=1}^{H} (|\mathrm{out}_h| + |\mathrm{in}_h|) \approx 2p$, each iteration has complexity $O(p)$. 
The algorithm stores dual variables $Bu \in \mathbb{R}^p$, primal variables $u \in \mathbb{R}^H$, and a sparse matrix $B$ with $\approx 2p$ nonzero coefficients (all $\pm 1$)
and index tensors for edge sets, yielding space complexity $O(p + H)$.
In \cref{sec:timing}, we further show that \pathcond runs in at most the time of a single training epoch across several architectures on CIFAR-10.
Full implementation details, pseudocode, and timing measurements are provided in \cref{sec:appendix_algo}.

\begin{figure*}[t]
  \centering
  \includegraphics[width=\linewidth]{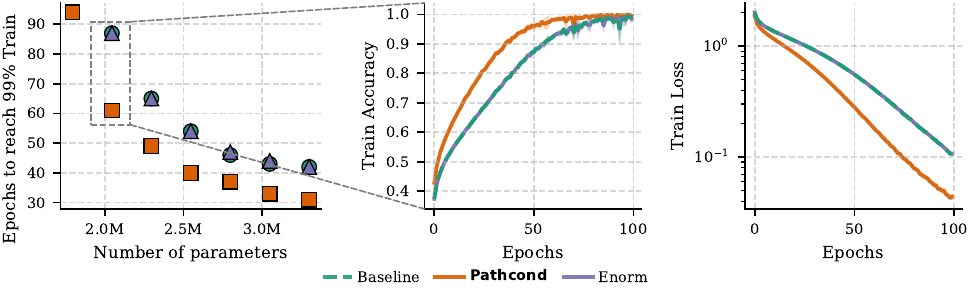}
  \caption{\label{fig:cifar10_fc}
  PathCond performance comparison across network depths on CIFAR-10 with multilayer perceptrons
(\textbf{Left}) Number of epochs required to reach $99\%$ training accuracy for networks with $2$ to $8$ hidden layers 
  (abscissa = number of parameters, which increases with depth).
  (\textbf{Middle}) Training accuracy curves for the $3$-hidden-layer network. 
  (\textbf{Right}) Corresponding training loss curves}
\end{figure*}

\section{Experiments}
\label{sec:experiments}

In this section, we demonstrate that rescaling \emph{only at initialization} with \pathcond improves
training dynamics by accelerating training loss convergence.
In a nutshell \pathcond enables us to reach the same accuracy with 
up to 1.5 times fewer epochs
without compromising generalization (Figure \ref{fig:cifar10_conv}).
We also compare our approach with the baseline GD with standard initialization
without any rescaling, and the Equinormalization heuristic \citep{stock2019equinormalizationneuralnetworks}
(\enorm), which performs rescaling at every iteration to minimize a weighted
$\ell_{2,2}$ norm of the layer weights.
Unlike the \enorm criterion and its variants \citep{saul2023weightbalancing}, which require tuning of hyper parameters (the choice of p,q and of the weights), the \pathcond criterion has no hyperparameter.
Full details of the \enorm configuration used in our
experiments are provided in \cref{sec:appendix_enorm}.
Finally, we characterize theoretically the favorable regimes for \pathcond 
and validate them on MNIST autoencoders (Figure \ref{fig:mnist_ae}).

\subsection{Faster Training Dynamics}
\label{sec:faster}

\textbf{Training setup.}
Following the experimental protocol of \citet{stock2019equinormalizationneuralnetworks}, we evaluate our method 
on fully connected networks for CIFAR-10 classification.
The models take flattened $3072$-dimensional images as input. 
The architecture consists of an initial linear layer ($3072 \times 500$), 
followed by $L-1$ hidden layers of size $500 \times 500$, 
and a final classification layer ($500 \times 10$), with BatchNorm applied after each hidden linear layer.
We vary the depth $L$ from $3$ to $9$ to analyze depth-dependent effects.
All weights are initialized using Kaiming uniform initialization \citep{he2015delving}. 
We do $100$ epochs of SGD, with a batch size of $128$ 
and a fixed learning rate (for all architectures) of $10^{-3}$. 
Results are averaged over $3$ independent runs.

\textbf{Results.}
The results are shown in Figure \ref{fig:cifar10_fc}, where we report the number of parameters required to reach $99\%$ 
training accuracy (left plot).
Overall, \pathcond reaches target accuracy up to 1.5$\times$ faster 
at the same learning rate, and as a side effect also improves parameter efficiency: 
a 2.5M parameter model matches the performance of 3.25M parameter baseline models. 
The improvement magnitude varies with the depth.
The 3-hidden-layer network (2M parameters) exhibits the strongest gains: training accuracy converges faster 
with \pathcond initialization and the training loss decreases more rapidly toward zero.
Deeper networks show more modest but consistent advantages. 
Notably, \enorm does not yield improvements in this experiment, whereas only \pathcond achieves speedups.

Additional learning rates are reported in Appendix \ref{sec:add_faster}.
\pathcond matches or exceeds baseline performance across small to moderate learning rates, 
with the greatest improvements observed at small values where initialization most critically influences training dynamics. 
As the learning rate increases, the effect becomes less pronounced, yet remains comparable to the baseline and \enorm.

\begin{figure*}[t]
\centering
\includegraphics[width=\linewidth]{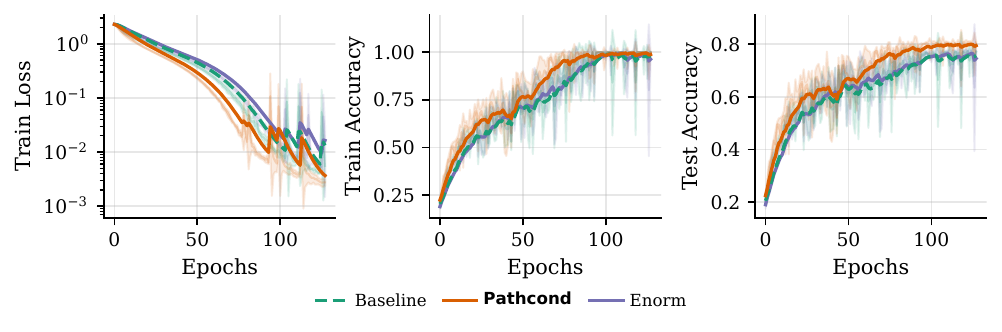}
\caption{Training dynamics on CIFAR-10 with a fully convolutional architecture (CIFAR-NV). 
(\textbf{Left}) Training loss, (\textbf{Middle}) training accuracy, and (\textbf{Right}) 
test accuracy. Curves are smoothed using an exponential moving average with factor $\alpha=0.2$.}
\label{fig:cifar10_conv}
\end{figure*}

\subsection{Generalization}
\label{sec:generalize}
While our method is designed to improve training loss convergence, it does not explicitly target generalization. 
In this experiment, we show that \pathcond accelerates training without degrading generalization performance.

\textbf{Training Setup.}
We evaluate our method on the CIFAR-NV fully convolutional architecture \citep{gitman2017comparison}.
The architecture processes CIFAR-10 images with channel-wise 
normalization.
The original training set is split into 40,000 images for training and 10,000 for validation. 
Training is performed for 128 epochs using SGD with a learning rate of 0.001.
Weights are initialized using Kaiming's initialization scheme.
We employ a BatchNorm layer after the final fully connected layer for all methods (Baseline, \enorm, and \pathcond).
Additional learning rates are reported in Appendix \ref{sec:add_generalize}.

\textbf{Results.}
Figure \ref{fig:cifar10_conv} shows the training dynamics with the fully 
convolutional architecture.
\pathcond achieves faster convergence in both training loss (left panel) and training accuracy (middle panel), reaching near-optimal performance 
approximately 20 epochs earlier than baseline methods.
Critically, this acceleration translates to improved test accuracy (right panel): \pathcond reaches 80\% best 
test accuracy while Baseline and \enorm plateau at 77\%, demonstrating that the improved training dynamics 
could also enhance rather than compromise generalization.

\begin{figure}[t]
    \centering
    \includegraphics[width=0.87\columnwidth]{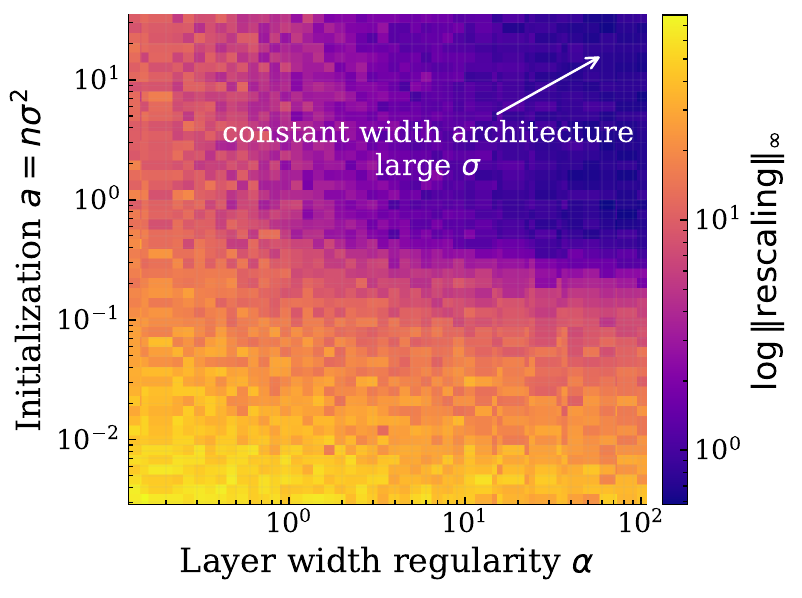}
    \caption{Analysis of the relationship between layer width regularity (controlled by the ratio $\alpha$) 
    and log-rescaling magnitude for small and large variance regimes. 
}
    \label{fig:regime_analysis}
\end{figure}
 
\begin{figure}[t]
  \centering
  \includegraphics[width=0.85\columnwidth]{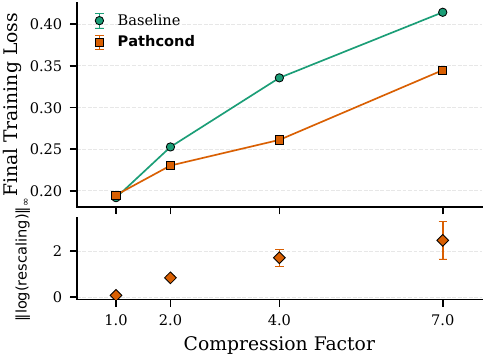}
  \caption{\label{fig:mnist_ae}
  Effect of compression on MNIST autoencoder training.
  (\textbf{Top}) Final training loss for different compression factors.
  (\textbf{Bottom}) Maximum absolute value of the log rescaling at initialization. 
}
\end{figure}

\subsection{Computational Cost}
\label{sec:timing}
 
A key advantage of \pathcond is its computational efficiency. As detailed in Section~\ref{subsec:algo}, 
\pathcond does not require computing any full Jacobian or matrix decomposition. Using the logdet 
divergence, we build a criterion that only depends on the diagonal elements of the matrix 
$G = \partial\Phi^\top \partial\Phi$, which costs $O(\text{\# params})$ to compute and can be 
calculated with a single backward pass. Our algorithm to solve the resulting optimization problem 
using block coordinate descent is also efficient.

\textbf{Experimental protocol.}
We measure wall-clock times on an NVIDIA RTX A4000 GPU across the architectures used throughout 
the paper: the MLPs with BatchNorm described in Section~\ref{sec:faster} (depth $L = 4$ with 
hidden width 500; additional depths in Appendix~\ref{sec:app_timing}), the fully convolutional 
CIFAR-NV network from Section~\ref{sec:generalize}, and ResNet-18 of type C \citep{he2016deep}, 
in which all shortcuts are learned $1 \times 1$ convolutions. For each architecture, we compare: 
(1) the time to perform one epoch of SGD training on CIFAR-10 (batch size 128), and (2) the time 
to compute the full PathCond rescaling at initialization. The rescaling consists of computing the 
diagonal of the matrix $G$ followed by 10 iterations of block coordinate descent to solve the 
optimization problem. Training and rescaling times are averaged over 10 runs (after a warm-up 
phase to account for JIT compilation). We report means and standard deviations.

\begin{figure}[t]
\centering
\includegraphics[width=0.8\columnwidth]{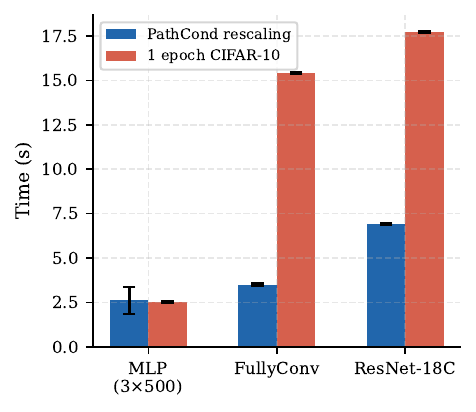}
\caption{Wall-clock time comparison on an NVIDIA RTX A4000. PathCond rescaling time versus 
one epoch of CIFAR-10 training across different architectures.}
\label{fig:timing}
\end{figure}

\textbf{Results.} 
As shown in Figure~\ref{fig:timing}, the rescaling cost is at most comparable to a single training epoch across all architectures tested, ranging from 22\% (CIFAR-NV) to 103\% (MLP with 3 hidden layers) of one epoch time. Since, in practice, rescaling is performed \emph{once} at initialization, the overhead over a full training run (typically 100+ epochs) is negligible.
Further optimizations appear feasible by parallelizing block coordinate descent along even/odd layers, which we leave for future work.

\subsection{What are the Favorable Regimes for \pathcond?}
\label{sec:regimes}

As observed in Figure~\ref{fig:cifar10_fc}, network architecture influences the effectiveness of our algorithm. 
This raises a natural question: are there architectures or initializations for which $G$ 
is already close to the identity (according to the Bregman divergence) without rescaling $\theta$ ? 
Understanding such cases helps identify when \pathcond provides 
the greatest benefits.
A simple example is when the diagonal of $G$ is constant, i.e., where $\exists \alpha > 0$ such that $g = \alpha \mathbf{1}$. 
In this case, the objective function \eqref{eq:optim_with_bz} admits $u = 0$ as its unique minimizer.
Consequently, the rescaling matrix is the identity and the algorithm has no effect (proof in Appendix~\ref{sec:regimes_proofs}).

\textbf{Identifying favorable regimes.} Our goal is to identify regimes in which the diagonal exhibits a \emph{wide spread}, 
i.e., deviates significantly from being constant, 
by analyzing how $g$ depends on the network architecture and initialization.
We hypothesize—and verify empirically—that significant spread in the diagonal correlates with 
the effectiveness of our rescaling algorithm.

For layered fully-connected ReLU networks (LFCNs) without biases\footnote{biases are dealt with in \Cref{app:LFCN}}, the expected diagonal coefficients at initialization can be computed explicitly 
from the network widths $n_0, \ldots, n_L$ and the variance of the initialization of each layer $\sigma_k^2$. 
Standard initializations such as Kaiming \citep{he2015delving} have the property that 
$n_k \sigma_k^2$ is constant across all layers. We denote this value by $a := n_k \sigma_k^2$. We have:
\begin{proposition}[Expected diagonal under standard initialization]
\label{prop:expected-standard-init-main}
For any edge parameter $i$ at layer $k \in \{0, \ldots, L-1\}$, the expected 
$i$th coefficient of $\diag(G)$ is given by
\begin{equation}
\label{eq:expected-kaiming-main}
\mathbb{E}[G_{ii}] = \frac{n_L}{n_{k+1}} a^{L-1} + \frac{n_L}{n_{k+1}} \sum_{j=0}^{k-1} \frac{a^{L-1-j}}{n_j}.
\end{equation}
\end{proposition}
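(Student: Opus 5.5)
The plan is to write $G_{ii}$ as a sum over paths, take the expectation term by term using independence of the weights, and group the paths by the layer in which they start. The first step is the identity
\[
G_{ii}=\|\partial_{\theta_i}\Phi(\theta)\|_2^2=\sum_{\path \ni i}\ \prod_{j\in\path,\, j\neq i}\theta_j^2 ,
\]
which holds because the $\path$-th coordinate of $\Phi(\theta)$ is $\prod_{j\in\path}\theta_j$. Each factor appears at most once along a path, so this sum is multilinear in the squared weights.

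The weights are independent and centered, with $\mathbb{E}[\theta_j^2]=\sigma_m^2$ for an edge $j$ of layer $m$ (mapping layer $m$ to layer $m+1$). Hence the expectation of each product is the product of the variances of the edges other than $i$. Fix the edge $i$ from a neuron of layer $k$ to a neuron of layer $k+1$. A path through $i$ factors as a backward segment ending at the source of $i$, then $i$ itself, then a forward segment from the target of $i$ to an output neuron. The expectation therefore factorizes as (backward sum) $\times$ (forward sum).

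For the forward factor, there are $n_{k+2}\cdots n_L$ forward segments, each carrying one edge of each layer $k+1,\dots,L-1$. Its value is
\[
\prod_{m=k+1}^{L-1}\sigma_m^2\, n_{m+1}=\prod_{m=k+1}^{L-1} a\,\frac{n_{m+1}}{n_m}=a^{L-1-k}\,\frac{n_L}{n_{k+1}},
\]
using $a=n_m\sigma_m^2$ and telescoping. For the backward factor I split according to where the path starts.

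\emph{Paths starting at an input neuron.} These give $n_0\cdots n_{k-1}$ segments with weight $\prod_{m<k}\sigma_m^2$, i.e.\ $a^k$. Multiplied by the forward factor, this produces the leading term $\frac{n_L}{n_{k+1}}a^{L-1}$.

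\emph{Paths starting at a hidden neuron of layer $j+1\le k$.} I will compute these contributions in exactly the same way, as a product of a count and variances. The same telescoping should turn each contribution into $\frac{n_L}{n_{k+1}}\,a^{L-1-j}/n_j$, where $a^{L-1-j}/n_j = a^{L-2-j}\sigma_j^2$. Summing over $j=0,\dots,k-1$ then gives the second term of \eqref{eq:expected-kaiming-main}.

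The main obstacle I expect is this second family. I need to pin down precisely which paths the path-lifting of \Cref{sec:pathlifiting_section} includes for hidden starting neurons in the bias-free LFCN setting, and what their first factor is. Only with the convention fixed can I check that each such path carries exactly one extra variance factor $\sigma_j^2$ relative to a naive count from layer $j+1$. I would first write out the definition of $\Phi$ from \Cref{app:LFCN} for a two-hidden-layer network and verify the case $k=1$ by hand. Once the indexing matches there, the general case is the telescoping computation above.
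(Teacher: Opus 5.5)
Your bookkeeping is the same path-counting argument the paper uses: expand $G_{ii}$ over the paths through $i$, use independence to turn each squared product into a product of second moments, split into a backward and a forward factor, and telescope with $a=n_m\sigma_m^2$. Your forward factor $a^{L-1-k}n_L/n_{k+1}$ and your input-started term $\frac{n_L}{n_{k+1}}a^{L-1}$ are correct.

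The second term, however, is not derived. You only state the value it should take, and checking a path-indexing convention on a small example will not produce the missing factor $\sigma_j^2$. By the definition of the path-lifting, a path starting at a hidden neuron $v_0$ has coordinate $b_{v_0}\prod_{\ell}\theta_{v_{\ell-1}\to v_\ell}$. In a genuinely bias-free network these coordinates are therefore identically zero, so $\mathbb{E}[G_{ii}]=\frac{n_L}{n_{k+1}}a^{L-1}$ and the displayed formula fails for every $k\geq 1$. The sum over $j$ is exactly the contribution of the bias-started paths. Despite the ``without biases'' wording next to the statement, the paper's appendix derivation counts paths from input neurons \emph{and} from bias parameters.

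That sum holds only under an assumption you must state explicitly. The biases must be independent of the other parameters. The bias of a neuron of layer $j+1$ (i.e., of the layer-$j$ map) must satisfy $\mathbb{E}[b^2]=\sigma_j^2$, the same second moment as the layer-$j$ weights. With a different bias second moment $\beta_j^2$ the sum would instead read $\frac{n_L}{n_{k+1}}\sum_{j}\beta_j^2a^{L-2-j}$.

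Once this is in place, your computation closes as you anticipated. For a starting neuron in layer $j+1\le k$, the free neurons lie in layers $j+1,\dots,k-1$. When $j=k-1$ there are none, since the path then starts at the source of $i$ itself. The backward factor is therefore $\sigma_j^2\prod_{m=j+1}^{k-1}n_m\sigma_m^2=\sigma_j^2a^{k-1-j}$. Multiplying by your forward factor gives $\frac{n_L}{n_{k+1}}\sigma_j^2a^{L-2-j}=\frac{n_L}{n_{k+1}}\frac{a^{L-1-j}}{n_j}$, and summing over $j=0,\dots,k-1$ yields the second term.
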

The full derivation, including the case with biases, is provided in Appendix~\ref{sec:regimes_proofs}.
This result helps us identify regimes in which $\diag(G)$ is close to constant.
Consider networks with constant width across layers
$n_0 = \ldots = n_L := n$:
\begin{itemize}
\item \textbf{Case $a = 1$:} \eqref{eq:expected-kaiming-main} simplifies to 
$\mathbb{E}[G_{ii}] = 1 + \frac{k}{n}$. When $n \gg L$, we have $\frac{k}{n} \ll 1$ 
and thus $\mathbb{E}[G_{ii}] \approx 1$.

\item \textbf{Case $a \rightarrow +\infty$:} We have 
$\mathbb{E}[G_{ii}] \sim 
a^{L-1}(1+1/n) \text{ for } 1\leq  k\leq L-1.$
Again, for $n$ large enough, $\diag(G)$ is close to $\alpha \mathbf{1}$ with 
$\alpha = a^{L-1}$.
\end{itemize}
In contrast, when $a \rightarrow 0$, for all weights on layer $k$, we have $\mathbb{E}[G_{ii}] \sim \frac{1}{n}a^{L-k}$.
Because $a$ is small, there is a large difference in the expected values of $g$ for coefficients on different layers, creating significant spread.
In Appendix \ref{sec:nonconstantwidth}, we analyze networks with non-constant widths and show that $g$ depends on the width ratios between layers.
Based on this analysis, we identify the two most favorable regimes:
\begin{enumerate}
\item \textbf{Varying width architectures:} 
When layer widths vary significantly 
($n_i \neq n_j$), the diagonal exhibits a widespread regardless of 
initialization scale $a$.

\item \textbf{Small variance initialization:} As shown above, for near constant-width architectures ($n_i \approx n$ for all $i$), 
the diagonal exhibits significant spread when the initialization standard deviation $\sigma_k$ is small relative to $1/\sqrt{n}$, i.e., when $a \ll 1$.
\end{enumerate}
In both cases, the diagonal magnitudes vary widely across parameters, creating favorable conditions for \pathcond.

\textbf{Empirical validation.}
To validate these predictions, we generate networks of fixed depth (8 layers) and fixed mean width (32 neurons per layer), 
varying only the \emph{architectural regularity} i.e., how uniformly neurons are distributed across layers. 
This is controlled by a Dirichlet-based sampling scheme (detailed in \cref{sec:regimes_proofs}) 
with concentration parameter $\alpha$ that interpolates between highly non-uniform layer widths ($\alpha \to 0$) 
and perfectly uniform ones ($\alpha \to +\infty$, all layers with $\approx 32$ neurons).
For each sampled architecture, we initialize the weights at a range of variances 
and compute $\|\log(\text{rescaling})\|_\infty$, which quantifies the magnitude of the rescaling adjustments required by \pathcond.
\Cref{fig:regime_analysis} reveals two clear trends, both consistent with our theoretical predictions: 
the rescaling magnitude grows as layer widths become more irregular, and as the initialization variance decreases.

\textbf{Experiment on the MNIST Autoencoder.}
To further validate our characterization of favorable regimes, we consider an autoencoder task where we can systematically vary architectural balance through the compression factor. 
We follow the experimental setup of \citet{desjardins2015natural}. 
The encoder is a fully connected ReLU network with architecture $[784, 784/c_f, 784/c_f^2]$ where the compression factor $c_f \in \{1, 2, 4, 7\}$ 
(the decoder is symmetric).
All weights are initialized using Kaiming uniform initialization.
We train for 500 epochs using SGD with a batch size of 128 and learning rate $10^{-3}$ (additional results in \Cref{sec:add_regimes}). 
Each experiment is repeated over 3 independent runs.

\textbf{Results.}
Figure~\ref{fig:mnist_ae} shows the final training loss as a function of the compression factor. 
The effect of \pathcond is more pronounced for larger compression factors, 
which produce more unbalanced architectures, consistent with our theoretical predictions. In all cases, \pathcond achieves smaller training loss at convergence.

\section{Conclusion and discussions}

\pathcond is a rescaling strategy based on geometric principles and on the path-lifting that aims to accelerate 
the training of ReLU networks. 
Our criterion is motivated by the idea that promoting a better-conditioned gradient flow in the network's lifted space 
provides a principled approach for minimizing the training loss.
This work opens several promising directions. 
A natural one would be to adapt the method for adaptive optimizers with momentum such as Adam \citep{kingma2014adam}, Muon \citep{jordan2024muon} or Shampoo \citep{gupta2018shampoo}. 
These methods are complementary to our approach and combining them with the rescaled initialization of \pathcond would be an interesting step for future work.
Another direction would be to investigate applications to modern architectures including transformers with self-attention mechanisms.

Moreover, while PathCond has been designed for positively $1$-homogeneous activation functions (in order to have the rescaling property), we show empirically in \cref{sec:gelu} that the rescaling remains approximately input-output preserving for smooth activations such as GELU and SiLU, and still accelerates training in this regime. This robustness opens the door to applying PathCond to modern architectures that rely on smooth, non-homogeneous activations—most notably transformers, whose feed-forward blocks now commonly use GELU or gated variants such as SwiGLU \citep{shazeer2020glu}. Extending both the analysis and the rescaling procedure to such gated activations constitutes a promising direction for future research. 
\newpage
\section*{Impact Statement}

This paper presents work whose goal is to advance the field of machine learning. There are many potential societal consequences of our work, none of which we feel must be specifically highlighted here.

\section*{Acknowledgements}
This project was supported by the \href{https://www.pepr-ia.fr/en/projet/sharp-english/}{SHARP} project of the PEPR-IA (ANR-23-PEIA-0008, granted by France 2030).
We thank the Blaise Pascal Center for its computational support, using the SIDUS solution \citep{quemener2013sidus}.
All experiments were implemented in Python using PyTorch \citep{paszke2019pytorch} and NumPy \citep{harris2020array}.
We thank Manon Verbockhaven for insightful discussions throughout the course of this work.

\bibliography{biblio}
\bibliographystyle{icml2026}

\newpage
\appendix
\onecolumn

\section{Properties of the path-lifting \label{sec:path_lifting_prop}}

We have the following properties on the path-lifting
\begin{lemma}
    \label{lemma:path_properties}
Let $D \in \Dcal, \partial \Phi(D \theta) = \partial \Phi(\theta) D^{-1}$.
\end{lemma}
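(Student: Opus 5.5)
The plan is to differentiate the invariance identity \eqref{eq:path_lifting_invariance}. Fix $D \in \Dcal$. Since $D$ is a fixed diagonal matrix with positive entries, the map $\theta \mapsto D\theta$ is linear and invertible. By \eqref{eq:path_lifting_invariance}, $\Phi(D\theta) = \Phi(\theta)$ holds for every $\theta \in \R^p$, so the two maps $\theta \mapsto \Phi(D\theta)$ and $\theta \mapsto \Phi(\theta)$ coincide on all of $\R^p$. Their Jacobians must therefore agree at every point.

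The chain rule gives $\partial[\Phi(D\,\cdot)](\theta) = \partial\Phi(D\theta)\, D$. Equating this with $\partial\Phi(\theta)$ yields $\partial\Phi(D\theta) D = \partial\Phi(\theta)$. Right-multiplying by $D^{-1}$, which exists because $D$ has positive diagonal entries, gives $\partial\Phi(D\theta) = \partial\Phi(\theta) D^{-1}$.

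Smoothness is not a concern here. Each coordinate of $\Phi$ is a monomial $\prod_{i\in\path}\theta_i$, so $\Phi$ is a polynomial map and hence $C^\infty$. The Jacobian is well defined everywhere, and the chain rule applies without restriction.

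The only thing to check is that \eqref{eq:path_lifting_invariance} really holds for all $\theta$ and not just locally. An alternative route avoids relying on it and verifies the identity entrywise. Take a path $\path$ and a parameter $j \in \path$. Admissibility of $D$ means $\prod_{i\in\path} D_{ii} = 1$ for every path: each hidden neuron's factor $\lambda$ enters the product once and $1/\lambda$ once along the path. Then
\begin{equation*}
\partial_j \Phi_\path(D\theta) = \prod_{i\in\path, i\neq j} D_{ii}\theta_i = D_{jj}^{-1}\,\partial_j\Phi_\path(\theta).
\end{equation*}
If $j\notin\path$, both sides are zero. This is exactly the entrywise form of $\partial\Phi(\theta)D^{-1}$. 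I expect the main point needing care to be this admissibility fact, namely that the product of the diagonal of $D$ along any path is $1$. I would establish it for a single-neuron rescaling generator and then extend it to the multiplicative group $\Dcal$.
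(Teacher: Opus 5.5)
Your main argument is the paper's proof: differentiate the global identity $\Phi(D\theta)=\Phi(\theta)$ (which the paper takes from \citet[Theorem 2.4.1]{gonon2024harnessing}), use the chain rule to get $\partial\Phi(\theta)=\partial\Phi(D\theta)D$, and right-multiply by $D^{-1}$. Your entrywise alternative is also correct and makes the argument self-contained. It rests on $\prod_{i\in\path}D_{ii}=1$ for every path, which is the inclusion $\Dcal\subseteq\Phi^{-1}(\mathbf{1})^+$ of \cref{lemma:dcal_characterization}, and you can indeed get it from the generators because $D\mapsto\Phi(D)$ is multiplicative on positive diagonal matrices.
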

\begin{proof}
Using the definition of the path-lifting, $\Phi(\theta) := (\Phi_{p}(\theta))_{p}$ with $\Phi_{p}(\theta) := \prod_{i \in \path} \theta_i$, we have
\begin{equation}
\frac{\partial \Phi_{\path}(\theta)}{\partial \theta_{i}} = \begin{cases}
\prod_{\substack{k \in \path \\ k \neq i}} \theta_{k} & \text{if}\ \path \ni i\\
0 & \text{otherwise}\,.
\end{cases}
\end{equation}
Given any $D \in \Dcal$, we have for all $\theta \in \mathbb{R}^p$, $\Phi(D \theta) = \Phi(\theta)$ \citep[Theorem 2.4.1]{gonon2024harnessing}.
This means the functions $\theta \mapsto \Phi(\theta)$ and $\theta \mapsto \Phi(D\theta)$ are identical and they share the same Jacobian. Applying the chain rule yields
\begin{equation}
    \partial \Phi(\theta) = \partial \Phi(D\theta)D
\end{equation}
The conclusion follows by multiplying on the right by $D^{-1}$, which exists by \cref{lemma:resc_mat}.
\end{proof}
 
\section{Gradient in $\Phi$-space \label{sec:GDPhiSpace}}

As described in the main text, if $\theta(t)$ follows the gradient flow $\dot{\theta} = -\nabla L(\theta)$ of \eqref{eq:gradient_flow} then $z(t) := \Phi(\theta(t))$ satisfies $\dot{z} = \partial_{t}\Phi(\theta) = -P_{\theta} \nabla_{\Phi} \ell(\Phi(\theta))$ with $P_{\theta}=\partial \Phi(\theta) \partial \Phi(\theta)^{\top}$. As an alternative to the proposed \pathcond\ approach to ``align'' $P_{\theta} \nabla_{\Phi} \ell(\Phi(\theta))$ with $ \nabla_{\Phi} \ell(\Phi(\theta))$, one could envision a ``natural gradient'' version along the following route \citep{amari1998natural,verbockhaven2024growing}. Considering any $p \times p$ pre-conditioning matrix $M(\theta)$, it is possible to replace \eqref{eq:gradient_flow}  with
\begin{equation}
\label{eq:natural_gradient_flow}
\dot{\theta} = -M(\theta) \nabla L(\theta).
\end{equation}
With the same reasoning based on chain rules, if the trajectory $\theta(t)$ is governed by such an ODE then $z(t)$ satisfies the modified ODE
\begin{equation}
\label{eq:phi_noneuclidean_gradient_flow}
\dot{z} = -\partial \Phi(\theta) \dot{\theta} 
= -\partial \Phi(\theta) M(\theta) \nabla L(\theta)
= -\partial \Phi(\theta) M(\theta) \partial\Phi(\theta)^{\top} \nabla_{\Phi} \ell(\Phi(\theta)).
\end{equation}
The best alignment with $ \nabla_{\Phi} \ell(\Phi(\theta))$ is achieved with the pseudo-inverse $M(\theta) := [\partial \Phi(\theta)^{\top} \partial \Phi(\theta)]^{+}$, leading to 
\begin{equation}
\label{eq:phi_noneuclidean_gradient_flow}
\dot{z} = - Q_{\theta} \nabla_{\Phi} \ell(\Phi(\theta)).
\end{equation}
with $Q_{\theta}$ the $q \times q$ matrix projecting orthogonally onto $\operatorname{range}(\partial \Phi(\theta))$.
While this approach leads to the maximum alignment, it involves the computation of the pseudo-inverse of a large $p \times p$ matrix, which is a strong computational bottleneck that  \pathcond circumvents.

\section{Equivalence between different rescaling criteria on toy examples}
\label{sec:minequivalenceex}
{\bf First example.} Consider a minimal architecture with two weights $\theta = (u,v)$, $f_{\theta}(x) := uvx$, and a loss $L(\theta) = \ell(\Phi(\theta))$ where $\Phi(\theta) := uv$. Straightforward calculus yields 
\begin{align*}
\nabla \Phi(\theta) &= \begin{pmatrix}v\\u\end{pmatrix} = \begin{pmatrix}0 & 1\\1&0\end{pmatrix}\theta\quad\text{and}\  \nabla^{2} \Phi(\theta) = \begin{pmatrix}0 & 1\\1&0\end{pmatrix}\\
\nabla L(\theta) &= \ell'(\Phi(\theta)) \cdot \nabla \Phi(\theta) \\
\nabla^{2} L(\theta)  &= \underbrace{\ell''(\Phi(\theta))}_{=:a} \cdot \nabla \Phi(\theta) (\nabla \Phi(\theta))^{\top}+ \underbrace{\ell'(\Phi(\theta))}_{=:b} \cdot \nabla^{2} \Phi(\theta)\\ & =  a\begin{pmatrix}v^{2} & uv\\uv&u^{2}\end{pmatrix}+b \begin{pmatrix}0 & 1\\1&0\end{pmatrix}
=
 \begin{pmatrix}av^{2} & a\Phi(\theta)+b\\a\Phi(\theta)+b&au^{2}\end{pmatrix}
\end{align*}
Since $\Phi$ is rescaling invariant it follows that for any norm $\|\cdot\|$
\begin{align}
\arg\min_{\theta' \sim \theta} \|\nabla L(\theta)\|
&=
\arg\min_{\theta' \sim \theta} \|\nabla \Phi(\theta)\|
=
\arg\min_{\theta' \sim \theta} \|\theta\|\label{eq:MinGradNormIsMinThetaNorm}
\intertext{and for the special case of the Euclidean norm}
\arg\min_{\theta' \sim \theta} \tr(\nabla^2 L(\theta))
&= \arg\min_{\theta' \sim \theta} \|\theta\|_{2}.\label{eq:MinHessianTraceIsMinThetaNorm}
\end{align}
We now turn to the behavior of the condition number $\kappa(\nabla^{2} L(\theta))$ and the operator norm $\|\nabla^{2} L(\theta)\|_{2 \to 2}$ by considering the spectrum of $\nabla^{2} L(\theta)$, which is made of two eigenvalues $\lambda_{-} \leq \lambda_{+}$ such that 
\begin{align*}
\lambda_{+}\lambda_{-} 
= \det (\nabla^{2} L(\theta)) 
&= a^{2} (uv)^{2}-(a\Phi(\theta)+b)^{2} \\
& = (a\Phi(\theta))^{2}-(a\Phi(\theta)+b)^{2} 
= -b(2 a\Phi(\theta)+b)
=: c\\
\lambda_{+}+\lambda_{-} = \tr ( \nabla^{2} L(\theta)) 
&= a (u^{2}+v^{2}) = a \|\theta\|_{2}^{2}.
\end{align*}
Assuming that $a \geq 0$ (this is the case for example when $\ell$ is the quadratic loss), the sum $\lambda_{+}+\lambda_{-}$ is thus non-negative, hence
$\lambda_{+} \geq |\lambda_{-}|$ and therefore
 $\|\nabla^{2}L(\theta)\|_{2 \to 2} = \lambda_{+}$. We also obtain that $\kappa(\nabla^{2} L(\theta)) = \lambda^{+}/|\lambda_{-}|$. Notice that each of these quantities can be expressed as $g(\lambda_{+},|\lambda_{-}|)$ where $g$ increases with its first argument, and decreases with is second one. We now investigate how they vary with $\|\theta'\|_{2}^{2}$ when $\theta' \sim \theta$.
 
Since $a,b,c$ can be expressed as a function of $\Phi(\theta)$, they are unchanged if we replace $\theta$ by a rescaling equivalent paramter $\theta' \sim \theta$. 
When varying such a $\theta'$, an increase in $\|\theta'\|_{2}^{2}$ increases the sum $\lambda_{+}+\lambda_{-}$ while leaving the product $\lambda_{+}\lambda_{-}$ constant. Let us distinguish two cases:
\begin{itemize}
\item case $c\geq 0$: both eigenvalues have the same sign, and since their sum $a \|\theta'\|_{2}^{2}$ is positive, they are both positive. Straightforward calculus shows that increasing their sum while keeping their product constant increases the largest one $\lambda_{+}$ while decreasing the smallest one $\lambda_{-}=|\lambda_{-}|$. Therefore with $g$ as above the value $g(\lambda_{+},\lambda_{-})$ increases with $\|\theta'\|_{2}^{2}$ 
\item case $c<0$ here since we have seen that $\lambda_{+}$ remains positive we must have $\lambda_{-}= -|\lambda_{-}|$. The same type of reasoning shows that we reach the same conclusion.
\end{itemize}
Overall it follows that
\begin{align}
\arg\min_{\theta' \sim \theta} \kappa(\nabla^2 L(\theta)) 
= \arg\min_{\theta' \sim \theta} \|\nabla^2 L(\theta)\|_{2 \to 2}
&= \arg\min_{\theta' \sim \theta} \|\theta\|_{2}.\label{eq:MinCondOrOpNormIsMinThetaNorm}
\end{align}

{\bf Second example.} The above example is a particular case of a deeper toy example where $\theta = 
(u_{1},\ldots,u_{L})$ and $\Phi(\theta) = u_{1}\ldots u_{L}$. Similar computations (we skip the details) show that when all entries $u_{i}$ are nonzero, denoting $\theta^{-1} := (u_{i}^{-1})_{i=1}^{L}$
\begin{align*}
\nabla \Phi(\theta) &= (\Phi(\theta)/u_{i})_{i=1}^{L}= \Phi(\theta) \cdot \theta^{-1}
\\
\nabla L(\theta) &= \ell'(\Phi(\theta)) \cdot \nabla \Phi(\theta) =  \underbrace{\ell'(\Phi(\theta)) \cdot \Phi(\theta)}_{=:a(\Phi(\theta))} \cdot \theta^{-1}\\
\nabla^{2} L(\theta)  &= a(\Phi(\theta)) \cdot  \operatorname{offdiag} (\theta^{-1}
(\theta^{-1})^\top) + (\Phi(\theta))^{2} \ell''(\Phi(\theta)) \cdot \theta^{-1}
(\theta^{-1})^\top
\end{align*}
We deduce that $\tr(\nabla^{2} L(\theta)) =  (\Phi(\theta))^{2} \ell''(\Phi(\theta)) \|\theta^{-1}\|_{2}^{2}$. Overall we obtain with the same reasoning that 
\begin{align}
\arg\min_{\theta' \sim \theta} \|\nabla L(\theta)\|
= \arg\min_{\theta' \sim \theta} \tr(\nabla^{2} L(\theta)) 
&= \arg\min_{\theta' \sim \theta} \|\theta^{-1}\|_{2}.\label{eq:MinGradNormOrHessianTraceIsMinThetaInvNorm}
\end{align}

\section{Characterization of the rescaling symmetry}
\label{sec:charac}

Let $\mathcal{G} = (V, E)$ be a fixed directed acyclic graph (DAG), representing a fixed network architecture, with vertices $V$ called neurons and edges $E$. 
For a neuron $v \in V$, we define the sets of antecedents and successors as
\begin{equation}
    \text{ant}(v) := \{u \in V \mid (u,v) \in E\}, \quad \text{suc}(v) := \{u \in V \mid (v,u) \in E\}.
\end{equation}
Neurons with no antecedents (resp. no successors) are called \textit{input} (resp. \textit{output}) neurons, and their sets are denoted $N_{\text{in}} := \{v \in V \mid \text{ant}(v) = \emptyset\}$ and $N_{\text{out}} := \{v \in V \mid \text{suc}(v) = \emptyset\}$ respectively. The input and output dimensions are $d_{\text{in}} := |N_{\text{in}}|$ and $d_{\text{out}} := |N_{\text{out}}|$ .
Neurons in $\mathcal{H} := V \setminus (N_{\text{in}} \cup N_{\text{out}})$ are called \textit{hidden neurons}, and we denote their cardinality by $H := |\mathcal{H}|$. Note that for any hidden neuron $v \in \mathcal{H}$, we have $\text{ant}(v) \neq \emptyset$ and $\text{suc}(v) \neq \emptyset$, i.e., there exist both incoming and outgoing edges \citep[Definition 2.2.2]{gonon2024harnessing}.

To parametrize this graph in the context of neural networks, we perform a topological ordering \citep{cormen2022introduction} and assign indices to all neurons and edges. All parameters (weights along edges and biases on neurons) can then be arranged into a vector $\boldsymbol{\theta} \in \mathbb{R}^p$, where $p$ is the total number of parameters. As the architecture is fixed, this vector fully represents our network.

\begin{definition}[Neuron-wise rescaling symmetry]
\label{def:neur_resc}
Let $v \in \mathcal{H}$ be a hidden neuron and $\lambda > 0$ a scaling factor. 
We define the sets of incoming and outgoing parameter indices associated to $v$ as
\begin{equation}
    \mathrm{in}_v := \{\text{index}(b_v)\} \cup \{\text{index}(\theta_{u \rightarrow v}) \mid u \in \text{ant}(v)\}
\end{equation}
and
\begin{equation}
    \mathrm{out}_v := \{\text{index}(\theta_{v \rightarrow u}) \mid u \in \text{suc}(v)\},
\end{equation}
where $b_v$ denotes the bias of neuron $v$, and $\theta_{u \rightarrow v}$ (resp. $\theta_{v \rightarrow u}$) denotes the weight of edge $(u,v) \in E$ (resp. $(v,u) \in E$) \citep[Definition 2.2.2]{gonon2024harnessing}.
The diagonal matrix $D_{\lambda,v} \in \mathbb{R}^{p \times p}$ associated to the \textit{neuron-wise rescaling symmetry} \citep[Def. 2.3]{stock2023embedding} is defined by its diagonal entries:
\begin{equation}
    (D_{\lambda, v})_{ii} = \begin{cases}
        \lambda & \text{if } i \in \mathrm{in}_v\\
        1/\lambda & \text{if } i \in \mathrm{out}_v\\
        1 & \text{otherwise}.
    \end{cases}
\end{equation}
This matrix implements a rescaling of neuron $v$ by factor $\lambda$: incoming weights and bias are multiplied by $\lambda$, while outgoing weights are divided by $\lambda$.
\end{definition}

We can then define the following set:

\begin{definition}[Rescaling symmetry group]
\label{def:dcal}
We denote by $\Dcal$ the subgroup generated by the neuron-wise rescaling symmetries \citep[Def. 2.3]{stock2023embedding}:
\begin{equation}
\Dcal = \text{gr}\left(\{D_{\lambda,v} \mid v \in \mathcal{H}, \lambda \in \R_*^{+}\}\right).
\end{equation}
\end{definition}

\begin{lemma}
\label{lemma:resc_mat}
The rescaling symmetry group admits the following explicit characterization:
\begin{equation}
    \Dcal = \left\{ \prod_{v \in \mathcal{H}} D_{\lambda_v,v} \,\middle|\, \boldsymbol{\lambda} \in (\R_*^{+})^{H} \right\}.
\end{equation}
The order of the product does not matter since diagonal matrices commute.
\end{lemma}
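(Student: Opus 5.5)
The claim is a set equality, so I will prove the two inclusions. Write $\mathcal{S} := \{\prod_{v\in\mathcal{H}} D_{\lambda_v,v} \mid \boldsymbol\lambda\in(\R_*^+)^H\}$.

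\textbf{The inclusion $\mathcal{S}\subseteq\Dcal$.} Each factor $D_{\lambda_v,v}$ is one of the generators of $\Dcal$ in Definition~\ref{def:dcal}. A group contains every finite product of its generators, so $\mathcal{S}\subseteq\Dcal$.

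\textbf{$\mathcal{S}$ is a subgroup.} It suffices to show that $\mathcal{S}$ is closed under products and inverses and contains the identity. All matrices involved are diagonal with positive entries, so they commute and are invertible; this also justifies the remark that the order of the product is irrelevant. From the entrywise description in Definition~\ref{def:neur_resc} we read off two identities for a fixed neuron $v$:
\[
D_{\lambda,v}D_{\mu,v}=D_{\lambda\mu,v}, \qquad D_{\lambda,v}^{-1}=D_{1/\lambda,v}.
\]
Both hold because on $\mathrm{in}_v$ the entries are $\lambda$ and $\mu$, on $\mathrm{out}_v$ they are $1/\lambda$ and $1/\mu$, and elsewhere they equal $1$. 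Using commutativity to regroup factors neuron by neuron:
\begin{itemize}
\item the product of the elements of $\mathcal{S}$ indexed by $\boldsymbol\lambda$ and $\boldsymbol\mu$ is the element indexed by the entrywise product $\boldsymbol\lambda\boldsymbol\mu$;
\item the inverse of the element indexed by $\boldsymbol\lambda$ is the element indexed by $\boldsymbol\lambda^{-1}$;
\item the identity is the element indexed by $\boldsymbol\lambda=\mathbf 1$.
\end{itemize}
Hence $\mathcal{S}$ is a subgroup.

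\textbf{The inclusion $\Dcal\subseteq\mathcal{S}$.} The subgroup $\mathcal{S}$ contains every generator: for $D_{\lambda,w}$, take $\lambda_w=\lambda$ and $\lambda_v=1$ for all $v\neq w$. Since $\Dcal$ is by definition the smallest subgroup containing the generators, $\Dcal\subseteq\mathcal{S}$. This completes the equality.

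The only step needing any care is the bookkeeping in the identity $D_{\lambda,v}D_{\mu,v}=D_{\lambda\mu,v}$. It relies on $\mathrm{in}_v\cap\mathrm{out}_v=\emptyset$, which holds because the graph is a DAG and so has no self-loops; a bias only appears in $\mathrm{in}_v$. No genuine obstacle is expected.
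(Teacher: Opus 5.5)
Your proof is correct and follows the same route as the paper. The set of products $\prod_{v\in\mathcal{H}} D_{\lambda_v,v}$ is a subgroup (by commutativity and $D_{\lambda,v}^{-1}=D_{1/\lambda,v}$) that contains every generator, which gives $\Dcal\subseteq\mathcal{S}$ by minimality; the reverse inclusion follows from closure of $\Dcal$ under products. Your explicit check of $D_{\lambda,v}D_{\mu,v}=D_{\lambda\mu,v}$ via $\mathrm{in}_v\cap\mathrm{out}_v=\emptyset$ just fills in the step the paper calls straightforward.
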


\begin{proof}
    Let $A :=  \{ \prod_{v \in \mathcal{H}}  D_{\lambda,v} \mid \boldsymbol{\lambda} \in (\R_*^{+})^{ H } \}$.
    Since diagonal matrices commute and $D_{\lambda,v}^{-1}=D_{1/\lambda,v}$, it is straightforward to show that $A$ is stable under product and inversion, and is therefore a subgroup of the group of strictly positive diagonal matrices.
    Let us show that it contains all $D_{\lambda, v}$. Indeed, 
    given any $v \in \mathcal{H}$ and $\lambda > 0$, choosing $\lambda_v=\lambda$ and $\lambda_{v'} = 1$ for all $v' \in \mathcal{H} \backslash\{v\}$ yields $\prod_{v' \in \mathcal{H} } D_{\lambda_{v'},v'} = D_{\lambda,v}$.
    By minimality of the generated subgroup, we therefore have $\Dcal \subset A$.
    Conversely, every element of $A$ is a product of elements of $\Dcal$, hence an element of $\Dcal$ by the group property.
\end{proof}

For the next definitions, we recall some notation from \citet{gonon2023path}.

\begin{definition}[Path in a DAG] \citep[Definition A.1]{gonon2023path}
\label{def:path}
A path of a DAG $\mathcal{G} = (V, E)$ is any sequence of neurons $v_0, \ldots, v_d$ such that each $v_i \rightarrow v_{i+1}$ is an edge . Such a path is denoted $\path = v_0 \rightarrow \ldots \rightarrow v_d$. The length of a path $\path = v_0 \rightarrow \ldots \rightarrow v_d$ is $\text{length}(\path) = d$ (the number of edges).
\end{definition}

\begin{definition}[Path lifting]\citep[Definition A.2]{gonon2023path}
\label{def:path_lifting}
For a path $\path \in \pathset$ with $\path = v_0 \rightarrow \ldots \rightarrow v_d$, the path lifting is defined as 
\begin{equation}
    \Phi_\path(\boldsymbol{\theta}) := \begin{cases}
        \displaystyle\prod_{\ell=1}^{\text{length}(\path)} \theta_{v_{\ell-1} \rightarrow v_\ell} & \text{if } v_0 \in N_{\text{in}}, \\[0.5em]
        \displaystyle b_{v_0} \prod_{\ell=1}^{\text{length}(\path)} \theta_{v_{\ell-1} \rightarrow v_\ell} & \text{otherwise},
    \end{cases}
\end{equation}
where an empty product equals $1$ by convention. 
The path-lifting $\Phi(\boldsymbol{\theta})$ of $\boldsymbol{\theta}$ over the entire graph is
\begin{equation}
    \Phi(\boldsymbol{\theta}) := \left(\Phi_\path(\boldsymbol{\theta})\right)_{\path \in \pathset}.
\end{equation}

\end{definition}

By a slight abuse of notation, we extend the path-lifting to diagonal matrices by applying it to their diagonal entries. For $D \in \mathbb{R}^{p \times p}$ diagonal, we define
\begin{equation}
    \Phi(D) := \Phi\left((D_{ii})_{i=1,\ldots, p}\right).
\end{equation}

\begin{definition}[Admissible rescaling matrices]
\label{def:admissible_rescaling}
We denote by $\Phi^{-1}(\mathbf{1})^+$ the set of diagonal matrices with strictly positive entries whose path-lifting equals the all-ones vector:
\begin{equation}
    \Phi^{-1}(\mathbf{1})^+ := \left\{ D \in \mathbb{R}^{p \times p} \text{ diagonal} \,\middle|\, \Phi(D) = \mathbf{1} \text{ and } D_{ii} > 0 \text{ for all } i \right\}.
\end{equation}
\end{definition}

\begin{lemma}
\label{lemma:dcal_characterization}
The set of admissible rescaling matrices coincides with $\Phi^{-1}(\mathbf{1})^+$:
\begin{equation}
    \Dcal = \Phi^{-1}(\mathbf{1})^+.
\end{equation}
\end{lemma}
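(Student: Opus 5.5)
We show both inclusions.

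\textbf{Inclusion $\Dcal \subseteq \Phi^{-1}(\mathbf{1})^+$.} By \cref{lemma:resc_mat}, any $D \in \Dcal$ is a product $\prod_{v\in\mathcal{H}} D_{\lambda_v,v}$. It therefore has positive diagonal, and $D_{ii} = \lambda_{w}\,\lambda_{v}^{-1}$ for the edge $i = v \to w$, where we set $\lambda_u := 1$ for input and output neurons. For a bias $b_w$ the entry is $\lambda_w$.

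Take a path $\path = v_0 \to \cdots \to v_d$, which ends at an output neuron, so $\lambda_{v_d}=1$. The product $\prod_{\ell} D_{v_{\ell-1}\to v_\ell}$ telescopes to $\lambda_{v_d}/\lambda_{v_0} = 1/\lambda_{v_0}$.
\begin{itemize}
\item If $v_0 \in N_{\text{in}}$, then $\lambda_{v_0}=1$ and the product is $1$.
\item Otherwise $v_0$ is hidden, and the bias factor $\lambda_{v_0}$ cancels $1/\lambda_{v_0}$.
\end{itemize}
Hence $\Phi(D)=\mathbf{1}$. (Alternatively, this follows from $\Phi(D\mathbf{1}) = \Phi(\mathbf{1}) = \mathbf{1}$ by \eqref{eq:path_lifting_invariance}.)

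\textbf{Inclusion $\Phi^{-1}(\mathbf{1})^+ \subseteq \Dcal$.} Take a positive diagonal $D$ with $\Phi(D)=\mathbf{1}$. We construct the neuron scalings explicitly. For each hidden neuron $v$, let $\lambda_v := D_{b_v}$, the diagonal entry at the index of the bias of $v$. For input and output neurons, set $\lambda_v := 1$.

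It remains to check that every edge entry satisfies $D_{v\to w} = \lambda_w/\lambda_v$. Then $D$ coincides entrywise with $\prod_v D_{\lambda_v,v}$, and \cref{lemma:resc_mat} finishes the proof. We use the condition $\Phi(D)=\mathbf{1}$ on well-chosen paths.

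\begin{itemize}
\item \emph{Suffix products.} For a hidden $v$, consider any path $\path$ starting at $v$ (using its bias) and ending at an output. The condition $\Phi_\path(D)=1$ gives $\prod_{\text{edges of }\path} D_e = 1/\lambda_v$.
\item \emph{Edge out of a hidden neuron into a hidden neuron.} For an edge $v \to w$ with both $v,w$ hidden, pick any path $\path'$ from $w$ to an output; one exists since every hidden neuron has a successor and the graph is a finite DAG. Applying the previous identity to $\path'$ and to $v\to\path'$ and dividing gives $D_{v\to w} = \lambda_w/\lambda_v$.
\item \emph{Edge from a hidden neuron to an output.} For an edge $v\to w$ with $w$ an output, the one-edge path gives $\lambda_v D_{v\to w}=1$, i.e.\ $D_{v\to w} = 1/\lambda_v = \lambda_w/\lambda_v$.
\item \emph{Edge out of an input neuron.} For an edge $u\to w$ with $u\in N_{\text{in}}$, extend it by a path from $w$ to an output and use $\Phi_\path(D)=1$ with no bias factor. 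This yields $D_{u\to w}\cdot(1/\lambda_w)=1$ if $w$ is hidden, and $D_{u\to w}=1$ if $w$ is an output. Both match $\lambda_w/\lambda_u$ with $\lambda_u=1$.
\end{itemize}

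\textbf{Main obstacle.} The delicate point is the existence of the required paths and the well-definedness of $\lambda_v$. Every hidden neuron lies on a path to an output by the DAG definition (it has a successor, and the graph is finite and acyclic). The definition of $\lambda_v$ relies on each hidden neuron having a bias parameter, as in \cref{def:neur_resc}.

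If the architecture had no biases, we would instead define $\lambda_v$ as $1/\prod(\text{edges along some path from } v \text{ to an output})$. We would then need to show this is independent of the chosen path, which follows from comparing two paths that share a common prefix from an input neuron.
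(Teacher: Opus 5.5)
Your proof is correct and follows essentially the paper's route. The reverse inclusion is identical: you set $\lambda_v$ to the bias entry and verify each edge by dividing the identities $\Phi_{\path}(D)=1$ for a path through the edge and its suffix. For the forward inclusion, your telescoping computation simply unpacks the paper's argument $\Phi(D)=\Phi(D\mathbf{1})=\Phi(\mathbf{1})=\mathbf{1}$, which you also mention; you are in fact slightly more careful than the paper in treating input-to-output edges explicitly rather than dismissing them.
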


\begin{proof}
We prove both inclusions.

\begin{itemize}
    \item[$\bullet$] $\Dcal \subseteq \Phi^{-1}(\mathbf{1})^+$

Let $D \in \Dcal$. By \cref{lemma:resc_mat}, there exists $\boldsymbol{\lambda} \in (\R_*^{+})^{H}$ such that 
\begin{equation}
    D = \prod_{v \in \mathcal{H}} D_{\lambda_v,v}.
\end{equation}
Since $\boldsymbol{\lambda} \in (\R_*^{+})^{H}$, we have $D_{ii} > 0$ for all $i$.

It remains to show that $\Phi(D) = \mathbf{1}$. By \citet[Theorem 2.4.1]{gonon2024harnessing}, the rescaling symmetry property ensures that for any $\boldsymbol{\theta} \in \mathbb{R}^p$,
\begin{equation}
    \Phi(D  \boldsymbol{\theta}) = \Phi(\boldsymbol{\theta}),
\end{equation}
Applying this property with $\boldsymbol{\theta} = \mathbf{1}$, we obtain
\begin{equation}
    \Phi(D \mathbf{1}) = \Phi(\mathbf{1}) = \mathbf{1},
\end{equation}
where the last equality holds because $\Phi_\path(\mathbf{1}) = \prod_{i\in\path} 1 = 1$ for all paths $\path \in \pathset$.

Now, observe that $(D \mathbf{1})_i = D_{ii}$ for all $i$. Therefore,
\begin{equation}
    \Phi(D) = \Phi(D  \mathbf{1}) = \mathbf{1}.
\end{equation}
This shows that $D \in \Phi^{-1}(\mathbf{1})^+$, hence $\Dcal \subseteq \Phi^{-1}(\mathbf{1})^+$.

\item[$\bullet$] $\Phi^{-1}(\mathbf{1})^+ \subseteq \Dcal$

Let $D \in \Phi^{-1}(\mathbf{1})^+$. We can write $D = \diag(\boldsymbol{d})$ for some $\boldsymbol{d} \in (\R_*^{+})^{p}$.

We want to show that $D \in \Dcal$, which by \cref{lemma:resc_mat} means showing that $D$ can be written as a product of neuron-wise rescaling matrices:
\begin{equation}
    D = \prod_{v \in \mathcal{H}} D_{\lambda_v, v}
\end{equation}
for some $\boldsymbol{\lambda} \in (\R_*^{+})^{H}$. 

\textbf{Notation.} To simplify the exposition, for any neuron $v \in V$ and edge $(u,v) \in E$, we denote:
\begin{equation}
    d_v := d_{\text{index}(b_v)} \quad \text{and} \quad d_{u \rightarrow v} := d_{\text{index}(\theta_{u \rightarrow v})},
\end{equation}
i.e., $d_v$ is the rescaling factor of the bias of neuron $v$, and $d_{u \rightarrow v}$ is the rescaling factor of the weight on edge $(u,v)$.

\textbf{Characterization of $\Dcal$.} Recall from \cref{def:neur_resc} that the matrix $D_{\lambda_v, v}$ multiplies the bias and all incoming weights of neuron $v$ by $\lambda_v$, while dividing all outgoing weights by $\lambda_v$. Therefore, a diagonal matrix $D$ belongs to $\Dcal$ if and only if there exist rescaling factors $\lambda_v > 0$ for each $v \in \mathcal{H}$ such that:
\begin{itemize}
    \item[(i)] For each hidden neuron $v \in \mathcal{H}$, the bias rescaling factor satisfies:
    \begin{equation}
        d_v = \lambda_v.
    \end{equation}
    \item[(ii)] For each edge $(u,v) \in E$, the weight rescaling factor satisfies:
    \begin{equation}
    \label{eq:edge_condition}
        d_{u \rightarrow v} = \frac{\lambda_v}{\lambda_u},
    \end{equation}
    where we adopt the convention $\lambda_u = 1$ for $u \in N_{\text{in}}$ and $\lambda_v = 1$ for $v \in N_{\text{out}}$.
\end{itemize}

Indeed, condition (i) ensures that each bias is rescaled by the appropriate factor, while condition (ii) captures the combined effect on edges: as an outgoing edge from $u$, the weight is divided by $\lambda_u$; as an incoming edge to $v$, it is multiplied by $\lambda_v$.

\textbf{Proof strategy.} Given $D \in \Phi^{-1}(\mathbf{1})^+$, we define for all hidden neurons $v \in \mathcal{H}$:
\begin{equation}
    \lambda_v := d_v > 0.
\end{equation}
This definition automatically satisfies condition (i). To show that $D \in \Dcal$, it therefore suffices to verify condition (ii), namely that for any edge $(u,v) \in E$:
\begin{equation}
\label{eq:edge_rescaling}
    d_{u \rightarrow v} = \frac{\lambda_v}{\lambda_u}.
\end{equation}

We distinguish three cases based on the nature of the neurons $u$ and $v$. Note that we do not need to consider the case where $u \in N_{\text{in}}$ and $v \in N_{\text{out}}$ simultaneously, as such edges would bypass all hidden neurons and involve no rescaling from $\Dcal$.

\medskip
\noindent\fbox{\textbf{Case 1:}} \textbf{$v \in N_{\text{out}}$}

\smallskip
Consider the path $\path = (u \rightarrow v)$.
By hypothesis, $\Phi(D) = \mathbf{1}$, so
\begin{equation}
    \Phi_\path(D) = d_u \cdot d_{u \rightarrow v} = 1.
\end{equation}
Since $d_u = \lambda_u$, we obtain
\begin{equation}
    d_{u \rightarrow v} = \frac{1}{\lambda_u}
\end{equation}

\medskip
\noindent\fbox{\textbf{Case 2:}} \textbf{$u \in N_{\text{in}}$ }

\smallskip
Consider a path $\path$ starting from input neuron $u$ and passing through the edge $(u,v)$. We can write $\path = (u \rightarrow v \rightarrow w_1 \rightarrow \cdots \rightarrow w_k)$ where $w_k \in N_{\text{out}}$.
Consider also the path $\path' = (v \rightarrow w_1 \rightarrow \cdots \rightarrow w_k)$ which coincides with $\path$ after vertex $v$.
By hypothesis, $\Phi(D) = \mathbf{1}$, we have
\begin{equation}
    \Phi_{\path}(D) = d_{u \rightarrow v} \prod_{\ell=1}^{k} d_{w_{\ell-1} \rightarrow w_\ell} = 1,
\end{equation}
where $w_0 := v$, and
\begin{equation}
    \Phi_{\path'}(D) = d_{v} \prod_{\ell=1}^{k} d_{w_{\ell-1} \rightarrow w_\ell} = 1.
\end{equation}
Dividing the first equation by the second yields
\begin{equation}
    d_{u \rightarrow v} = d_{v} = \lambda_{v}.
\end{equation}

\medskip
\noindent\fbox{\textbf{Case 3:}} \textbf{$u, v \in \mathcal{H}$}

\smallskip
Let $(u,v) \in E$ be an edge between two hidden neurons.

Consider any path $\path$ starting from neuron $u$ and passing through the edge $(u,v)$. We can write $\path = (u \rightarrow v \rightarrow w_1 \rightarrow \cdots \rightarrow w_k)$ where $w_k \in N_{\text{out}}$.
Consider also the path $\path' = (v \rightarrow w_1 \rightarrow \cdots \rightarrow w_k)$ which coincides with $\path$ after vertex $v$.
Since both $u$ and $v$ are hidden neurons, both paths start from their respective biases. By hypothesis, $\Phi(D) = \mathbf{1}$, we have
\begin{equation}
    \Phi_{\path}(D) = d_{u} \cdot d_{u \rightarrow v} \prod_{\ell=1}^{k} d_{w_{\ell-1} \rightarrow w_\ell} = 1,
\end{equation}
where $w_0 := v$, and
\begin{equation}
    \Phi_{\path'}(D) = d_{v} \prod_{\ell=1}^{k} d_{w_{\ell-1} \rightarrow w_\ell} = 1.
\end{equation}
Dividing the first equation by the second yields
\begin{equation}
    d_{u} \cdot d_{u \rightarrow v} = d_{v}.
\end{equation}
Since $d_{v} = \lambda_{v}$ and $d_{u} = \lambda_{u}$, we obtain
\begin{equation}
    d_{u \rightarrow v} = \frac{\lambda_{v}}{\lambda_u}.
\end{equation}

\medskip
In all cases, equation \eqref{eq:edge_rescaling} holds, which shows that $D = \prod_{v \in \mathcal{H}} D_{\lambda_v, v} \in \Dcal$. Therefore, $\Phi^{-1}(\mathbf{1})^+ \subseteq \Dcal$.
\end{itemize} 

\end{proof}

\section{Bregman matrix divergences \label{sec:breg_div}}

We first formally introduce Bregman matrix divergences. 
Let $S^q$ be the space of $q\times q$ symmetric matrices on $\R$.
Given a strictly convex and differentiable function $\zeta: S^q \to \R$, 
the Bregman divergence between matrices $X$ and $Y$ is defined as
\begin{equation}
\label{eq:bregman_div_matric}
d_\zeta(X||Y) = \zeta(X) - \zeta(Y) - \tr \left(\nabla\zeta(Y)^\top (X- Y)\right) \,.
\end{equation}
The most well-known example is built from the squared Frobenius norm $\zeta(X) = \|X\|_F^2$, 
which yields $d_\zeta(X||Y) = \|X - Y\|_F^2$. 
The logdet divergence corresponds to $\zeta(X) = -\log \det(X)$ \citep{james1961estimation}, 
and the von Neumann divergence to $\zeta(X) = \mathrm{tr}(X \log X - X)$ \citep{von1927thermodynamik}. 
Within the class of Bregman matrix divergences, \emph{spectral divergences} are particularly important: 
they take the form $\zeta(X) = \varphi \circ \lambda(X)$, 
where $\lambda$ denotes the vector of eigenvalues of $X$ in decreasing order and 
$\varphi$ is a strictly convex real-valued function on vectors \citep{kulis2009low}. 
The aforementioned examples are all spectral divergences: $\varphi(x) = \sum_i |x_i|^2$ 
for the Frobenius norm, $\varphi(x) = -\sum_i \log(x_i)$ for the logdet divergence, 
and $\varphi(x) = \sum_i (x_i \log x_i - x_i)$ for the von Neumann divergence.

Spectral divergences with $\varphi(x) = \sum_i \psi(x_i)$ for some $\psi : \R \to \R$ admit the expression
 \citep[Lemma 1]{kulis2009low} s
\begin{equation}
d_{\zeta}(X || Y) = \sum_{ij} (u_i^\top v_j)^2(\psi(\lambda_i(X))- \psi(\lambda_j(Y)) -(\lambda_i(X)-\lambda_j(Y))\nabla \psi(\lambda_j(Y))\,,
\end{equation}
where $u_i$ (resp. $v_j$) is an eigenvector associated to the $i$-th largest eigenvalue $\lambda_i(X)$ 
of $X$ (resp. $Y$). 
Hence, $d_{\zeta}(X || Y)$ can be written as $d_{\zeta}(X || Y) = \sum_{ij} (u_i^\top v_j)^2 
\text{div}_\psi(\lambda_i(X) || \lambda_j(Y))$ where $\text{div}_\psi$ is the Bregman divergence associated to $\psi$.
\begin{definition}
    \label{def:plusdiv}
Consider $\zeta = \varphi \circ \lambda$ with $\varphi(x) = \sum_i \psi(x_i)$ that induces a spectral divergence as described above. 
We define the $\zeta^+$ divergence as the divergence {\em that only considers the nonzero eigenvalues of the matrices.}
In other words, 
\begin{equation}
    \label{eq:divplusdivergence}
d_{\zeta^+}(X || Y) := \sum_{\begin{smallmatrix}i: \lambda_i(X) \neq 0 \\ j: \lambda_j(Y) \neq 0 \end{smallmatrix}} (u_i^\top v_j)^2  \text{div}_\psi(\lambda_i(X) || \lambda_j(Y))\,.
\end{equation}
\end{definition}

We have the following lemma.
\begin{lemma}
\label{lemma:reverse_transpose}
Consider a spectral divergence on $S^q$ with $\zeta(X) = \varphi \circ \lambda(X)$ 
where $\varphi$ can be written as $\varphi(x) = \sum_i \psi(x_i)$ for some strictly convex and differentiable function $\psi: \R \to \R$ 
and $\zeta^+$ as defined in \eqref{eq:divplusdivergence}.
Then, for any $A \in \R^{q \times p}$
\begin{equation}
d_{\zeta^+}(AA^\top || I_q) = d_{\zeta^+}(A^\top A|| I_p)\,.
\end{equation}
\end{lemma}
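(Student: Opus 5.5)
The plan is to use the singular value decomposition of $A$ and observe that, because the second argument is the identity, the divergence $d_{\zeta^+}(X\|I)$ depends only on the nonzero eigenvalues of $X$ (with multiplicity). The nonzero spectra of $AA^\top$ and $A^\top A$ coincide, which gives the result.

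\textbf{Step 1: simplify the divergence against the identity.} I will simplify $d_{\zeta^+}(X\|I_n)$ for a symmetric $X\in S^n$. Every eigenvalue of $I_n$ equals $1\neq 0$, so the sum over $j$ in \eqref{eq:divplusdivergence} runs over all $j=1,\dots,n$. Any orthonormal basis $(v_j)$ is an eigenbasis of $I_n$. Hence, for each fixed $i$, $\sum_j (u_i^\top v_j)^2=\|u_i\|^2=1$. Moreover, $\operatorname{div}_\psi(\lambda_i(X)\|\lambda_j(I))=\operatorname{div}_\psi(\lambda_i(X)\|1)$ does not depend on $j$. Therefore
\begin{equation*}
d_{\zeta^+}(X\|I_n)=\sum_{i:\,\lambda_i(X)\neq 0}\operatorname{div}_\psi(\lambda_i(X)\|1).
\end{equation*}
This also disposes of the apparent dependence on the choice of eigenvectors, which is not unique for $I_n$ or in the presence of repeated eigenvalues.

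\textbf{Step 2: compare the nonzero spectra.} Write the SVD $A=U\Sigma V^\top$ with singular values $s_1\ge\dots\ge s_r>0$, where $r=\rank A$. Then $AA^\top=U\Sigma\Sigma^\top U^\top$ and $A^\top A=V\Sigma^\top\Sigma V^\top$. Both matrices have nonzero eigenvalues exactly $s_1^2,\dots,s_r^2$ with the same multiplicities; the remaining eigenvalues are zeros, $q-r$ of them for $AA^\top$ and $p-r$ for $A^\top A$. Applying Step 1 with $n=q$ and with $n=p$ turns both sides of the lemma into $\sum_{k=1}^r\operatorname{div}_\psi(s_k^2\|1)$, which proves the equality.

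\textbf{Main obstacle.} The only delicate point is Step 1. One must check that the formula from \citet[Lemma 1]{kulis2009low}, restricted to nonzero eigenvalues as in \cref{def:plusdiv}, is well defined and collapses to the stated sum. The key fact is that the identity has a single eigenvalue $1$, so summing $(u_i^\top v_j)^2$ over a complete orthonormal basis always gives $1$, regardless of which eigenbasis is chosen. A minor subtlety is that $\psi$ may only be defined on $(0,\infty)$, as in the $\log\det$ case. The restriction to nonzero eigenvalues of the positive semidefinite matrices $AA^\top$ and $A^\top A$ removes any problem there, since all retained eigenvalues are strictly positive.
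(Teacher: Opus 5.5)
Your proposal is correct and follows essentially the same route as the paper: against the identity, $d_{\zeta^+}(X\|I)$ collapses to $\sum_{i:\lambda_i(X)\neq 0}\operatorname{div}_\psi(\lambda_i(X)\|1)$, and $AA^\top$ and $A^\top A$ share their nonzero eigenvalues with multiplicity. You make explicit two points the paper leaves implicit: the step $\sum_j (u_i^\top v_j)^2=\|u_i\|^2=1$, which makes the formula independent of the chosen eigenbases, and the matching of multiplicities via the SVD.
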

\begin{proof}
When $Y = I_q, d_{\zeta^+}(X|| Y)$ becomes $\sum_{i: \lambda_i(X)\neq 0} \|u_i\|_2^2 \text{div}_\psi(\lambda_i(X)|| 1) 
= \sum_{i: \lambda_i(X)\neq 0} \text{div}_\psi(\lambda_i(X)|| 1)$ thus only depends on the eigenvalues of $X$.
Using that the nonzero eigenvalues of $AA^\top$ and $A^\top A$ are the same concludes.
\end{proof}
Now we apply this result to the logdet divergence and we have
\begin{lemma}
    \label{lemma:totheloss}
    Take $\zeta(X) = -\log \det(X)$. Then, for any $A \in \R^{q \times p}$, 
    \begin{equation}
        \begin{split}
            d_{\zeta^+}(AA^\top || I_q) &= \tr(A^\top A) - \sum_{i: \lambda_i(A^\top A) \neq 0} \log \lambda_i(A^\top A) - \rank(A)\\
            &=\tr(A^\top A) - \log {\det}_+(A^\top A) - \rank(A) \,,
        \end{split}
    \end{equation}
    where $\det_+(X)$ it the generalized determinant, that is the product of the nonzeros eigenvalues of $X$.
\end{lemma}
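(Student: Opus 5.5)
By \cref{lemma:reverse_transpose}, applied with $\psi(x) = -\log x$, it suffices to compute $d_{\zeta^+}(A^\top A \| I_p)$. The scalar divergence here is the Bregman divergence of $\psi(x)=-\log x$ on $(0,\infty)$. Since $\psi'(y) = -1/y$, at $y=1$ it reads
\begin{equation*}
\text{div}_\psi(x\|1) = -\log x + \log 1 + (x-1) = x - \log x - 1 .
\end{equation*}

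Next, as in the proof of \cref{lemma:reverse_transpose}, we specialize \eqref{eq:divplusdivergence} to $Y=I_p$. We choose the eigenvectors $v_j$ of $I_p$ to be an orthonormal eigenbasis of $X = A^\top A$, which is legitimate since every orthonormal basis diagonalizes $I_p$. Then $(u_i^\top v_j)^2 = \delta_{ij}$, and all eigenvalues of $I_p$ are nonzero. This gives
\begin{equation*}
d_{\zeta^+}(A^\top A\|I_p) = \sum_{i:\lambda_i\neq 0} \bigl(\lambda_i - \log\lambda_i - 1\bigr),
\end{equation*}
where $\lambda_i = \lambda_i(A^\top A)$. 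The eigenvalues are nonnegative because $A^\top A$ is positive semi-definite, so the nonzero ones are positive and the logarithms are well defined.

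It remains to identify the three resulting sums.
\begin{itemize}
\item Since the zero eigenvalues contribute nothing to a sum of eigenvalues, $\sum_{i:\lambda_i\neq0}\lambda_i = \tr(A^\top A)$.
\item The logarithmic term is $\sum_{i:\lambda_i\neq 0}\log\lambda_i = \log\det_+(A^\top A)$, by the definition of the generalized determinant.
\item The number of nonzero eigenvalues of the symmetric matrix $A^\top A$ equals $\rank(A^\top A) = \rank(A)$. Indeed, $\ker(A^\top A) = \ker A$, because $A^\top A x = 0$ implies $\|Ax\|^2 = x^\top A^\top A x = 0$.
\end{itemize}
Combining these three identities gives both displayed expressions in the statement.

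The main point needing care is the convention in \eqref{eq:divplusdivergence}. The eigenvectors of $I_p$ are not unique, so one must fix a choice, as above, that makes the cross terms $(u_i^\top v_j)^2$ vanish for $i\neq j$. Equivalently, one can use $\sum_j (u_i^\top v_j)^2 = \|u_i\|^2 = 1$, which holds because $\text{div}_\psi(\lambda_i\|\lambda_j(I))$ does not depend on $j$. That is exactly the simplification already used in \cref{lemma:reverse_transpose}. Everything else is routine.
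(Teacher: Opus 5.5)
Your proof is correct and follows the paper's own argument. You specialize the spectral formula to $Y=I$ to get $\sum_{i:\lambda_i(A^\top A)\neq 0}\text{div}_\psi(\lambda_i(A^\top A)\|1)$ with $\text{div}_\psi(x\|1)=x-\log x-1$, then identify the trace, $\log\det_+$, and $\rank(A^\top A)=\rank(A)$. You also spell out details the paper leaves implicit: the nonzero eigenvalues are positive, $\ker(A^\top A)=\ker A$, and the eigenvectors of $I_p$ must be chosen. One small correction to your last paragraph: $\sum_j (u_i^\top v_j)^2=1$ holds because the $v_j$ are orthonormal; the $j$-independence of $\text{div}_\psi$ is only what lets you factor it out.
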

\begin{proof}
    Based on the proof of \Cref{eq:divplusdivergence} we have $d_{\zeta^+}(AA^\top || I_q)= \sum_{i: \lambda_i(A^\top A)\neq 0} \text{div}_\psi(\lambda_i(A^\top A)|| 1)$ 
    where $\psi = - \log$. So in this case $\text{div}_\psi(x||y) = -\log(x)+\log(y)+\frac{1}{y}(x-y) = -\log(\frac{x}{y})+\frac{x}{y} - 1$. 
    Using this formula, $\rank(A^\top A)= \rank(A)$ and $\sum_{i:\lambda_i(A^\top A) \neq 0} \lambda_i(A^\top A) = \tr(A^\top A)$ gives the result.
\end{proof}
\section{\pathcond optimization problem \label{sec:pathcondopt}}

We detail here the definition of the \pathcond optimization problem. We recall that $P_\theta = \partial \Phi(\theta) \partial \Phi(\theta)^\top \in \R^{q \times q}$ is the path-kernel. 
We define $G = G_\theta = \partial \Phi(\theta)^\top \partial \Phi(\theta) \in \R^{p \times p}$ 
and consider $\zeta(X) = -\logdet(X)$ and $d_\zeta, d_{\zeta^+}$ the associated spectral divergence 
and its version that considers only nonzero eigenvalues, see Definition \ref{def:plusdiv}. 
Furthermore, we place ourselves in the regime $q \geq p$ (which is the case for most DAG ReLU networks, except very small ones).

\begin{proposition}
\label{prop:equiv_convex_problem}
Denote $r = \rank(\partial \Phi(\theta))$ and let $\partial \Phi(\theta) = U \Sigma V^\top$ be a compact SVD decomposition of $\partial \Phi(\theta)$, i.e., 
$U \in \R^{q \times r}, V \in \R^{p \times r}, U^\top U = V^\top V = I_r, \Sigma = \diag(\sigma_1, \cdots, \sigma_r) > 0$.
For any admissible $D = \diag(d_1, \cdots, d_p) \in \Dcal$ and $\alpha > 0$, we have  
\begin{equation}\label{eq:FirstFormula}
    d_{\zeta^+}(\alpha P_{D \theta}|| I_q) = d_{\zeta^+}(\alpha G_{D \theta}|| I_p)  
    = 
\alpha
    \sum_{i=1}^p d_i^{-2} G_{ii} 
    - r \log(
\alpha
    ) - \log {\det}(\Sigma^2) - \log {\det}(V^\top D^{-2} V)  - r\,.
\end{equation}
When $r = p$, the matrix $G_{D \theta}$ is positive definite for any $D \in \Dcal$ and 
\begin{equation}\label{eq:DetDivFullRank}
    d_{\zeta^+}(\alpha P_{D \theta}|| I_q) 
=  d_{\zeta^{+}}(\alpha G_{D \theta}|| I_p) 
    = d_{\zeta}(\alpha G_{D \theta}|| I_p)\,.
\end{equation}
Moreover, the minimization over $\alpha > 0, D\in \Dcal$ of these quantities is equivalent  to the problem
 \begin{equation}
    \label{eq:almostfinal}
    \min_{D'\in \Dcal} \ p \log(\sum_{i=1}^p d'_i G_{ii}) - \sum_{i = 1}^p \log(d'_i)\,.
\end{equation}
Precisely if $D'$ solves \eqref{eq:almostfinal} then 
$D := D'^{-1/2}$ and  $\alpha = \frac{p}{\sum_i d^{-2}_i G_{ii}}$
minimize the previous quantities.
\end{proposition}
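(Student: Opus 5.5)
The plan is to reduce everything to the Jacobian identity of Lemma~\ref{lemma:path_properties}, $\partial\Phi(D\theta)=\partial\Phi(\theta)D^{-1}$. With $A:=\partial\Phi(\theta)=U\Sigma V^\top$, this gives $G_{D\theta}=D^{-1}GD^{-1}=D^{-1}V\Sigma^2V^\top D^{-1}$. The first equality in \eqref{eq:FirstFormula} is Lemma~\ref{lemma:reverse_transpose} applied to the matrix $\sqrt{\alpha}\,AD^{-1}$. For the explicit expression we invoke Lemma~\ref{lemma:totheloss} with the same matrix $\sqrt{\alpha}\,AD^{-1}$, so that the quantity equals
\[
\alpha\tr(D^{-1}GD^{-1})-\log{\det}_+(\alpha D^{-1}GD^{-1})-\rank(AD^{-1}).
\]
The trace term is $\alpha\sum_i d_i^{-2}G_{ii}$, since $D$ is diagonal. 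The rank term equals $r$, because $D^{-1}$ is invertible.

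The main step is computing the generalized determinant. We write $\alpha D^{-1}GD^{-1}=(\sqrt{\alpha}\,D^{-1}V\Sigma)(\sqrt{\alpha}\,D^{-1}V\Sigma)^\top$ and use that the nonzero eigenvalues of $MM^\top$ and $M^\top M$ coincide. Here $M^\top M=\alpha\,\Sigma V^\top D^{-2}V\Sigma$ is an $r\times r$ matrix. It is invertible because $D^{-1}V$ has full column rank $r$ and $\Sigma>0$. Hence all $r$ of its eigenvalues are nonzero and
\[
{\det}_+=\det(M^\top M)=\alpha^r\det(\Sigma^2)\det(V^\top D^{-2}V).
\]
Taking logs yields \eqref{eq:FirstFormula}. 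The one point needing care is that $\det_+$ of the $p\times p$ matrix counts exactly $r$ nonzero eigenvalues, which follows from the rank argument.

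When $r=p$, $G$ is positive definite and so is $D^{-1}GD^{-1}$ by congruence with an invertible matrix. All eigenvalues are then nonzero, so $d_{\zeta^+}=d_\zeta$, giving \eqref{eq:DetDivFullRank}. Moreover $V$ is square orthogonal, so $\log\det(V^\top D^{-2}V)=-2\sum_i\log d_i$.

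For the reduction to \eqref{eq:almostfinal}, we first minimize over $\alpha$ for fixed $D$. The map $\alpha\mapsto\alpha S-p\log\alpha$, with $S=\sum_i d_i^{-2}G_{ii}$, is strictly convex. Its minimizer is $\alpha=p/S$, with value $p+p\log S-p\log p$. Dropping the constants, we are left with minimizing $p\log\sum_i d_i^{-2}G_{ii}+2\sum_i\log d_i$.

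Now substitute $D'=D^{-2}$. This keeps us in $\Dcal$: by Lemma~\ref{lemma:resc_mat}, $\Dcal$ is parametrized by $\lambda\in(\R^+_*)^H$, and powers act as $\lambda\mapsto\lambda^{-2}$. Equivalently, Lemma~\ref{lemma:dcal_characterization} shows that $\Phi(D)=\mathbf 1$ is preserved under entrywise powers. Since $2\sum_i\log d_i=-\sum_i\log d'_i$, the objective becomes exactly \eqref{eq:almostfinal}. Inverting the substitution gives $D=D'^{-1/2}$ and $\alpha=p/\sum_i d_i^{-2}G_{ii}$, as claimed.

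One caveat: in the rank-deficient case the term $\log\det(V^\top D^{-2}V)$ does not split into a sum of $\log d_i$. So the equivalence with \eqref{eq:almostfinal} should be read in the full-rank case $r=p$, and I expect to state it under that hypothesis.
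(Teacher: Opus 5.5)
Your proof is correct and follows the paper's route: the Jacobian identity $\partial\Phi(D\theta)=\partial\Phi(\theta)D^{-1}$, then Lemma~\ref{lemma:totheloss} applied to $\sqrt{\alpha}\,\partial\Phi(\theta)D^{-1}$, then the compact SVD to evaluate ${\det}_+$, then optimizing over $\alpha$ and substituting $D'=D^{-2}$; your $MM^\top$ versus $M^\top M$ argument is a tidier version of the paper's explicit eigenvector argument. Your caveat also matches the paper, whose proof establishes the reduction to \eqref{eq:almostfinal} only for $r=p$ and treats the rank-deficient case in a separate remark.
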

\begin{proof}
First we recall that ${\det}_+$ is the product of the nonzero eigenvalues. 
First, we have from Lemma \ref{lemma:path_properties} $\partial \Phi(D\theta) = \partial \Phi(\theta) D^{-1}$.
Using this property, Lemma \ref{lemma:totheloss} with $A = \sqrt{\alpha}\partial \Phi(\theta) D^{-1}$, and the commutation property of the trace, we have
\begin{equation}
    \begin{split}
    d_{\zeta^+}(\alpha P_{D \theta}|| I_q) &= d_{\zeta^+}(\alpha \partial \Phi(\theta) D^{-1} (\partial \Phi(\theta) D^{-1})^\top|| I_q) \\
    &= d_{\zeta^+}(A A^\top|| I_q) \\
    &= \tr(A^\top A) - \log {\det}_+(A^\top A) - \rank(A)  \\
    &= \alpha \tr(D^{-2} G)   - r \log(
\alpha
    ) - \log {\det}_+(D^{-1} G D^{-1}) - r\,.
    \end{split}
\end{equation} 
Using the compact singular value decomposition of $\partial \Phi(\theta)$ we have that $D^{-1} G D^{-1} = D^{-1} V \Sigma^2 V^\top D^{-1}$. 
Since $V$ has orthogonal columns and $D$ is invertible, the nonzero eigenvalues of $D^{-1} V \Sigma^2 V^\top D^{-1}$ 
are the same as the nonzero eigenvalues of $\Sigma^2 V^\top D^{-2} V$. 
Indeed, if $\lambda$ is such an eigenvalue then $D^{-1} V \Sigma^2 V^\top D^{-1} x = \lambda x \implies V \Sigma^2 V^\top D^{-1} x = \lambda D x$. 
Thus $Dx \in \operatorname{Im}(V)$ and there is $y$ such that $Dx = V y$ that is $x = D^{-1} V y$. 
Thus, $V \Sigma^2 V^\top D^{-1} D^{-1} V y = \lambda D D^{-1} V y$ that is $V \Sigma^2 V^\top D^{-2}V y = \lambda  V y$, 
left multiplying by $V^\top$ gives that $\lambda$ is an eigenvalue of $\Sigma^2 V^\top D^{-2} V$. The converse is straightforward. 

Using the definition of ${\det}_+$ we get that 
\begin{equation}
    \label{eq:propgeneralizeddet}
    {\det}_+(D^{-1} G D^{-1}) = {\det}_+(\Sigma^2 V^\top D^{-2} V) = {\det}(\Sigma^2 V^\top D^{-2} V) = \det(\Sigma^2) \det(V^\top D^{-2} V)\,,
\end{equation}
where we used that $\Sigma^2 (V^\top D^{-2} V)$ is invertible since both $\Sigma$ and $V^\top D^{-2} V$ are invertible 
(the latter is positive definite because $D > 0$).
This gives the first formula~\eqref{eq:FirstFormula}. 
Consequently, for any diagonal matrix $D$ we have
\begin{equation}\label{eq:MinAlphaOnly}
    \min_{\alpha > 0} d_{\zeta^+}(\alpha P_{D \theta}|| I_q) = - r \log(r) + r \log(\sum_{i=1}^p d_i^{-2} G_{ii}) - \log {\det}(\Sigma^2) - \log {\det}(V^\top D^{-2} V)\,,
\end{equation}
as the first order conditions for the loss in $\alpha$ 
give $\alpha = \frac{r}{\sum_i d_i^{-2} G_{ii}}$ 
(the $G_{ii} \geq 0$ since the matrix is positive semi-definite).

When $r = p$, $G$ is full rank and since $G_{D \theta} = D^{-1} G D^{-1}$ it follows that $G_{D \theta}$ is positive definite hence $d_{\zeta+}(\alpha G_{D\theta}|| I_{p}) = d_{\zeta}(\alpha G_{D\theta}|| I_{p})$ for any $\alpha>0$ and any diagonal matrix $D$. 
In this case, since $V$ is square and unitary, $\log \det (V^{\top} D^{-2} V) =  \log \det (D^{-2}) + \log \det (VV^{\top}) = \log \det (D^{-2})$. In light of~\eqref{eq:MinAlphaOnly}, the minimization over $\alpha>0$ and $D \in \mathcal{D}$ of $d_{\zeta^+}(\alpha P_{D \theta}|| I_q)$ thus corresponds to choosing $\alpha = \frac{r}{\sum_i d_i^{-2} G_{ii}}$ where $D \in \mathcal{D}$ minimizes the sum of terms that depend on $D$ in~\eqref{eq:MinAlphaOnly}: $r \log \sum_{i=1}^{p} d_{i}^{-2} G_{ii} -\log \det (D^{-2})$. 
This corresponds to~\eqref{eq:almostfinal} 
with the change of variable $D' = D^{-2}$ (observe that $D' \in \mathcal{D}$ if, and only if, $D \in \mathcal{D}$) and the expression of the determinant of a diagonal matrix.
\end{proof}
\begin{remark}
    With the same reasoning, when $r=\rank(\partial \Phi(\theta)) < p$ the minimization of~\eqref{eq:FirstFormula} over $\alpha,D$ is equivalent to 
     \begin{equation}
    \min_{D'\in \Dcal} \ r \log(\sum_{i=1}^p d'_i G_{ii}) - \log {\det}(V^\top D' V)\,.
    \end{equation}
\end{remark}

We now prove that the optimization problem \eqref{eq:almostfinal} can be rewritten as \eqref{eq:optim_with_bz} and that it has a solution. 
\begin{lemma}
    \label{lemma:rewriting}
    Consider $B \in \R^{p \times H}$ with entries in $\{1,-1,0\}$ indicating whether a parameter enters ($-1$), 
leaves ($1$), or is unrelated to ($0$) a given neuron.
Precisely, $B = (b_1, \cdots, b_H)$ with
\begin{equation}
    \label{eq:def_B}
\forall \text{ neuron } h, \
    (b_h)_{i} = \begin{cases}
        -1 & \text{if } i \in \mathrm{in}_h\\
        1 & \text{if } i \in \mathrm{out}_h\\
        0 & \text{otherwise}.
    \end{cases}
\end{equation}
where $\mathrm{in}_h$ and $\mathrm{out}_h$ are formally defined in \cref{def:neur_resc}.

Then $\Dcal = \{\diag(\exp(Bu)), u \in \R^H\}$. 
Thus \eqref{eq:almostfinal} is equivalent to 
\begin{equation}
\label{eq:optim_with_bz_2}
\min_{u \in \R^{H}} \ F(u)\quad \text{with}\ F(u) :=p\log(\sum_{i=1}^{p} e^{(Bu)_i} G_{ii}) - \sum_{i=1}^{p} (Bu)_i\,,
\end{equation}
in the sense that if $u$ solves \eqref{eq:optim_with_bz_2} then $D' := \diag(\exp(Bu))$ solves~\eqref{eq:almostfinal}.
\end{lemma}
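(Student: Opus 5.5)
The proof has two parts. First we establish the set identity $\Dcal = \{\diag(\exp(Bu)), u \in \R^H\}$. Then we transport the objective of \eqref{eq:almostfinal} through this parametrization.

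For the identity, we use \cref{lemma:resc_mat}: every $D \in \Dcal$ can be written as $D = \prod_{h \in \mathcal{H}} D_{\lambda_h,h}$ with $\lambda \in (\R_*^+)^H$. Set $u_h := -\log \lambda_h$. The sign is chosen so that it matches the convention of \eqref{eq:def_B}, where incoming parameters carry $-1$ and get multiplied by $\lambda_h = e^{-u_h}$. Outgoing parameters carry $+1$ and get multiplied by $1/\lambda_h = e^{u_h}$.

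By \cref{def:neur_resc}, the $i$-th diagonal entry of $D_{\lambda_h,h}$ is $e^{(b_h)_i u_h}$ in all three cases: $i \in \mathrm{in}_h$, $i \in \mathrm{out}_h$, and $i$ unrelated to $h$. Diagonal matrices multiply entrywise, so
\begin{equation*}
D_{ii} = \prod_h e^{(b_h)_i u_h} = e^{\sum_h B_{ih} u_h} = e^{(Bu)_i}.
\end{equation*}
This gives $\Dcal \subseteq \{\diag(e^{Bu})\}$. Conversely, any $u \in \R^H$ defines $\lambda_h := e^{-u_h} > 0$, and the same computation shows $\diag(e^{Bu}) = \prod_h D_{\lambda_h,h} \in \Dcal$. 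The map $u \mapsto \lambda$ is a bijection $\R^H \to (\R_*^+)^H$, so the parametrization is onto. Injectivity is not needed, because we only need that the image equals $\Dcal$.

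For the reformulation, substitute $D' = \diag(e^{Bu})$, so that $d'_i = e^{(Bu)_i}$. Then $\log d'_i = (Bu)_i$, and the objective of \eqref{eq:almostfinal} becomes exactly $F(u)$. Since $u \mapsto \diag(e^{Bu})$ maps $\R^H$ onto $\Dcal$, the two problems have the same infimum. Suppose $u^\star$ minimizes $F$ and take any $D' \in \Dcal$. Write $D' = \diag(e^{Bu})$ for some $u$. Then the objective at $\diag(e^{Bu^\star})$ equals $F(u^\star) \le F(u)$, which is the objective at $D'$. Hence $\diag(e^{Bu^\star})$ solves \eqref{eq:almostfinal}.

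The only delicate step is bookkeeping: the sign convention of $B$ versus the factor $\lambda$ in \cref{def:neur_resc}, and making sure each parameter's total exponent is the sum over the neurons it touches. For instance, an edge between two hidden neurons $u \to v$ lies in $\mathrm{out}_u$ and in $\mathrm{in}_v$, so it picks up both contributions, consistent with the edge condition \eqref{eq:edge_condition} in the proof of \cref{lemma:dcal_characterization}. Biases lie in $\mathrm{in}_v$ only. Input and output neurons carry no $u$-variable, which matches the convention $\lambda = 1$ there. Once this is checked, the rest is direct substitution.
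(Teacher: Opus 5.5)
Your proposal is correct and follows essentially the same route as the paper: decompose $D$ via \cref{lemma:resc_mat}, set $\lambda_h = e^{-u_h}$ so that each $D_{\lambda_h,h}$ has diagonal $\exp(u_h b_h)$, multiply entrywise to get $\exp(Bu)$, and obtain the converse inclusion from the group property. Your explicit argument that minimizers of $F$ yield minimizers of \eqref{eq:almostfinal}, via the surjective parametrization $u \mapsto \diag(e^{Bu})$, spells out a step the paper leaves implicit.
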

\begin{proof}
Let $D' \in \Dcal$, and $d$ be the vector of $\mathbb{R}^p$ such that $D' = \diag(d)$.
We want to show that $d$ can be written as $d = \exp(Bu)$.
Thanks to \cref{lemma:resc_mat}, we know that there exists $\boldsymbol{\lambda} \in (\mathbb{R}^{+*})^{H}$ such that 
\begin{equation}
    \label{eq:resc_vec}
    d = \bigodot_{v \in \mathcal{H}} d_{\lambda_v, v}
\end{equation}
Where $\bigodot$ denotes the Hadamard coordinate wise product and $d_{\lambda_v, v}$ is the diagonal of the neuron-wise rescaling matrix $D_{\lambda_v,v}$ introduced in \cref{def:neur_resc}.
For $v \in \mathcal{H}, \lambda_v \in \mathbb{R^{+*}}$, by definition, we get 
\begin{equation}
    (d_{\lambda_v, v})_i = \begin{cases}
        \lambda_v & \text{if } i \in \mathrm{in}_v\\
        1/\lambda_v & \text{if } i \in \mathrm{out}_v\\
        1 & \text{otherwise}.
    \end{cases}
\end{equation}
But because $\lambda_v \in \mathbb{R^{+*}}$, there exists an $u_v \in \mathbb{R}$ such that $\lambda_v = \exp(-u_v)$. We can rewrite $d_{\lambda_v, v}$ as 
\begin{equation}
    (d_{\lambda_v, v})_i = \begin{cases}
        \exp(-u_v) & \text{if } i \in \mathrm{in}_v\\
        \exp(u_v) & \text{if } i \in \mathrm{out}_v\\
        \exp(0) & \text{otherwise}.
    \end{cases}
\end{equation}
We can rewrite the previous equation with a column of $B$, precisely:
\begin{equation}
    d_{\lambda_v, v} = \exp(u_v b_v)
\end{equation}
where the $\exp$ is taken coordinate-wise. 
Now let $i$ an integer with $1 \leq i \leq p$, thanks to \eqref{eq:resc_vec}, we get
\begin{equation}
    d_i = \prod_{h=1}^H (d_{\lambda_h, h})_i = \prod_{h=1}^H \exp(u_h b_h)_i = \prod_{h=1}^H \exp(B_{ih} u_h)
\end{equation}
So, 
\begin{equation}
    d_i = \exp\big(\sum_{h=1}^H B_{ih} u_h\big)
\end{equation}
We can conclude that
\begin{equation}
    d = \exp(Bu)
\end{equation}
Vice-versa, if $d$ writes this way it is a Hadamard product of $d_{h} := \exp(b_{h}u_{h})$, so that $D' = \diag(d)$ is a product of diagonal matrices which can easily be checked to belong to $\Dcal$. We conclude using that $\Dcal$ is a group.
\end{proof}
To prove existence of a solution we will use the following result.
\begin{lemma}
\label{lemma:coercive_property}
If $G_{ii} > 0$ for each $i$ and $B \neq 0$ satisfies $1 \not\in \operatorname{span}(B)$, then the function $F$ in \eqref{eq:optim_with_bz_2} is coercive.
\end{lemma}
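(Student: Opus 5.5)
The idea is to bound $F$ from below by a positively homogeneous function of $z := Bu$ that vanishes only on constant vectors. Then the hypothesis $\mathbf{1} \notin \operatorname{span}(B)$, together with a compactness argument on the unit sphere of $\operatorname{range}(B)$, yields linear growth.

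\textbf{Step 1: lower bound via log-sum-exp.} Write $g_{\min} := \min_i G_{ii} > 0$. Since every term of the sum is nonnegative, keeping only the largest one gives
\begin{equation*}
\sum_{i=1}^p e^{z_i} G_{ii} \;\geq\; g_{\min}\, e^{\max_i z_i}.
\end{equation*}
It follows that
\begin{equation*}
F(u) \;\geq\; p\log g_{\min} + p \max_i z_i - \sum_{i=1}^p z_i \;=\; p\log g_{\min} + h(z),
\end{equation*}
where $h(z) := \sum_{i=1}^p \bigl(\max_j z_j - z_i\bigr)$.

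\textbf{Step 2: properties of $h$.} The function $h$ has the following properties:
\begin{itemize}
\item it is continuous;
\item it is nonnegative;
\item it is positively homogeneous of degree one, i.e.\ $h(tz) = t\,h(z)$ for $t \geq 0$;
\item $h(z) = 0$ exactly when all coordinates of $z$ equal their maximum, that is, when $z \in \operatorname{span}(\mathbf{1})$.
\end{itemize}
Restrict $h$ to the unit sphere $S$ of the linear subspace $\operatorname{range}(B) = \operatorname{span}(B)$, which is compact. Since $\mathbf{1} \notin \operatorname{span}(B)$ and $\operatorname{span}(B)$ is a subspace, we have $\operatorname{span}(B) \cap \operatorname{span}(\mathbf{1}) = \{0\}$. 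Hence $h > 0$ on $S$, and by compactness $c := \min_S h > 0$. Homogeneity then gives $h(z) \geq c\|z\|_2$ for all $z \in \operatorname{range}(B)$. Combining with Step 1,
\begin{equation*}
F(u) \geq p \log g_{\min} + c\,\|Bu\|_2 .
\end{equation*}

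\textbf{Step 3: from $\|Bu\|$ to $\|u\|$.} This is the main obstacle: the bound must be transferred to $\|u\|$, which requires $B$ to have full column rank. I would argue this from the structure of $B$.
\begin{itemize}
\item The sets $\mathrm{in}_h$ are pairwise disjoint, since each parameter (an edge weight or a bias) enters exactly one neuron.
\item Each $\mathrm{in}_h$ is nonempty, since every hidden neuron has an antecedent.
\end{itemize}
Take $i \in \mathrm{in}_h$; I would choose it to be the bias index or an incoming edge index for $h$. For the coordinate $(Bu)_i$, the entry $B_{ih}$ equals $-1$. The contributions of the other columns come only through $\mathrm{out}$ sets, so this coordinate alone does not isolate $u_h$.

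To handle this, I would instead order neurons topologically and use a triangular argument. Suppose $Bu = 0$ and let $h$ be the last hidden neuron. For any edge $h \to v$ with $v$ an output neuron, the corresponding coordinate gives $(Bu)_i = u_h = 0$. Proceeding backwards through the topological order, each subsequent $u_{h'}$ is forced to vanish in the same way. Hence $\ker B = \{0\}$, and $\sigma_{\min}(B) > 0$.

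This yields $F(u) \geq p\log g_{\min} + c\,\sigma_{\min}(B)\|u\|_2 \to \infty$ as $\|u\| \to \infty$, which is coercivity.

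If injectivity were not available in the intended generality, I would state the conclusion modulo $\ker B$. Since $F$ is constant along $\ker B$, coercivity on $(\ker B)^\perp$ suffices for the existence argument that follows.
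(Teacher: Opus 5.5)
Your proof is correct and has the same structure as the paper's proof. Both bound the log-sum-exp by its largest term to get the positively homogeneous remainder $\sum_i(\max_k z_k - z_i)$, then use compactness of a unit sphere and $\mathbf{1}\notin\operatorname{span}(B)$ to obtain linear growth.

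The real difference is your Step~3, and it improves on the paper rather than just rephrasing it. The paper minimizes over the unit sphere in $u$-space. It argues that $\beta=0$ would force $Bu$ to be constant, and rules this out using only $\mathbf{1}\notin\operatorname{span}(B)$. That overlooks the constant $0$, i.e.\ $u\in\ker B\setminus\{0\}$.

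For an abstract $B$ the lemma as literally stated is false: two equal columns give a direction along which $F$ is constant. So injectivity of the specific $B$ from \eqref{eq:def_B} is a needed ingredient, and your topological-order argument supplies it. With the convention $u=0$ off $\mathcal{H}$, one has $(Bu)_{a\to v}=u_a-u_v$. The last hidden neuron has at least one successor, and all of its successors are outputs, which gives $u_h=0$. Backward induction along the order then gives $u=0$.

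When hidden neurons carry biases this is even quicker than you suggest. A bias index of $h$ lies in no $\mathrm{out}$ set, so $(Bu)_{b_h}=-u_h$ isolates $u_h$ directly. Your DAG argument has the advantage of also covering bias-free networks.

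Working on the sphere of $\operatorname{range}(B)$ and then paying a factor $\sigma_{\min}(B)$ separates the two ingredients cleanly. Those ingredients are positivity of $h$ away from constant vectors and injectivity of $B$. The paper's single compactness step in $u$-space conflates them, which is exactly how the gap arises.
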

\begin{proof}
$F$ can be written as $F(u) = f(Bu)$ where $f(x) = p \log(\sum_{i=1}^{p} e^{x_i} G_{ii}) - \sum_i x_i$.
We have that $\sum_{i=1}^{p} e^{x_i} G_{ii} \geq \max_{k \in \integ{p}} e^{x_k} G_{kk}$ thus 
$f(x) \geq p \max_{k \in \integ{p}} \{x_k + \log(G_{kk})\} - \sum_{i} x_i \geq p c  + \sum_i(\max_k x_k - x_i)$ 
where $c = \min_{k} \log(G_{kk})>-\infty$ using the hypothesis on $G$.
Denoting $e_i$ the i-th standard basis vector, this gives
\begin{equation}
\label{eq:bound_on_f}
F(u) \geq p c + \sum_{i=1}^{p} \left(\max_{k}\{\langle e_k^\top, B u\rangle\} - \langle e_i^\top, Bu\rangle\right)\,.
\end{equation}
We define
\begin{equation}
\forall u, \ \alpha(u) = \sum_{i=1}^{p} \left(\max_{k}\{\langle e_k^\top, B u\rangle\} - \langle e_i^\top, Bu\rangle \right)\,.
\end{equation}
Clearly for any $t \geq 0, \ \alpha(t\cdot u) = t\cdot \alpha(u)$.
To conclude that the function is coercive we introduce
\begin{equation}
\beta := \min_{\|u\|_2 = 1} \ \alpha(u)\,.
\end{equation}
Since $\alpha$ is continuous, this minimum is attained.
Also, for any $u \neq 0$,
\begin{equation}
F(u) = F\left(\|u\|_2 \cdot \frac{u}{\|u\|_2}\right) \geq pc + \alpha\left(\|u\|_2 \cdot \frac{u}{\|u\|_2}\right) 
= pc + \|u\|_{2} \cdot \alpha\left(\frac{u}{\|u\|_2}\right) \geq pc + \beta \|u\|_2\,.
\end{equation}
Proving that $\beta > 0$ is thus sufficient to ensure the coercivity of $F$.
For this, first note that for any $u, i \in \integ{p}$,
\begin{equation}
\max_{k}\{\langle e_k^\top, B u\rangle\} - \langle e_i^\top, Bu\rangle \geq 0\,,
\end{equation}
hence $\beta \geq 0$, and we only need to rule out the case $\beta=0$. The above inequality indeed also shows $\beta = 0$ if and only if there exists $u$ such that $\forall i, \max_{k}\{\langle e_k^\top, B u\rangle\} = \langle e_i^\top, Bu\rangle$.
This condition is equivalent to the existence of $u$ such that $\langle e_i^\top, Bu\rangle = \operatorname{cte}, \forall i$,  
that is to say the existence of $u$ such that $B u = \operatorname{cte}$, which is not possible since $1 \not\in \operatorname{span}(B)$.
\end{proof}
To conclude we use the second lemma
\begin{lemma}
\label{lemma:notinB}
Consider $B$ as defined in \eqref{eq:def_B}. Then $1 \not\in \operatorname{span}(B)$.
\end{lemma}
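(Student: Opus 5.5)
We argue by contradiction: suppose there is $u \in \R^H$ with $Bu = \mathbf{1}$. The task is to find one parameter index $i$ at which $(Bu)_i = 1$ is incompatible with the structure of $B$ in \eqref{eq:def_B}. The natural witnesses are the edges touching the input or output layer, and paths through the DAG that link them. Throughout we use that every hidden neuron has both an antecedent and a successor (\Cref{sec:charac}). We also assume, as implicitly needed for the column structure to be meaningful, that $H \geq 1$ and that every parameter touches at least one hidden neuron. An edge from an input straight to an output has a zero row in $B$, which already gives $(Bu)_i = 0 \neq 1$.

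\textbf{Step 1: rows of $B$ in terms of $u$.} By \eqref{eq:def_B}, each parameter index $i$ has a row whose value we read off from the edge types:
\begin{itemize}
\item a bias $b_v$ of hidden $v$ gives $(Bu)_i = -u_v$;
\item an edge $v\to w$ with $v,w\in\mathcal H$ gives $(Bu)_i = u_v - u_w$;
\item an edge from an input to hidden $w$ gives $-u_w$;
\item an edge from hidden $v$ to an output gives $u_v$.
\end{itemize}
In other words, with the convention $u_v := 0$ for input and output neurons, every edge $v\to w$ satisfies $(Bu)_i = u_v - u_w$.

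\textbf{Step 2: telescoping along a path.} Take any hidden neuron $h$. Following antecedents backwards, and using acyclicity with finiteness, we reach an input neuron. Following successors forwards, we reach an output neuron. This yields a path $v_0 \to v_1 \to \cdots \to v_m$ with $v_0\in N_{\mathrm{in}}$, $v_m \in N_{\mathrm{out}}$ and $m\geq 2$. If $Bu = \mathbf 1$, then every edge on this path satisfies $u_{v_{\ell-1}} - u_{v_\ell} = 1$. Summing these identities telescopes to $u_{v_0} - u_{v_m} = m$. The left-hand side is $0 - 0 = 0$, while $m \geq 2$, which is a contradiction.

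Alternatively, the bias entries alone give a quick contradiction when biases are present: $-u_v = 1$ forces $u_v = -1$ for every hidden $v$. Then any edge from a hidden neuron to an output gives $(Bu)_i = u_v = -1 \neq 1$.

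\textbf{Main obstacle.} The only delicate point is the existence of the input-to-output path through a given hidden neuron, together with the convention that boundary neurons carry $u=0$. The former follows from the DAG being finite and acyclic and from hidden neurons having nonempty $\mathrm{ant}$ and $\mathrm{suc}$. The latter is exactly how \eqref{eq:def_B} treats edges incident to input and output neurons, since such neurons are not columns of $B$.
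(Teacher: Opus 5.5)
Your proof is correct, but it takes a different, more elementary route than the paper. The paper argues at the level of the rescaling group. If $Bu=\mathbf{1}$, then by \Cref{lemma:rewriting} the matrix $\diag(e^{Bu})=eI$ lies in $\Dcal$. By the inclusion $\Dcal\subseteq\Phi^{-1}(\mathbf{1})^+$ of \Cref{lemma:dcal_characterization}, which rests on the invariance $\Phi(D\theta)=\Phi(\theta)$ cited from prior work, one would then need $\Phi_\path(eI)=1$; but $\Phi_\path(eI)$ is $e$ raised to the number of parameters on $\path$. You instead read the rows of $B$ directly off \eqref{eq:def_B}. With boundary neurons pinned at $0$, each edge $v\to w$ contributes $u_v-u_w$, so the sum of $(Bu)_i$ along an input-to-output path telescopes to $0$, while $Bu=\mathbf{1}$ makes it equal to the path length.

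The two arguments are the same identity seen from two sides, since $\sum_{i\in\path}(Bu)_i=0$ is the logarithm of $\Phi_\path(\diag(e^{Bu}))=1$. In effect, you prove from scratch, for the one path you need, the piece of path-lifting invariance that the paper imports. Your version is self-contained: it does not rely on the rewriting lemma, the characterization of $\Dcal$, or the external invariance theorem. You also correctly justify its only combinatorial ingredient, the existence of an input-to-output path through a hidden neuron. The paper's version is shorter given its machinery and makes the conceptual reason explicit: a uniform scaling of all parameters can never preserve the path-lifting.

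Your side assumptions are unnecessary. For $H=0$ the span is $\{0\}$, parameters not touching a hidden neuron play no role in the telescoping, and $m\geq 1$ already yields the contradiction.
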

\begin{proof}
By contradiction, assume that $1 \in \operatorname{span}(B)$ then there is $u$ such that $Bu = 1$, and by \Cref{lemma:rewriting} the diagonal matrix $D := \diag (\exp(Bu)) = e^{1} I$ belongs to $\Dcal$ and thanks to \cref{lemma:dcal_characterization} it belongs to $\Phi^{-1}(\mathbf{1})^+$. However 
for any path $\path \in \pathset$, $\Phi_\path(e^{1}I) = \prod_{i \in p} e^{1}  \neq 1$
which leads to the desired contradiction. We conclude that $1 \not\in \operatorname{span}(B)$.
\end{proof}

\begin{corollary}
\label{lemma:existence_solution}
Suppose that $\forall i, \ G_{ii} > 0$. The optimization problem \eqref{eq:optim_with_bz} always has a solution.
\end{corollary}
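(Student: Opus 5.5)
The plan is to combine Lemma~\ref{lemma:coercive_property} and Lemma~\ref{lemma:notinB} with the standard fact that a continuous coercive function on $\R^H$ attains its minimum.

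First, the hypotheses of Lemma~\ref{lemma:coercive_property} hold.
\begin{itemize}
\item By assumption $G_{ii} > 0$ for every $i$.
\item Lemma~\ref{lemma:notinB} gives $\mathbf{1} \notin \operatorname{span}(B)$.
\item $B \neq 0$ as soon as there is at least one hidden neuron $h$, since $\mathrm{in}_h$ is nonempty (it contains the bias index, or an incoming edge because $\text{ant}(h) \neq \emptyset$), so the column $b_h$ has a $-1$ entry.
\end{itemize}
If $H = 0$ the problem is over $\R^0$ and is trivially solved by $u$ empty, so we may assume $H \geq 1$. Hence $F$ is coercive.

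Second, $F$ is continuous on $\R^H$. Indeed $u \mapsto Bu$ is linear, and $\sum_i e^{(Bu)_i} G_{ii} > 0$ because each $G_{ii} > 0$, so the logarithm is well-defined and smooth.

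Third, apply the standard argument. Fix $u_0$ and consider the sublevel set $S = \{u : F(u) \leq F(u_0)\}$. It is closed by continuity and bounded by coercivity (from the proof of Lemma~\ref{lemma:coercive_property}, $F(u) \geq pc + \beta\|u\|_2$ with $\beta > 0$). So $S$ is compact and nonempty, and $F$ attains its minimum on $S$. That minimizer is a global minimizer of \eqref{eq:optim_with_bz}.

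Optionally, one can also note that $F$ is convex: it is a log-sum-exp with positive weights composed with a linear map, minus a linear term. This is not needed for existence.

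Finally, via Lemma~\ref{lemma:rewriting} and Proposition~\ref{prop:equiv_convex_problem}, the minimizer yields a solution $D' = \diag(e^{Bu})$ of \eqref{eq:almostfinal}.

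No step is a real obstacle. The only point needing care is checking $B \neq 0$, i.e.\ the degenerate case with no hidden neurons, which is handled separately above.
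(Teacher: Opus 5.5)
Your route is the paper's: continuity of $F$ plus coercivity from Lemma~\ref{lemma:coercive_property} and Lemma~\ref{lemma:notinB}. You are right that convexity, which the paper also mentions, is not needed for existence. However, the step you rely on, $F(u)\ge pc+\beta\|u\|_2$ with $\beta>0$, does not follow from the hypotheses you verified, and $B\neq 0$ is not the relevant check. In that proof, $\beta=0$ if and only if some unit vector $u$ satisfies $Bu\in\R\mathbf{1}$. The hypothesis $\mathbf{1}\notin\operatorname{span}(B)$ only excludes $Bu=c\mathbf{1}$ with $c\neq 0$; it does not exclude $Bu=0$. So you need $\ker B=\{0\}$, and $B\neq 0$ does not give this. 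For example, if $B$ has columns $(1,-1)^\top$ and $(0,0)^\top$, then $B\neq 0$ and $\mathbf{1}\notin\operatorname{span}(B)$, yet $F$ is constant in the second coordinate and not coercive. The gap comes from the paper's Lemma~\ref{lemma:coercive_property}, whose stated hypotheses are too weak, so the paper's proof of the corollary has the same hole.

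The gap is easy to repair in either of two ways.

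(i) Show that the DAG matrix \eqref{eq:def_B} is injective. Suppose $Bu=0$ and go through the hidden neurons $h$ in topological order. Pick an antecedent $w$ of $h$; it exists and cannot be an output neuron. If $w\in N_{\text{in}}$, the row of the edge $w\to h$ reads $-u_h=0$. If $w$ is hidden, that row reads $u_w-u_h=0$, and $u_w=0$ is already known by induction. (When hidden neurons have biases, the bias rows give $-u_h=0$ directly.) Then $\beta>0$ and your sublevel-set argument goes through; $B\neq 0$ becomes superfluous.

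(ii) Avoid coercivity in $u$ altogether. Write $F=f\circ B$ and minimize $f$ over the subspace $V=\operatorname{range}(B)$. We have $f(x)\ge pc+\sum_i(\max_k x_k-x_i)$, and the last sum vanishes on $V$ only at $x=0$, since $V\cap\R\mathbf{1}=\{0\}$ by Lemma~\ref{lemma:notinB}. Hence $f|_V$ is coercive and attains its minimum at some $x^\star=Bu^\star$, and $u^\star$ minimizes $F$.

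Route (i) also yields coercivity of $F$ itself, which Lemma~\ref{lemma:calculate_degree_pol_2} invokes. Route (ii) shows that existence needs only $\mathbf{1}\notin\operatorname{range}(B)$.
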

\begin{proof}
Since the logsumexp function is convex \citep[Lemma 4]{gao2017properties}, 
one easily shows that the function $F$ is convex. It is also  continuous, and coercive by \Cref{lemma:coercive_property} and \Cref{lemma:notinB}. 
So there exists a minimizer.
\end{proof}

Finally we prove that \eqref{eq:optim_with_bz} can be solved via simple alternating minimization, where each minimization admits a closed form. 
\begin{lemma}
\label{lemma:calculate_degree_pol_2}
We note $\rho_{i,h} = e^{(Bu)_i-
B_{ih} u_{h}
 }$. 
If $g > 0$, for any neuron $h$, the problem $\min_{u_h \in \R} \ F(u_1, \cdots, u_h, \cdots, u_H)$ has a solution given 
by $\log(r_h)$ where $r_h$ is the unique positive root of the polynomial 
$B(A + p) X^2 + A D X + C(A-p)$ with 
\begin{equation*}
\begin{split}
\Ah &:= |\{i \in \inh\}| - |\{i \in \outh\}|, \ \Bh := \sum_{i \in \outh} \rho_{i,h} g_i \\
\Ch &:= \sum_{i \in \inh} \rho_{i,h} g_i, \ \Dh := \sum_{i \in \otherh} \rho_{i,h} g_i\,.
\end{split}
\end{equation*}
\end{lemma}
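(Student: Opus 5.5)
We fix all coordinates $u_{h'}$ with $h' \neq h$, reduce $F$ to a function of the single scalar $x := e^{u_h} > 0$, and show that its first-order condition is exactly the stated quadratic. By the definition \eqref{eq:def_B} of $B$, column $b_h$ has $B_{ih}=1$ for $i\in\outh$, $B_{ih}=-1$ for $i\in\inh$, and $B_{ih}=0$ otherwise. Writing $(Bu)_i = B_{ih}u_h + \log\rho_{i,h}$, where $\rho_{i,h}$ does not depend on $u_h$, the weighted sum in $F$ becomes
\[
\sum_{i=1}^p e^{(Bu)_i} g_i = \Bh\, x + \Ch\, x^{-1} + \Dh =: S(x).
\]
Similarly,
\[
\sum_i (Bu)_i = (|\outh| - |\inh|)\,u_h + \mathrm{const} = -\Ah \log x + \mathrm{const}.
\]
Hence, as a function of $t := u_h = \log x$,
\[
\varphi(t) := F(\ldots,t,\ldots) = p \log S(e^t) + \Ah\, t + \mathrm{const}.
\]

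Next we establish existence of a minimizer of $\varphi$. Since $g>0$ and every hidden neuron has nonempty $\inh$ and $\outh$ (it has an antecedent and a successor, see \cref{sec:charac}), we get $\Bh>0$ and $\Ch>0$, while $\Dh\ge 0$. We also have $|\Ah| < |\inh| + |\outh| \le p$, the first inequality being strict because both sets are nonempty. The function $\varphi$ is convex, as a restriction of the convex $F$ (\cref{lemma:existence_solution}). It is also coercive on $\R$: as $t\to+\infty$, $\varphi(t) = (p+\Ah)t + O(1)$, and as $t\to-\infty$, $\varphi(t) = (p-\Ah)|t| + O(1)$. Both slopes are positive, so a minimizer exists and is characterized by $\varphi'(t)=0$.

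It remains to compute the stationarity equation and count positive roots. Differentiating gives
\[
\varphi'(t) = p\,\frac{\Bh x - \Ch x^{-1}}{S(x)} + \Ah .
\]
Multiplying $\varphi'(t)=0$ by $x\,S(x) > 0$ yields
\[
p(\Bh x^2 - \Ch) + \Ah(\Bh x^2 + \Dh x + \Ch) = 0,
\]
that is, $\Bh(\Ah+p)x^2 + \Ah\Dh x + \Ch(\Ah - p) = 0$. From the bounds above, the leading coefficient $\Bh(\Ah+p)$ is positive and the constant term $\Ch(\Ah-p)$ is negative. The product of the two roots is therefore negative, so the roots are real with exactly one positive root $r_h$. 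Consequently, the stationary point, and hence the minimizer, is $u_h = \log r_h$.

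The only delicate step is the sign bookkeeping, namely $\Bh,\Ch>0$ and $|\Ah|<p$. This is what guarantees both coercivity in $t$ and uniqueness of the positive root, and it relies on every hidden neuron having at least one incoming and one outgoing parameter.
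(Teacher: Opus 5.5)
Your proof is correct and follows the paper's argument. You restrict to $u_h$, write the weighted sum as $\Bh e^{u_h}+\Ch e^{-u_h}+\Dh$, clear denominators in the stationarity condition to obtain the quadratic, and use $\Bh(\Ah+p)>0>\Ch(\Ah-p)$ to single out the unique positive root. The differences are minor: you obtain existence of the coordinatewise minimizer from the one-dimensional slopes $p\pm\Ah$ rather than from the global coercivity of $F$, and you spell out the disjoint, nonempty $\inh,\outh$ argument behind the paper's ``one easily checks''.
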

\begin{proof}
We note, $g_i = G_{ii}, f(x) = p\log(\sum_i e^{x_i} g_i) - \sum_i x_i$ such that $F(u) = f(Bu)$. We have first
\begin{equation}
\nabla f(x) = \frac{p}{\langle e^{x} \odot g, 1\rangle}(e^{x} \odot g) - 1\,.
\end{equation}
Since $\nabla F(u) = B^\top \nabla f(Bu)$ the $h$-th coordinate of this gradient is
\begin{equation}
\label{eq:grad_h}
\begin{split}
(\nabla F(u))_h &= \langle b_h, \nabla f(Bu) \rangle =  \frac{p}{\langle e^{Bu} \odot g, 1\rangle} 
\langle b_h, e^{Bu} \odot g \rangle -  \langle b_h, 1\rangle\\
&= p\frac{\sum_{i \in \texttt{out}_h} e^{(Bu)_i}g_i- \sum_{i \in \texttt{in}_h} e^{(Bu)_i}g_i}{\sum_{i} e^{(Bu)_i}g_i} + 
(\sum_{i\in \texttt{in}_h} 1 - \sum_{i\in \texttt{out}_h} 1)\,.
\end{split}
\end{equation}
Now writing $u = (u_1, \cdots, u_h, \cdots, u_H)$ and using that
\begin{equation}
(Bu)_i = \sum_{k} B_{ik}u_k = B_{ih}u_h + \sum_{k \neq h} B_{ik} u_k
\,,
\end{equation}
and with the notation $\rho_{i,h} := \exp((Bu)_i- 
B_{ih} u_{h}
)$, we get
\begin{equation}
e^{(Bu)_i} =
\rho_{i,h} e^{
B_{ih}u_{h}
} = 
 \begin{cases}
\rho_{i,h} e^{u_h} & i \in \texttt{out}_h \\
\rho_{i,h}  e^{-u_h}& i \in \texttt{in}_h \\
\rho_{i,h} & i \in \texttt{other}_h\,.
\end{cases}
\end{equation}
So with $A, B,C,D$ defined in the lemma, \eqref{eq:grad_h} becomes
\begin{equation}
\begin{split}
(\nabla F(u))_h &= p\frac{\sum_{i \in \texttt{out}_h} e^{u_h} \rho_{i,h} g_i
- \sum_{i \in \texttt{in}_h} e^{-u_h} \rho_{i,h}g_i}{\sum_{i \in \texttt{out}_h} e^{u_h}\rho_{i,h} g_i 
+ \sum_{i \in \texttt{in}_h} e^{-u_h} \rho_{i,h} g_i + \sum_{i \in \texttt{other}_h} \rho_{i,h} g_i} + A \\
&= p\frac{e^{u_h} \Bh - e^{-u_h} \Ch}{e^{u_h} \Bh + e^{-u_h}\Ch + \Dh} + \Ah\,.
\end{split}
\end{equation}
Hence
\begin{equation}
(\nabla F(u))_h = 0 \iff \Bh(\Ah + p) e^{u_h}+ \Ah \Dh + \Ch(\Ah - p)e^{-u_h} = 0\,.
\end{equation}
Introducing $X = e^{u_h}>0$, the problem of finding a minimizer $u_{h}$ (which we know exists by coercivity of $F$) reduces to finding a positive root of the polynomial
\begin{equation}
\Bh(\Ah + p) X^2 + \Ah \Dh X + \Ch(\Ah-p)\,.
\end{equation}
One easily checks that $-p <\Ah < p$ and $\Bh, \Ch  >0$ (since $g > 0$), hence $\Bh( \Ah +  p) > 0,  \Ch( \Ah-p) < 0$ thus the polynomial has exactly one positive root.
\end{proof}

\section{Computation of the diagonal of $G$}
\label{sec:G_matrix}

\begin{proposition}
    The diagonal of $G = \partial \Phi^\top \partial \Phi$ can be computed efficiently as:
    \begin{equation}
        \label{eq:diag_g}
        \diag(G) = \nabla_{\theta^2} \| \Phi(\theta) \|_2^2 
    \end{equation}
where $\nabla_{\theta^2}$ denotes differentiation with respect to the vector of squared parameters $\theta^2 = (\theta_1^2, \ldots, \theta_p^2)$, or equivalently $\theta^2 = \theta \odot \theta$ where $\odot$ denotes the Hadamard (element-wise) product and $p$ is the number of parameters.

\end{proposition}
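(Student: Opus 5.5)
The plan is to compute both sides of \eqref{eq:diag_g} entrywise from the monomial structure of the path-lifting (\cref{def:path_lifting}) and check that they agree. Each coordinate $\Phi_\path(\theta)$ is a product of distinct parameters: the edge weights along $\path$, and possibly the bias of its starting neuron. Each parameter therefore appears in $\Phi_\path$ with exponent either $0$ or $1$. This multilinearity is the only structural fact the argument needs.

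First I compute the left-hand side. By definition,
\begin{equation*}
G_{ii} = \sum_{\path \in \pathset} \Big(\frac{\partial \Phi_\path(\theta)}{\partial \theta_i}\Big)^2 .
\end{equation*}
From the derivative formula in the proof of \cref{lemma:path_properties}, $\partial_{\theta_i}\Phi_\path(\theta)$ equals $\prod_{k\in\path,k\neq i}\theta_k$ when $i \in \path$ and $0$ otherwise. This gives
\begin{equation*}
G_{ii} = \sum_{\path \ni i} \prod_{k \in \path, k\neq i} \theta_k^2 .
\end{equation*}

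Next I handle the right-hand side. I write
\begin{equation*}
\|\Phi(\theta)\|_2^2 = \sum_{\path} \prod_{k\in\path} \theta_k^2 .
\end{equation*}
Setting $s_k := \theta_k^2$, this becomes a polynomial $\Psi(s) = \sum_\path \prod_{k\in\path} s_k$ in the variables $s$. The meaning of $\nabla_{\theta^2}$ has to be fixed here: it is the gradient of $\Psi$ with respect to $s$, evaluated at $s=\theta\odot\theta$. Because $\Psi$ is a polynomial in $s$, this is well defined even when some $\theta_k = 0$, where the map $\theta\mapsto\theta^2$ fails to be locally invertible. Since each $s_k$ appears with degree at most one in every monomial, $\partial_{s_i}\Psi(s) = \sum_{\path\ni i}\prod_{k\in\path,k\neq i} s_k$. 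Substituting $s_k=\theta_k^2$ gives exactly the expression obtained for $G_{ii}$, which proves the identity.

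The main obstacle is stating carefully what $\nabla_{\theta^2}$ means, as just explained. I would also add a remark on the efficiency claim. Evaluating $\Psi$ amounts to a forward pass of the same network with weights $\theta^2$, activations removed, and all-ones input. By the standard path-sum identity of \citet{gonon2023path}, this forward pass computes $\|\Phi\|_2^2$ as $\mathbf 1^\top$ times the output, so a single backward pass with respect to $s$ yields $\diag(G)$ at cost $O(p)$.
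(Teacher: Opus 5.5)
Your proof is correct and follows essentially the same route as the paper. Both arguments write $G_{ii}=\sum_{\path\ni i}\prod_{k\in\path,k\neq i}\theta_k^2$ and use the fact that each parameter appears at most once in each path monomial to identify this with the partial derivative, in the variable $\omega=\theta\odot\theta$, of $\sum_{\path}\prod_{k\in\path}\omega_k=\|\Phi(\theta)\|_2^2$; the paper writes this polynomial as $\|\Phi(\omega)\|_1$, using $\omega\geq 0$. Your explicit convention that $\nabla_{\theta^2}$ means the gradient of this polynomial in $s$, evaluated at $s=\theta\odot\theta$ (so it stays well defined when some $\theta_k=0$), is a worthwhile precision that the paper leaves implicit.
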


\begin{proof}
Denote $\pathset$ the set of all paths. For a parameter $i$, we have by making the change of variable $\theta_i^2 = \omega_i$ :
\begin{align}
        G_{ii} &= \sum_{\substack{\path \in \pathset \\  \path \ni i}} \prod_{\substack{k \in \path \\ k \neq i}} \theta_{k}^2&& \text{(definition of $G_{ii}$)} \\
        &= \sum_{\substack{\path \in \pathset \\  \path \ni i}} \prod_{\substack{k \in \path \\ k \neq i}} \omega_k\\
        &= \sum_{\substack{\path \in \pathset \\  \path \ni i}} \frac{\partial}{\partial \omega_i} 
        \Phi_{\path}(\omega)     && \text{(Because $\Phi_\path(\omega) = \prod_{k \in \path} \omega_k$, the derivative is straightforward)} \\
&= \frac{\partial}{\partial \omega_i} \sum_{\substack{\path \in \pathset}} \Phi_\path(\omega)    && \text{(terms with $i \notin \path$ do not depend on $\omega_i$ and vanish with the derivative)}\\
        &= (\nabla_\omega \|\Phi(\omega)\|_1)_i
        && \text{(Because $\Phi_\path(\omega)\geq 0$)} \\
        &= (\nabla_{\theta^{2}} \|\Phi(\theta)\|_2^{2})_i
        && \text{(Because $\|\Phi_\path(\omega)\|_{1} = \|\Phi_\path(\theta)\|_{2}^{2}$)} 
    \end{align}

\end{proof}

\section{Detailed Algorithm and Complexity Analysis}
\label{sec:appendix_algo}

Algorithm~\ref{alg:detailed_rescaling} gives the full pseudocode of \pathcond.
The matrix $B \in \mathbb{R}^{p \times H}$ (defined in \ref{eq:def_B}) is never formed explicitly; only its non-zero entries are stored. Since each parameter (edge) appears in at most two columns of $B$ (once as an outgoing edge and once as an incoming edge) we have $\mathrm{nnz}(B) \leq 2p$. The $O(n_{\mathrm{iter}} \cdot p)$ total complexity rests on two incremental updates.

\textbf{Dual variable $v = Bu$.}
At each coordinate-descent step, only $u_h$ changes by some increment $\delta$, inducing the update $\Delta v = \delta \, b_h$, where $b_h$ is the $h$-th column of $B$. Since $b_h$ has only $|\mathrm{out}_h|+|\mathrm{in}_h|$ non-zero entries, this costs $O(|\mathrm{out}_h|+|\mathrm{in}_h|)$ rather than $O(p)$.

\textbf{Global auxiliary sum $E$.}
Computing the degree polynomial (Lemma~\ref{lemma:calculate_degree_pol}) requires
\[
  \Dh
  \;=\; \sum_{i \in \mathrm{other}_h} e^{v_i}\, g_i,
\]
a sum over $p - |\mathrm{out}_h| - |\mathrm{in}_h| \approx p$ terms.
Naïvely evaluating this for each of the $H$ neurons would give $O(pH)$ per sweep.
Instead, we maintain the global sum $E = \sum_{i=1}^{p} e^{v_i} g_i$ and recover
\[
  \Dh = E - S_{\mathrm{out}} - S_{\mathrm{in}},
  \qquad
  S_{\mathrm{out}} = \sum_{i \in \mathrm{out}_h} e^{v_i} g_i, \quad
  S_{\mathrm{in}}  = \sum_{i \in \mathrm{in}_h}  e^{v_i} g_i,
\]
at cost $O(|\mathrm{out}_h|+|\mathrm{in}_h|)$. After updating $v$, the sum $E$ is
refreshed within the same budget by recomputing only the affected terms.
One full sweep therefore costs
$\sum_{h=1}^H O(|\mathrm{out}_h|+|\mathrm{in}_h|) = O(p)$,
and the total complexity over $n_{\mathrm{iter}}$ sweeps is
$O(n_{\mathrm{iter}} \cdot p)$, linear in the number of parameters per iteration.

\begin{algorithm}[t]
  \caption{\pathcond: Path-conditioned rescaling}
  \label{alg:detailed_rescaling}
  \SetKwInOut{Input}{Input}\SetKwInOut{Output}{Output}

  \Input{DAG ReLU network with parameters $\theta$;\quad
         scaling weights $g = \operatorname{diag}(G) \in \mathbb{R}^p$;\quad
         tolerance $\varepsilon > 0$;\quad maximum iterations $n_{\mathrm{iter}}$}
  \Output{Rescaled parameters $\tilde{\theta} = \operatorname{diag}(e^v)\,\theta$}

  \BlankLine
  $u \gets \mathbf{0} \in \mathbb{R}^H$\tcp*{Primal variables}
  $v \gets \mathbf{0} \in \mathbb{R}^p$\tcp*{Dual variables, invariant: $v = Bu$}
  $E \gets \sum_{i=1}^{p} g_i$\tcp*{Global sum $E = \sum_i e^{v_i} g_i$, initialized with $v=0$}

  \BlankLine
  \For{$k = 0, \ldots, n_{\mathrm{iter}}-1$}{
    $\Delta \gets 0$\;
    \For{each neuron $h = 1, \ldots, H$}{

      \tcp{Index sets of outgoing/incoming parameters of neuron $h$}
      $\mathrm{out}_h \gets \{i : B_{i,h} = +1\}$\;
      $\mathrm{in}_h  \gets \{i : B_{i,h} = -1\}$\;

      \BlankLine
      \tcp{Recover $\Dh$ from global sum in $O(|\mathrm{out}_h|+|\mathrm{in}_h|)$}
      $S_{\mathrm{out}} \gets \sum_{i \in \mathrm{out}_h} e^{v_i} g_i$\;
      $S_{\mathrm{in}}  \gets \sum_{i \in \mathrm{in}_h}  e^{v_i} g_i$\;
      $\Dh \gets E - S_{\mathrm{out}} - S_{\mathrm{in}}$\;

      \BlankLine
      \tcp{Coefficients of the degree polynomial (Lemma~\ref{lemma:calculate_degree_pol})}
      $\Ah \gets |\mathrm{in}_h| - |\mathrm{out}_h|$\;
      $\Bh \gets \sum_{i \in \mathrm{out}_h} e^{v_i - u_h}\,g_i$\tcp*{$\rho_{i,h} = e^{v_i - u_h}$ for $i\in\mathrm{out}_h$}
      $\Ch \gets \sum_{i \in \mathrm{in}_h}  e^{v_i + u_h}\,g_i$\tcp*{$\rho_{i,h} = e^{v_i + u_h}$ for $i\in\mathrm{in}_h$}

      \BlankLine
      \tcp{Optimal step: solve $\Bh(\Ah{+}p)\,X^2 + \Ah\Dh\,X + \Ch(\Ah{-}p) = 0$}
      $r_h \gets \dfrac{-\Ah\Dh + \sqrt{(\Ah\Dh)^2 - 4\,\Bh(\Ah+p)\,\Ch(\Ah-p)}}{2\,\Bh(\Ah+p)}$\;
      $\delta \gets \ln r_h - u_h$\tcp*{$u_h^{(k+1)} = \ln r_h$}

      \BlankLine
      \tcp{Incremental update of $v$ and $E$ in $O(|\mathrm{out}_h|+|\mathrm{in}_h|)$}
      $v_i \gets v_i + \delta$ for $i \in \mathrm{out}_h$\;
      $v_i \gets v_i - \delta$ for $i \in \mathrm{in}_h$\;
      $E   \gets \Dh + \sum_{i \in \mathrm{out}_h} e^{v_i} g_i + \sum_{i \in \mathrm{in}_h} e^{v_i} g_i$\;

      \BlankLine
      $u_h \gets u_h + \delta$\;
      $\Delta \gets \Delta + |\delta|$\;
    }
    \lIf{$\Delta < \varepsilon$}{\textbf{break}}
  }
  \Return $\tilde{\theta} = \operatorname{diag}(e^v)\,\theta$\;
\end{algorithm}

\subsection{Computational Cost: Additional Results}
\label{sec:app_timing}

This section provides additional timing measurements and computational analysis complementing Section~\ref{sec:timing} of the main paper.
The \pathcond algorithm consists of four main computational steps:
\begin{enumerate}
\item \textbf{Diagonal of $G$:} computing $\diag(G)$ where $G = \partial\Phi^\top \partial\Phi$
(Section~\ref{sec:G_matrix}), which requires a single backward pass.
\item \textbf{Connectivity matrix $B$:} computing $B \in \mathbb{R}^{p \times H}$ with entries in $\{-1, 0, 1\}$, indicating whether a parameter enters ($-1$), leaves ($+1$), or is unrelated ($0$) to a given hidden neuron (Definition~\ref{def:neur_resc}).
\item \textbf{Block coordinate descent (BCD):} solving the optimization problem with $n_{\text{iter}}$ passes of BCD to obtain the optimal rescaling matrix.
\item \textbf{Rescaling:} applying the computed rescaling to update model parameters in place.
\end{enumerate}

\Cref{fig:timing_mlp} (left) reports the rescaling time of \pathcond against the cost of a single CIFAR-10 training epoch, for MLPs of varying depth ($L \in \{2, 3, \ldots, 9\}$) and constant hidden width ($500$). It demonstrates that rescaling time scales linearly with the number of parameters, as expected from the $O(\text{\# params})$ complexity of computing the diagonal and the BCD iterations.
Figure~\ref{fig:timing_mlp} (right) decomposes the wall-clock time by step for three architectures. Steps~1, 2, and~4 are negligible compared to step~3 (BCD), which accounts for roughly $98\%$ of the total rescaling time. The BCD step is therefore the primary target for future optimizations.

\begin{figure}[t]
\centering
\begin{minipage}{0.48\columnwidth}
\centering
\includegraphics[width=\linewidth]{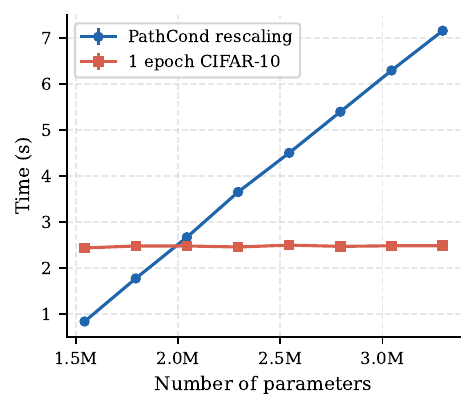}
\end{minipage}
\hfill
\begin{minipage}{0.48\columnwidth}
\centering
\includegraphics[width=\linewidth]{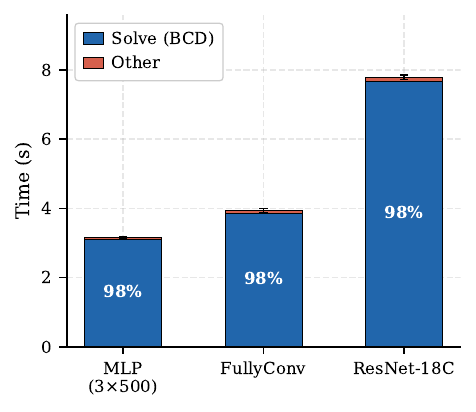}
\end{minipage}
\caption{Timing of \pathcond rescaling. (\textbf{Left}) Total rescaling time for MLPs with
varying depth ($L \in \{2, 3, \ldots, 9\}$) and constant hidden width ($500$).
(\textbf{Right}) \pathcond time decomposition for three architectures. Error bars denote standard
deviations over $10$ runs.}
\label{fig:timing_mlp}
\end{figure}

\subsection{Number of BCD Iterations: Convergence Analysis}
\label{sec:bcd_nb}

A natural question is how many BCD iterations are needed for a ``good" rescaling. While \pathcond has no hyperparameters in the usual sense, the number of iterations $n_{\text{iter}}$ controls a trade-off between computational cost and solution quality. We present an ablation showing that $n_{\text{iter}} = 10$ (the value used in all our experiments) is a good compromise, even though the convergence rate varies across architectures.

\textbf{Experimental setup.}
For each architecture (an MLP with BatchNorm (depth $L = 4$, hidden width $500$), the CIFAR-NV fully convolutional network, and ResNet-18C) and for
$n_{\text{iter}} \in \{1, \ldots, 100\}$, we monitor:
\begin{enumerate}
\item \textbf{Normalized objective:} $\big(F(u^{(k)}) - F(u^\star)\big) / \big(F(u^{(0)}) - F(u^\star)\big)$, where $F$ is the objective of Equation~\eqref{eq:optim_with_bz}, $u^{(0)} = 0$ is the initialization, and $u^\star \approx u^{(100)}$ approximates the optimum. The normalized objective lies in $[0, 1]$ and equals $0$ at convergence.
\item \textbf{Mean primal update:} $\dfrac{1}{H}\|u^{(k+1)} - u^{(k)}\|_1$, the average per-neuron change in the primal variable $u$ between consecutive iterations. Recall that the rescaling factors are recovered as $\lambda_h = \pm e^{u_h}$; small values therefore signal BCD convergence.
\end{enumerate}

\begin{figure}[t]
\centering
\includegraphics[width=0.48\columnwidth]{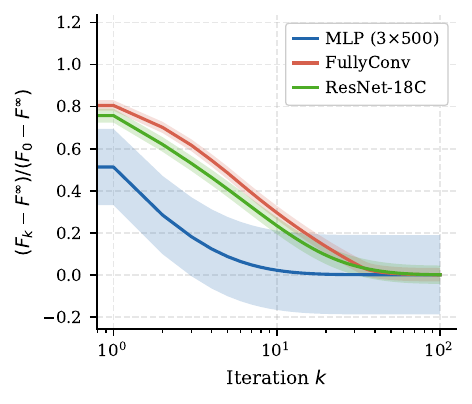}
\hfill
\includegraphics[width=0.48\columnwidth]{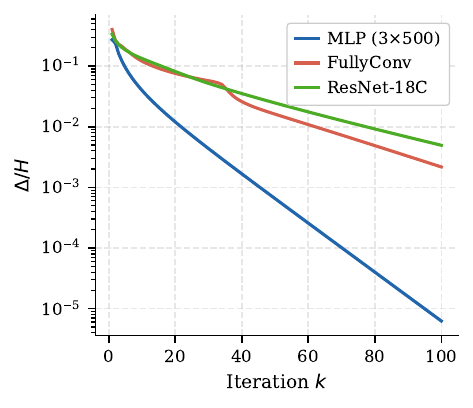}
\caption{BCD convergence. (\textbf{Left}) Normalized objective $F(u^{(k)})$ as a function of the
iteration index. (\textbf{Right}) Mean primal step size $\|u^{(k+1)} - u^{(k)}\|_1 / H$ across
iterations. Curves show the mean and standard deviation over $10$ seeds.}
\label{fig:bcd_convergence}
\end{figure}

\textbf{Results.}
Figure~\ref{fig:bcd_convergence} (left) shows that convergence speed depends strongly on the architecture. For the simple MLP, the objective reaches about $90\%$ of its final decrease within $5$ iterations. The fully convolutional network and ResNet-18C converge more slowly: at $10$ iterations they have only realized $20$--$30\%$ of the total objective decrease, and $25$--$30$ iterations are needed to approach convergence.
\Cref{fig:bcd_convergence} (right) corroborates this picture from the primal iterates: after $20$ iterations, the average per-neuron update falls below $10^{-2}$ for the MLP, but remains of order $10^{-1}$ for FullyConv and ResNet-18C, reflecting the different conditioning and structure of these architectures.

In light of this analysis, we set $n_{\text{iter}} = 10$ as a practical compromise: it yields near-optimal solutions for simpler architectures and substantial improvements for more complex ones, while keeping the computational overhead small (typically half to one CIFAR-10 epoch) across all architectures.

\section{Architecture Statistics: Number of Parameters, Hidden Units, and Paths}
\label{sec:arch_stats}

This section reports detailed architectural statistics for the neural networks used in our
experiments.
All networks take a CIFAR-10 image as input, i.e.\ a tensor of shape $3 \times 32 \times 32$. For each architecture, we report:

\begin{itemize}
    \item \textbf{Parameters:} the total number of trainable parameters (weights and biases).
    \item \textbf{Hidden Units:} the total number of hidden channels across all convolutional layers (the convolutional analogue of hidden neurons in fully connected networks), or simply the total number of hidden neurons for MLPs. The final classification layer is excluded.
    \item \textbf{Number of Paths:} the total number of distinct computational paths from input pixels to output logits induced by the architecture.
\end{itemize}

To compute the number of paths, we rely on the path-norm formalism of~\citet{gonon2023path}.
Let $\pathset$ denote the set of all paths and let $\Phi_\path(\theta)$ denote the product of parameters along path $\path$. The path-1 norm is
\[
\|\Phi(\theta)\|_1 \;=\; \sum_{\path \in \pathset} \big| \Phi_\path(\theta) \big|.
\]
Setting all parameters to one ($\theta = \mathbf{1}$), each path product equals one, and therefore
\[
\|\Phi(\mathbf{1})\|_1 \;=\; \sum_{\path \in \pathset} 1 \;=\; |\pathset|,
\]
which is exactly the number of paths in the network. This quantity is computed efficiently via a single forward pass on a CIFAR-10-shaped input with parameters set to one~\citep{gonon2023path}.

\Cref{tab:architecture_stats} summarizes these statistics for several architectures.

\begin{table}[h]
\centering
\begin{tabular}{lrrr}
\toprule
\textbf{Model} & \textbf{Parameters} & \textbf{Hidden Units} & \textbf{Paths} \\
\midrule
MLP $3 \times 500$                                  & $2{,}042{,}510$   & $1{,}500$  & $3.84 \times 10^{12}$  \\
FullyConv (CIFAR-NV) \citep{gitman2017comparison}   & $2{,}616{,}576$   & $1{,}792$  & $4.13 \times 10^{26}$  \\
ResNet-18 \citep{he2016deep}                        & $11{,}689{,}512$  & $4{,}800$  & $2.24 \times 10^{53}$  \\
ResNet-34 \citep{he2016deep}                        & $21{,}797{,}672$  & $8{,}512$  & $1.19 \times 10^{100}$ \\
ResNet-50 \citep{he2016deep}                        & $25{,}557{,}032$  & $26{,}560$ & $2.64 \times 10^{136}$ \\
VGG-16 \citep{simonyan2014very}                     & $138{,}357{,}544$ & $12{,}416$ & $5.89 \times 10^{54}$  \\
\bottomrule
\end{tabular}
\caption{Architectural statistics for the networks used in this work. All models take a CIFAR-10 image ($3 \times 32 \times 32$) as input.} \label{tab:architecture_stats}
\end{table}

\section{Regimes Analysis at Initialization}
\label{sec:regimes_proofs}

We begin with a fundamental observation about the rescaling algorithm.

\begin{proposition}[Trivial case: constant diagonal]
\label{prop:constant-diagonal}
If the diagonal of $G$ is constant, i.e., there exists $\alpha > 0$ such that
\[
g = \alpha \mathbf{1},
\]
then the objective of the rescaling problem \eqref{eq:optim_with_bz} admits $u = 0$ as its unique minimizer.  
Consequently, the rescaling matrix is the identity and the algorithm has no effect.
\end{proposition}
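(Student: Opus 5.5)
With $g=\alpha\mathbf{1}$ the objective \eqref{eq:optim_with_bz} becomes
\[
F(u)=p\log\alpha + p\log\Big(\sum_{i=1}^p e^{(Bu)_i}\Big)-\sum_{i=1}^p (Bu)_i .
\]
The plan is to show that $u=0$ is the unique global minimizer, using Jensen's inequality on $x\mapsto e^x$ together with \Cref{lemma:notinB}.

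First, the inequality. Put $x:=Bu\in\R^p$ and $m:=\frac1p\sum_i x_i$. By convexity of the exponential,
\[
\frac1p\sum_i e^{x_i}\ge e^{m}.
\]
Taking logarithms and multiplying by $p$ gives
\[
p\log\Big(\sum_i e^{x_i}\Big)\ge p\log p + p m = p\log p+\sum_i x_i.
\]
Hence $F(u)\ge p\log\alpha+p\log p=F(0)$ for every $u\in\R^H$, and $u=0$ is a global minimizer. Alternatively, one can check that $\nabla F(0)=B^\top\big(\tfrac{p}{p\alpha}\alpha\mathbf{1}-\mathbf{1}\big)=0$ and invoke convexity of $F$ (\Cref{lemma:existence_solution}), but the Jensen route is better because it also handles uniqueness directly.

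Second, uniqueness, which is the only delicate point. The function $e^x$ is strictly convex, so equality in Jensen holds if and only if all $x_i$ are equal, i.e.\ $Bu=c\mathbf{1}$ for some constant $c$. If $c\neq 0$, then $\mathbf{1}=B(u/c)\in\operatorname{span}(B)$, which contradicts \Cref{lemma:notinB}. Hence $Bu=0$.

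It remains to deduce $u=0$ from $Bu=0$, i.e.\ that $B$ has trivial kernel. For this, take any hidden neuron $h$. By the definition of a hidden neuron, $\mathrm{in}_h$ contains the index of its bias $b_h$, or of some incoming edge from an input neuron. That parameter's row of $B$ has its only nonzero entry in column $h$, equal to $-1$, so it yields $u_h=0$ directly. Without biases, I would argue by induction along the topological order: an incoming edge $u'\to h$ has row $e_{u'}-e_h$ (restricted to hidden neurons), so $(Bu)_i=u_{u'}-u_h$. Since $u_{u'}=0$ by induction, with base case $u'\in N_{\text{in}}$ where the row is just $-e_h$, we get $u_h=0$. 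In either case $u=0$.

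Finally, $u=0$ gives $D=\diag(e^{B\cdot 0})=I$, so the rescaling is the identity and the algorithm has no effect.
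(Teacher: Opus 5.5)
Your proof is correct. The inequality step is the same Jensen argument as in the paper: the paper uses concavity of $\log$, you use convexity of $\exp$, and both give $\log\bigl(\frac1p\sum_i e^{x_i}\bigr)\ge \frac1p\sum_i x_i$. Your value $F(0)=p\log(p\alpha)$ is the right one; the paper's display puts $\alpha/p$ inside the logarithm and so gets $p\log\alpha$, a harmless constant slip.

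Where you genuinely differ is uniqueness. The paper simply asserts that $F$ is strictly convex, but this is not automatic. The map $f(x):=p\log\bigl(\sum_i e^{x_i}g_i\bigr)-\sum_i x_i$ satisfies $f(x+t\mathbf{1})=f(x)$, and $F=f\circ B$ is constant along $\ker B$. In fact $F$ is strictly convex exactly when $\mathbf{1}\notin\operatorname{range}(B)$ and $B$ is injective. Your equality-case analysis of Jensen, combined with \Cref{lemma:notinB} and your topological-order induction showing $\ker B=\{0\}$, proves precisely these two facts. The paper never proves injectivity of $B$, so your route makes explicit what it leaves implicit.

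One small wording fix: without biases, $\mathrm{in}_h$ need not contain an edge from an input neuron, since all antecedents of $h$ may be hidden. Your induction handles every case anyway, with or without biases: pick any antecedent $u'$ of $h$; it is either an input neuron (row $-e_h$) or an earlier hidden neuron with $u_{u'}=0$. You can simply state the argument that way.
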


\begin{proof}
Let's go back to \eqref{eq:optim_with_bz}
\[
F(u) = p\log\Bigg(\sum_{i=1}^{p} e^{(Bu)_i} G_{ii}\Bigg) - \sum_{i=1}^{p} (Bu)_i.
\]
If $G_{ii} = \alpha > 0$ for all $i$, then
\[
F(u) = p \log\Bigg(\frac{\alpha}{p} \sum_{i=1}^{p} e^{(Bu)_i}\Bigg) - \sum_{i=1}^{p} (Bu)_i
 = p \log \alpha + p \log \Bigg(\frac{1}{p} \sum_{i=1}^{p} e^{(Bu)_i}\Bigg) - p \Bigg( \frac{1}{p} \sum_{i=1}^{p} (Bu)_i\Bigg).
\]
Because the $\log$ is concave on $\R_*^{+}$, by Jensen's inequality, we have
\[
F(u) \ge p \log(\alpha),
\]
which is achieved at $u=0$.  
Since the objective is strictly convex, $u=0$ is the unique minimizer.
\end{proof}

\subsection{The  Matrix $G$}

Consider a neural network represented as a directed acyclic graph (DAG) with ReLU, identity, and max-pooling operations. Let $\Phi$ denote the path lifting map. We have the following matrix:
\[
G := \partial \Phi^\top \partial \Phi.
\]
We focus on its diagonal $g = \mathrm{diag}(G)$.
For any parameter $i$ (edge weight or bias), the corresponding diagonal entry is
\begin{equation}
\label{eq:path-magnitude}
g_i = \sum_{\substack{\path \in \pathset\\ \path\ni i}} \prod_{\substack{j \in \path\\ j \neq i}} \theta_j^2,
\end{equation}
where $\pathset$ denotes the set of all paths ending at an output neuron.

\paragraph{Expectation of $\mathrm{diag}(G)$}
At initialization, parameters $\theta_i$ are independent random variables. By independence, the linearity of expectation yields
\begin{equation}
\label{eq:expected-gi}
\mathbb{E}[g_i] = \sum_{\substack{\path \in \pathset\\ \path\ni i}} \prod_{\substack{j \in \path\\ j \neq i}} \mathbb{E}[\theta_j^2].
\end{equation}

\subsection{Layered Fully-Connected ReLU Networks}
\label{app:LFCN}
We specialize to layered fully-connected ReLU networks (LFCNs) \citep{gonon2023path}. Such a network is specified by a depth $L$ and widths $(n_0,\dots,n_L)$, where layer $k$ maps $\mathbb{R}^{n_k}$ to $\mathbb{R}^{n_{k+1}}$.

We distinguish two types of paths: (i) paths starting from input neurons, and (ii) paths starting from bias parameters. As is customary, we omit a bias in the last layer since it does not affect expressivity.

Let $\sigma_k^2 := \mathbb{E}[\theta_j^2]$ for any edge parameter in layer $k$. This assumption of layer-wise constant variance is standard in neural network initialization schemes such as Xavier/Glorot initialization \citep{glorot2010understanding} and He initialization \citep{he2015delving}, which are designed to maintain activation variance across layers during forward and backward passes.

\paragraph{Counting paths through a parameter.}
To compute expected path magnitudes via equation~\eqref{eq:expected-gi}, we decompose the calculation into: (1) counting paths passing through a parameter, and (2) computing the expected squared magnitude of each path type.

Consider an edge parameter $w$ located in layer $k$ (connecting layer $k$ to layer $k+1$). The number of paths passing through $w$ and originating from input neurons equals the number of input-to-output paths in the reduced network obtained by fixing layers $k$ and $k+1$ to a single neuron:
\[
N_{\text{edge},k}^{\text{input}} = \prod_{\substack{j=0 \\ j \neq k, k+1}}^{L} n_j.
\]
This counts all combinations of neurons in layers other than $k$ and $k+1$.

For any such path $\path$ containing $w$, the expected contribution to $g_w$ (excluding $w$ itself) is
\[
\mathbb{E}\left[\prod_{\substack{j \in \path\\ j \neq w}} \theta_j^2\right] = \prod_{\substack{m=0 \\ m \neq k}}^{L-1} \sigma_m^2,
\]
where the product ranges over all layers except layer $k$ (which contains $w$) and the output layer $L$ (which has no outgoing edges).

Additionally, we must account for paths originating from bias parameters in layers $0$ to $k-1$. A path starting from a bias in layer $j < k$ and passing through $w$ traverses all layers from $j+1$ to $L$, excluding the neurons in layers $k$ and $k+1$. The number of such paths is
\[
N_{\text{edge},k}^{\text{bias},j} = \prod_{\substack{m=j+1 \\ m \neq k, k+1}}^{L} n_m.
\]
Each such path has expected squared magnitude (excluding $w$ from the product)
\[
\prod_{\substack{m=i \\ m \neq k}}^{L-1} \sigma_m^2,
\]
where the product starts from layer $i$ (containing the bias) and excludes layer $k$.

For a bias parameter $b$ in layer $k$, the analysis is simpler. Such a bias only participates in paths from layer $k$ onward to the output. The number of such paths is
\[
N_{\text{bias},k} = \prod_{j=k+2}^{L} n_j,
\]
and each path has expected squared magnitude (excluding the bias itself from the product)
\[
\prod_{m=k+1}^{L-1} \sigma_m^2.
\]

\paragraph{Expected path magnitudes: closed-form formula.}
Combining path counts with their corresponding expected squared magnitudes yields the following closed-form expression for expected path magnitudes:

\begin{equation}
\label{eq:expected-general}
\mathbb{E}[g_i] =
\begin{cases}
\displaystyle
\prod_{j=k+2}^{L} n_j \prod_{m=k+1}^{L-1} \sigma_m^2,
& \text{if $i$ is a bias in layer $k$,}\\[8pt]
\displaystyle
\prod_{\substack{j=0 \\ j \neq k, k+1}}^{L} n_j \prod_{\substack{m=0 \\ m \neq k}}^{L-1} \sigma_m^2
+ \sum_{j=0}^{k-1} \prod_{\substack{m=j+1 \\ m \neq k, k+1}}^{L} n_m \prod_{\substack{m=j \\ m \neq k}}^{L-1} \sigma_m^2,
& \text{if $i$ is an edge in layer $k$.}
\end{cases}
\end{equation}

Introducing the quantities $a_k := n_k \sigma_k^2$, equation~\eqref{eq:expected-general} simplifies to:

\begin{equation}
\label{eq:expected-ak}
\mathbb{E}[g_i] =
\begin{cases}
\displaystyle
n_L \sigma_{k+1}^2 \prod_{j=k+2}^{L-1} a_j,
& \text{if $i$ is a bias in layer $k$,}\\[8pt]
\displaystyle
n_L \sigma_{k+1}^2 \prod_{\substack{j=0 \\ j \neq k, k+1}}^{L-1} a_j
+ n_L \sigma_{k+1}^2 \sum_{j=0}^{k-1} \sigma_j^2 \prod_{\substack{m=j+1 \\ m \neq k}}^{L-1} a_m,
& \text{if $i$ is an edge in layer $k$.}
\end{cases}
\end{equation}

\subsection{Classical Initializations}

For standard initializations such as He or Kaiming initialization \citep{he2015delving}, one has
\[
a_k = n_k \sigma_k^2 = a, \quad \forall k,
\]
for some constant $a > 0$.

\begin{proposition}[Expected diagonal under standard initialization]
\label{prop:expected-standard-init}
For a LFCN with standard initialization, the expected path magnitude associated with a parameter in layer $k$ is given by
\begin{equation}
\label{eq:expected-kaiming}
\mathbb{E}[g_i] =
\begin{cases}
\displaystyle
\frac{n_L}{n_{k+1}} a^{L-1-k},
& \text{if $i$ is a bias in layer $k$,}\\[10pt]
\displaystyle
\frac{n_L}{n_{k+1}} a^{L-1} + \frac{n_L}{n_{k+1}} \sum_{j=0}^{k-1} \frac{a^{L-1-j}}{n_j},
& \text{if $i$ is an edge in layer $k$.}
\end{cases}
\end{equation}
\end{proposition}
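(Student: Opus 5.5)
The plan is to specialize the general closed form \eqref{eq:expected-ak} to the case $a_k = n_k\sigma_k^2 = a$ for every $k$, which gives $\sigma_k^2 = a/n_k$. No new probabilistic argument is needed. The expectation formula \eqref{eq:expected-gi}, the path counts, and the resulting expression \eqref{eq:expected-ak} are all taken as established. What remains is an exponent count, done separately for the two cases (bias in layer $k$, edge in layer $k$).

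\textbf{Bias case.} Start from $n_L \sigma_{k+1}^2 \prod_{j=k+2}^{L-1} a_j$. Substitute $\sigma_{k+1}^2 = a/n_{k+1}$ and $a_j = a$. The product has $L-1-(k+2)+1 = L-k-2$ factors. The expression therefore becomes $\frac{n_L}{n_{k+1}} a \cdot a^{L-k-2} = \frac{n_L}{n_{k+1}} a^{L-1-k}$. This matches the bias line of \eqref{eq:expected-kaiming}.

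\textbf{Edge case, first term.} The term is $n_L\sigma_{k+1}^2 \prod_{j\in\{0,\dots,L-1\}\setminus\{k,k+1\}} a_j$. The index set has $L-2$ elements when $k\le L-2$. This gives $\frac{n_L}{n_{k+1}} a \cdot a^{L-2} = \frac{n_L}{n_{k+1}} a^{L-1}$.

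\textbf{Edge case, sum.} The $j$-th summand is $n_L \sigma_{k+1}^2 \sigma_j^2 \prod_{m\in\{j+1,\dots,L-1\}\setminus\{k\}} a_m$. Because $j<k$, the excluded index $k$ does lie in $\{j+1,\dots,L-1\}$. The product therefore has $L-1-j-1 = L-j-2$ factors. Substituting gives $\frac{n_L}{n_{k+1}} \cdot a \cdot \frac{a}{n_j} \cdot a^{L-j-2} = \frac{n_L}{n_{k+1}} \frac{a^{L-1-j}}{n_j}$. Summing over $j=0,\dots,k-1$ yields the second term of \eqref{eq:expected-kaiming}. This also covers the edge formula stated in Proposition~\ref{prop:expected-standard-init-main}.

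\textbf{Main obstacle.} The real difficulty is the boundary conventions, not the algebra. When $k = L-1$, the factor $\sigma_{k+1}^2 = \sigma_L^2$ refers to a nonexistent layer. In the first product, the excluded index $k+1 = L$ lies outside $\{0,\dots,L-1\}$, so only one index is removed and the count becomes $L-1$ factors. I would handle this by going back to \eqref{eq:expected-general}. For $k=L-1$, the count $\prod_{j\ne k,k+1} n_j$ excludes $n_L$. The factors then recombine as $\prod_{j=0}^{L-2} n_j\sigma_j^2 = a^{L-1}$, and $n_L/n_{k+1} = 1$, so the stated formula still holds. The sum terms at $k=L-1$ are checked the same way. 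The bias case needs $k\le L-2$ anyway, since the last layer has no bias.

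I would state these edge cases explicitly, noting that the conventions $\prod_\emptyset = 1$ and $n_L/n_{k+1}$ absorb them. Then I would conclude.
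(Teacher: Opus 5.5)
\textbf{Gap: the bias-path sum does not come out as claimed.} Substituting $\sigma_m^2=a/n_m$ into the closed form is the right route; the paper gives no separate argument beyond this specialization. Your bias case, the first edge term, and your treatment of $k=L-1$ via \eqref{eq:expected-general} are correct. The sum term, however, fails.

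From \eqref{eq:expected-ak} as printed, you correctly count $L-j-2$ factors in $\prod_{m\in\{j+1,\dots,L-1\}\setminus\{k\}}a_m$. But then
\[
n_L\sigma_{k+1}^2\sigma_j^2\,a^{L-j-2}=\frac{n_L}{n_{k+1}}\cdot a\cdot\frac{a}{n_j}\cdot a^{L-j-2}=\frac{n_L}{n_{k+1}}\frac{a^{L-j}}{n_j}.
\]
The exponents add to $L-j$, not $L-1-j$. Taken at face value, \eqref{eq:expected-ak} therefore contradicts the statement by a factor $a$ in every bias-path term. Your derivation reaches the target only through this arithmetic slip.

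\textbf{Cause: the sum term of \eqref{eq:expected-ak} is misprinted.} In \eqref{eq:expected-general}, the path count $\prod_{m\in\{j+1,\dots,L\}\setminus\{k,k+1\}}n_m$ contains no factor $n_{k+1}$. So after you extract $n_L$, $\sigma_j^2$ and $\sigma_{k+1}^2$, the remaining factors pair into $a_m=n_m\sigma_m^2$ only for $m\in\{j+1,\dots,L-1\}\setminus\{k,k+1\}$, not $m\in\{j+1,\dots,L-1\}\setminus\{k\}$. For $k\le L-2$ this set has $L-j-3$ elements, so the exponent is $2+(L-j-3)=L-1-j$, as stated.

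A quick check with $L=3$, $k=1$, $j=0$: the only bias paths through $w=(u\to v)$ are $u\to v\to x_3$. With the bias second moment $\sigma_0^2$ used in \eqref{eq:expected-general}, they contribute $n_3\sigma_0^2\sigma_2^2=\frac{n_3}{n_2}\frac{a^2}{n_0}$. There is no factor $a_2$, contrary to what \eqref{eq:expected-ak} predicts.

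\textbf{Fix.} Do for every $k$ what you planned only for $k=L-1$: derive the sum term directly from \eqref{eq:expected-general} rather than from \eqref{eq:expected-ak}. With that change your argument is complete.
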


\subsection{Consequences for the Rescaling Algorithm}
\label{sec:nonconstantwidth}

For the rescaling algorithm to have a non-trivial effect, the expected diagonal of $G$ must exhibit variation across parameters. The analysis above reveals that this depends delicately on both the network architecture and the initialization scheme.

\paragraph{Architectures with non-constant width.}
We focus on edge parameters as their behavior already allows to highlight architectural regimes where the expected diagonal of $G$ exhibits significant variations in magnitude.
For an edge parameter in layer $k$, equation~\eqref{eq:expected-kaiming} becomes
\[
\mathbb{E}[g_i] = \frac{n_L}{n_{k+1}} a^{L-1} + \frac{n_L}{n_{k+1}} \sum_{j=0}^{k-1} \frac{a^{L-1-j}}{n_j}.
\]

The behavior of this expression depends critically on the regime of the variance scaling factor $a$.

In the \emph{large variance regime} ($a \to +\infty$), the first term dominates:
\[
\mathbb{E}[g_i] \sim \frac{n_L}{n_{k+1}} a^{L-1}.
\]
The expected path magnitude is determined primarily by the ratio $n_L/n_{k+1}$.
For 2 parameters $i$ and $i'$ respectively in layers $k$ and $k'$, the ratio between $\mathbb{E}[g_i]$ and $\mathbb{E}[g_{i'}]$  is of order $n_{k'+1}/n_{k+1}$, provided that $a$ is large enough.

In the \emph{small variance regime} ($a \to 0$), the sum dominates and retains only its last term:
\[
\mathbb{E}[g_i] \sim \frac{n_L}{n_{k+1}} \frac{a^{L-k}}{n_{k-1}}.
\]
This regime exhibits much stronger variation in expected path magnitudes. The ratio between $\mathbb{E}[g_i]$ and $\mathbb{E}[g_{i'}]$ for parameters in different layers can be of order $1/a$, which is very large.

In the \emph{critical case} $a=1$, corresponding for instance to Kaiming normal initialization with unit gain and fan-in scaling, we obtain
\[
\mathbb{E}[g_i] = \frac{n_L}{n_{k+1}} \bigg(1 + \sum_{j=0}^{k-1} \frac{1}{n_j}\bigg), \quad k \in \{1,\dots,L-1\}.
\]
Hence, unless the width is constant across all layers, the expected diagonal of $G$ exhibits non-trivial variation.

\subsection{Constant-Width Networks}

\paragraph{Setup.}
We now consider multi-layer perceptrons with constant width $n_k = n$ for all hidden layers $k \in \{0, \ldots, L-1\}$. In this simplified setting, the expected path magnitude for an edge parameter in layer $k$ reduces to:

\begin{equation}
\label{eq:expected-constant-width}
\mathbb{E}[g_i] = 
\begin{cases}
a^{L-1}, & \text{if } k = 0,\\[6pt]
\displaystyle a^{L-1} + \frac{1}{n}\sum_{j=1}^{k} a^{L-j}, & \text{if } k \in \{1, \ldots, L-1\}.
\end{cases}
\end{equation}

\paragraph{Case $a=1$.}
When $a=1$, equation~\eqref{eq:expected-constant-width} simplifies to
\begin{equation}
\label{eq:expected-a1}
\mathbb{E}[g_i] = 1 + \frac{k}{n}, \quad k \in \{0, \ldots, L-2\}.
\end{equation}
The expected path magnitude increases linearly with depth. For networks with large width $n \gg L$, the diagonal is approximately constant, and the rescaling algorithm has little effect.

\paragraph{Case $a \neq 1$.}
When $a \neq 1$, the geometric series yields
\begin{equation}
\label{eq:expected-aneq1}
\mathbb{E}[g_i] = a^{L-1} + \frac{a^{L-1}}{n} \cdot \frac{1-(1/a)^{k}}{1-1/a}, \quad k \in \{0, \ldots, L-1\}.
\end{equation}

\paragraph{Large variance regime: $a \to +\infty$.}
In this regime,
\begin{equation}
\label{eq:expected-large-a}
\mathbb{E}[g_i] \sim 
\begin{cases}
a^{L-1}, & \text{if } k = 0,\\[6pt]
a^{L-1}\big(1+1/n\big), & \text{if } k \in \{1, \ldots, L-1\}.
\end{cases}
\end{equation}
\paragraph{Small variance regime: $a \to 0$.}
In this regime,
\begin{equation}
\label{eq:expected-small-a}
\mathbb{E}[g_i] \sim 
\begin{cases}
a^{L-1}, & \text{if } k = 0,\\[6pt]
a^{L-1}\big(1 + 1/n\big), & \text{if } k = 1,\\[6pt]
\dfrac{a^{L-k}}{n}, & \text{if } k \in \{2, \ldots, L-1\}.
\end{cases}
\end{equation}
The diagonal exhibits a multi-level structure with significant variation across layers, especially for deeper layers.

\subsection{Summary and Implications}

Based on the analysis above, we identify regimes in which the rescaling algorithm is expected to have a non-trivial effect.
\begin{itemize}
    \item \textbf{Varying-width architectures:} For networks with non-uniform layer widths, regardless of the initialization scale $a$, the expected diagonal of $G$ is non-constant across layers. The rescaling algorithm is expected to have a significant effect on such irregular configurations.

    \item \textbf{Near constant-width architectures with small variance:} For networks with approximately uniform width $n_i \approx n$ for all $i$, the algorithm has a non-trivial effect when initialized with small variance, i.e., when $a \ll 1$. In this regime, the diagonal exhibits significant variation across layers despite the regularity of the architecture.

    \item \textbf{Near constant-width architectures with critical initialization:} For such regular configurations with $a = 1$, the diagonal varies as $1 + k/n$. The effect is significant when the depth $L$ is comparable to or larger than the width $n$.

    \item \textbf{Near constant-width architectures with large variance:} For near constant-width networks with $a \gg 1$, the diagonal exhibits a simple two-level structure. The rescaling may help but the effect is limited to distinguishing the first layer from subsequent layers.
\end{itemize}

\paragraph{Conclusion.}
The rescaling algorithm is expected to have the most significant impact in two scenarios:
\begin{enumerate}
\item Varying-width architectures, regardless of initialization.
\item Near constant-width architectures of width $n$, initialized with small standard deviation relative to $1/\sqrt{n}$.
\end{enumerate}
In both cases, the expected path magnitudes exhibit significant variation across parameters, leading to ill-conditioning of the path-magnitude matrix $G$ that the rescaling algorithm can address.

We empirically validate these assumptions through a controlled experiment on synthetic networks.

\paragraph{Experimental Setup.}
We generate multiple networks with a fixed depth of 8 layers and a mean width of 32 neurons per layer, ensuring a total budget of $8 \times 32 = 256$ neurons distributed across all hidden layers.
To systematically vary the width regularity of the network architecture, we introduce a concentration parameter $\alpha \in [10^{-1}, 10^{2}]$ that controls a symmetric Dirichlet distribution used to sample the proportion of neurons allocated to each layer.
When $\alpha$ is very large, the Dirichlet distribution becomes highly concentrated around uniform proportions, resulting in near constant-width configurations (all layers $\approx 32$ neurons).
Conversely, when $\alpha$ is small, the distribution allows for greater variability in the proportions, creating varying-width architectures where some layers are significantly wider or narrower than others.
To ensure architectural validity, each layer is guaranteed a minimum width of 1 neuron, with the remaining budget distributed proportionally according to the Dirichlet samples using a largest remainder method to maintain the exact total neuron count.

We repeat this experiment 20 times, sampling 20 different $\alpha$ values in each run to ensure statistical robustness.

\paragraph{Remark.}
We can notice that these favorable regimes are common in practice.
Even when most hidden layers share a near constant-width, input and output dimensions often differ significantly, giving rise to an inherently non-uniform configuration.
For instance, in the CIFAR-10 experiments (Figure~\ref{fig:cifar10_fc}), the 3072-dimensional input and 10-dimensional output create architectural irregularity despite the many intermediate layers of width 500.
However, as depth increases, the network approaches near constant-width behavior, which explains why the performance gains decrease for very deep networks.

\section{Additional Experiments for Section \ref{sec:faster}}
\label{sec:add_faster}

We provide here additional experiments complementing Section \ref{sec:faster}, covering a broader range of learning rates.

Figure \ref{fig:conv_speed_all_lrs} demonstrates that \pathcond achieves strong performance at small to moderate learning rates, consistent with our theoretical framework where proper initialization critically affects training dynamics. However, performance degrades at larger learning rates, where optimization instabilities overshadow initialization benefits and our theoretical assumptions (small learning rate regime) no longer hold.

\begin{figure}[ht]
    \centering
    \includegraphics[width=\linewidth]{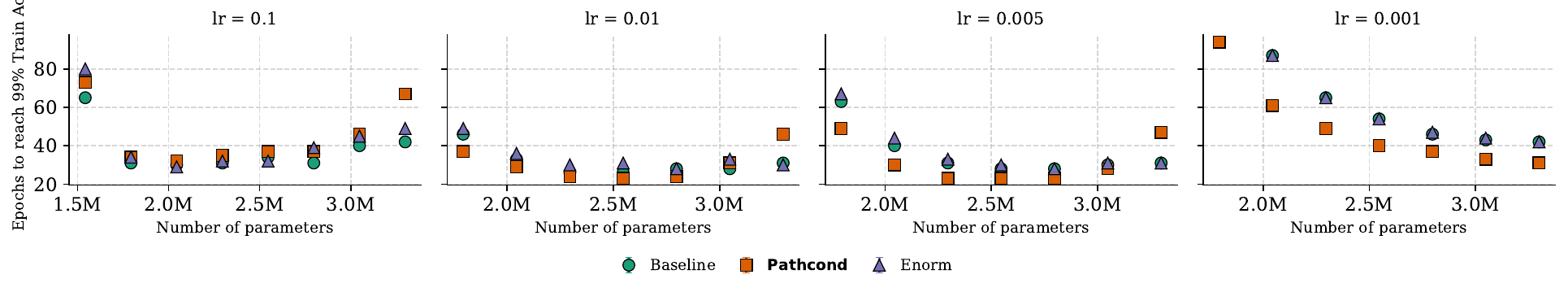}
    \caption{Convergence speed across different learning rates and network sizes.}
    \label{fig:conv_speed_all_lrs}
\end{figure}

\section{Additional Experiments for Section \ref{sec:generalize}}
\label{sec:add_generalize}

We provide complementary experiments for Section \ref{sec:generalize}, exploring the impact of learning rate on the generalization benefits of \pathcond across a wider range of values.

Figure \ref{fig:generalize_all_lrs} presents training and test dynamics for learning rates ranging from $10^{-2}$ to $10^{-4}$. Consistent with our theoretical predictions, \pathcond demonstrates improved convergence and generalization performance at small to moderate learning rates, where proper initialization plays a critical role in training stability. For the smallest learning rate ($lr=10^{-4}$), we extend training to 500 epochs to observe convergence behavior, while other configurations use 128 epochs.

The results confirm that \pathcond's advantages are most pronounced in regimes where our theoretical assumptions hold, with performance converging to baseline as learning rates increase and optimization dynamics begin to dominate over initialization effects.

\begin{figure}[htb]
    \centering
    \begin{subfigure}[b]{0.48\textwidth}
        \centering
        \includegraphics[width=\textwidth]{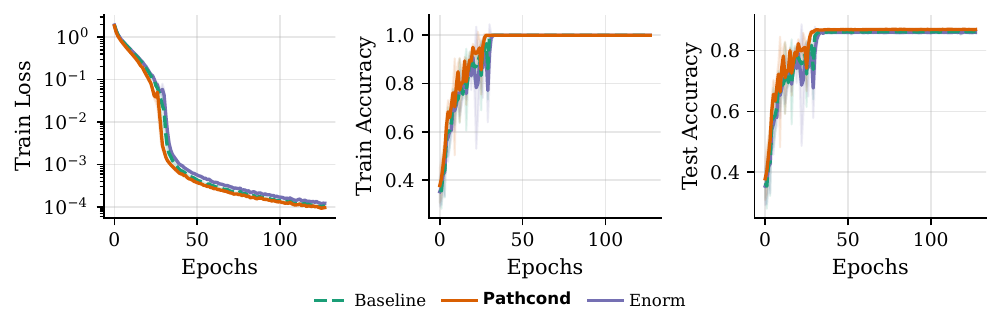}
        \caption{$\lr=0.01$}
    \end{subfigure}
    \hfill
    \begin{subfigure}[b]{0.48\textwidth}
        \centering
        \includegraphics[width=\textwidth]{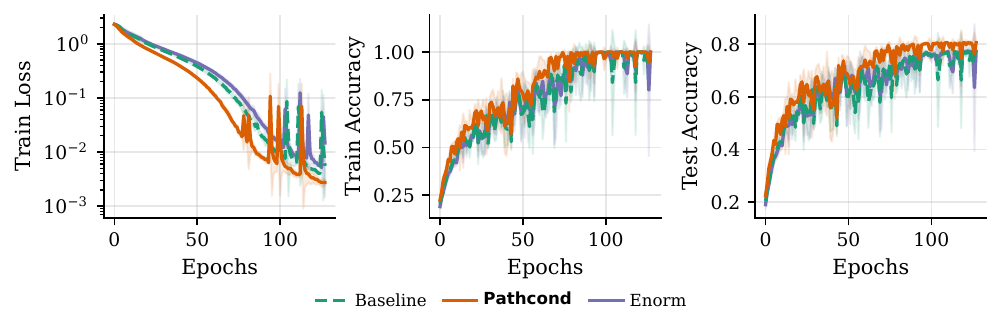}
        \caption{$\lr=0.001$}
    \end{subfigure}
    
    \vspace{0.5em}
    
    \begin{subfigure}[b]{0.48\textwidth}
        \centering
        \includegraphics[width=\textwidth]{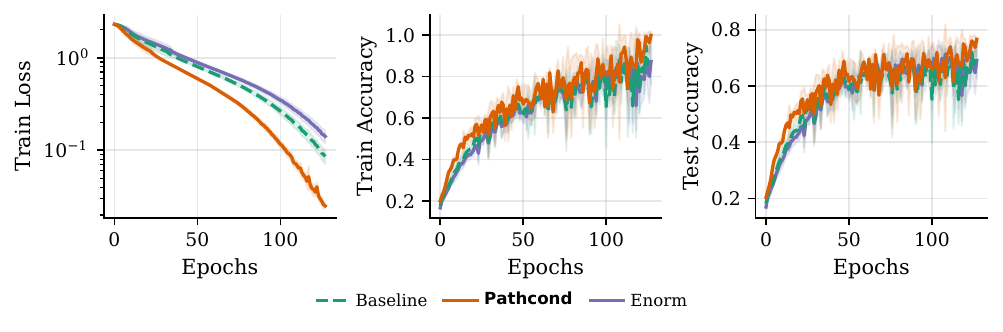}
        \caption{$\lr=0.0005$}
    \end{subfigure}
    \hfill
    \begin{subfigure}[b]{0.48\textwidth}
        \centering
        \includegraphics[width=\textwidth]{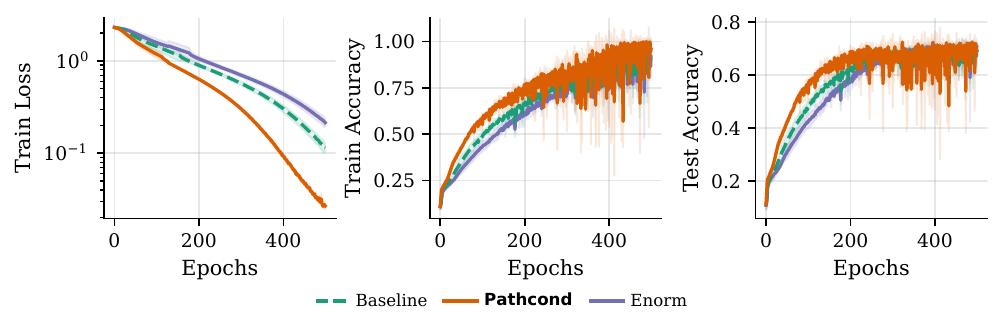}
        \caption{$\lr=0.0001$ (500 epochs)}
    \end{subfigure}
    
    \caption{Training dynamics across different learning rates.}
    \label{fig:generalize_all_lrs}
\end{figure}

\section{Additional Experiments for Section \ref{sec:regimes}}
\label{sec:add_regimes}

We provide complementary experiments for Section \ref{sec:regimes}, exploring the impact of learning rate on the relationship between varying-width architectures and required rescaling adjustments.

Figure \ref{fig:regimes_all_lrs} presents the final training loss and rescaling magnitudes across different compression factors for three learning rates. Consistent with our theoretical framework, the required rescaling magnitude increases with architectural width variation (higher compression factors) across all learning rates.

\begin{figure}[htb]
    \centering
    \begin{subfigure}[b]{0.32\textwidth}
        \centering
        \includegraphics[width=\textwidth]{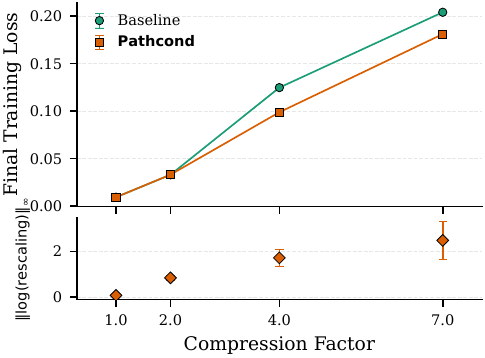}
        \caption{$\lr=0.1$}
    \end{subfigure}
    \hfill
    \begin{subfigure}[b]{0.32\textwidth}
        \centering
        \includegraphics[width=\textwidth]{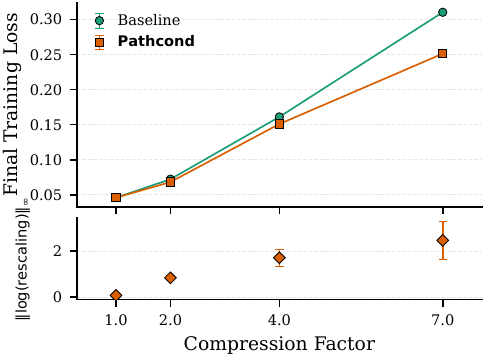}
        \caption{$\lr=0.01$}
    \end{subfigure}
    \hfill
    \begin{subfigure}[b]{0.32\textwidth}
        \centering
        \includegraphics[width=\textwidth]{fig/PC_mnist_enc_dec_Encoder_decoder/final_loss_and_rescalings_lr_0.001.pdf}
        \caption{$\lr=0.001$}
    \end{subfigure}
    
    \caption{Final training loss (top) and rescaling magnitude (bottom) across compression factors for different learning rates.}
    \label{fig:regimes_all_lrs}
\end{figure}

\section{\enorm: Experimental Setup}
\label{sec:appendix_enorm}

We describe the \enorm hyperparameter choices used in our comparisons.
\enorm \citep{stock2019equinormalizationneuralnetworks} rescales network weights at regular intervals to
minimize a weighted sum of per-layer norms,
\[
  \sum_{\ell} c_{\ell} \| W_\ell \|_p^p
\]
where the depth-dependent coefficient $c_{\ell}$ affects each layers. We use the default value $p = 2$, so \enorm minimizes the sum of
squared $\ell_2$ norms across layers. We set $c = 1$, which removes the depth penalty
and treats all layers uniformly; as noted by the original authors, values of $c$ at
or slightly above $1$ generally yield the best results.

\enorm can be applied every $k \geq 1$ SGD iterations. Following the recommendation
of the original authors, we apply one \enorm cycle after each SGD iteration ($k=1$).

For networks with BatchNorm layers, we follow the recommendation of the 
authors and exclude BatchNorm parameters from the rescaling cycle, applying \enorm
to the remaining weights only---consistently with how \pathcond handles BatchNorm.

\textbf{In summary}, all \enorm results in this paper use the default configuration:
$p=2$, $c=1$, one rescaling cycle per SGD iteration, BatchNorm layers excluded.

\section{Smooth Activation Functions}
\label{sec:gelu}

Strictly speaking, PathCond is designed for positively $1$-homogeneous activation functions, such as ReLU or absolute value, for which the rescaling \emph{exactly} preserves the input-output mapping of the network. For smooth, approximately homogeneous activations such as GELU \citep{hendrycks2016gaussian} and SiLU (or Swish) \citep{ramachandran2017searching}, this invariance no longer holds exactly. Nevertheless, since GELU and SiLU both behave asymptotically like the identity (or zero) on the positive (respectively negative) part of the real line, the rescaling remains \emph{approximately} input-output preserving at initialization. In this section, we empirically quantify this approximation error and show that, despite the loss of exact invariance, applying PathCond at initialization still accelerates training for such networks.

\textbf{Activation functions.} We recall the definitions of the two smooth activations considered here:
\begin{align}
    \mathrm{GELU}(x) &= x \, \Phi(x), \qquad \text{with } \Phi \text{ the standard Gaussian CDF}, \\
    \mathrm{SiLU}(x) &= x \, \sigma(x), \qquad \text{with } \sigma(x) = 1/(1+e^{-x}) \text{ the sigmoid}.
\end{align}
Both functions are smooth and approximate the ReLU on the positive part of the real line, with SiLU exhibiting a slightly larger negative dip than GELU. Functions are displayed on \cref{fig:gelu_silu} (\textbf{Left}).

\begin{figure}
\centering
\begin{subfigure}[t]{0.48\linewidth}
    \centering
    \includegraphics[scale=0.8]{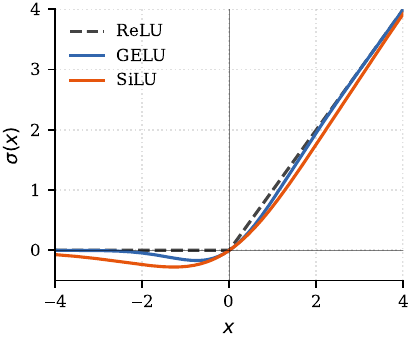}
    \label{fig:activations}
\end{subfigure}
\hfill
\begin{subfigure}[t]{0.48\linewidth}
    \centering
    \includegraphics[scale=0.8]{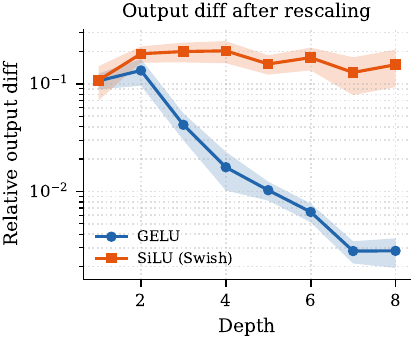}
    \label{fig:gelu_silu_diff}
\end{subfigure}
\caption{(\textbf{Left}) Smooth activation functions compared to ReLU. (\textbf{Right}) Approximation error of \pathcond rescaling for MLPs with GELU and SiLU activations. Shaded bands show $\pm 1$ standard deviation over 10 random seeds.}
\label{fig:gelu_silu}
\end{figure}

\textbf{Experimental protocol.} We use the MLP architecture described in \cref{sec:faster} (depths $L \in \{2, \dots, 9\}$ with hidden width 500), replacing ReLU with either GELU or SiLU. For each depth, we draw the network parameters $\theta$ at initialization, apply the PathCond rescaling to obtain $\tilde{\theta}$, and measure the relative output difference
\begin{equation}
    \mathcal{E}(x) = \frac{\|f_{\theta}(x) - f_{\tilde{\theta}}(x)\|}{\|f_{\theta}(x)\|},
\end{equation}
where $x$ is drawn from a standard Gaussian distribution $\mathcal{N}(0, I)$. We report $\mathcal{E}(x)$ averaged over 10 random initializations and input samples. We set $n_{\text{iter}}=10$ and constrain the primal iterate to $\|u\|_\infty \le 5$, which bounds the rescaling factors as $\lambda_h \in [e^{-5}, e^{5}]$. Without this constraint, the rescaling can produce NaNs in MLPs with GELU activations.

\textbf{Results.}
\cref{fig:gelu_silu} (\textbf{Right}) reports the relative output difference $\mathcal{E}(x)$ as a function of depth for both activations. For GELU, the relative error \emph{decreases} with depth, from approximately $10^{-1}$ at $L=2$ down to $10^{-3}$ at $L=9$. This behavior is consistent with the fact that GELU is closer to the identity than SiLU.
Despite this non-negligible approximation error, applying PathCond at initialization still yields the same qualitative training speed-up observed for ReLU networks (see \cref{fig:cifar10_fc_gelu}). This suggests that exact input-output preservation is not strictly required for the rescaling to be beneficial: what matters is that the rescaled parameters land in a well-conditioned region of parameter space, which the PathCond procedure seems to achieve regardless of the activation's exact homogeneity.

\begin{figure*}[t]
  \centering
  \includegraphics[width=\linewidth]{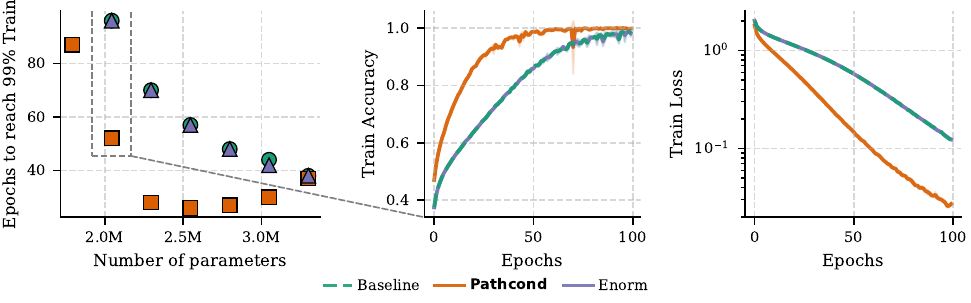}
  \caption{\label{fig:cifar10_fc_gelu}
  PathCond performance comparison across network depths on CIFAR-10 with multilayer perceptrons with GELU activation function.
(\textbf{Left}) Number of epochs required to reach $99\%$ training accuracy for networks with $2$ to $8$ hidden layers 
  (abscissa = number of parameters, which increases with depth).
  (\textbf{Middle}) Training accuracy curves for the $3$-hidden-layer network. 
  (\textbf{Right}) Corresponding training loss curves}
\end{figure*}

\end{document}